\newtheorem{theorem}{Theorem}[section]
\newtheorem{lemma}[theorem]{Lemma}
\newtheorem{problem}{Problem}
\theoremstyle{definition}
\newtheorem{definition}[theorem]{Definition}
  \pgfplotsset{compat=newest}
	\definecolor{bluecolor}{RGB}{122,166,218}
	\definecolor{redcolor}{RGB}{213, 78, 83}
	\definecolor{orangecolor}{RGB}{231, 140, 69}
	\definecolor{greencolor}{RGB}{185, 202, 74}
	\definecolor{purplecolor}{RGB}{195, 151, 216}
\theoremstyle{remark}
\newcommand{\bs}{\boldsymbol}
\newcommand{\dee}{\text{d}}
\DeclareMathOperator*{\argmin}{arg\,min}
\DeclareMathOperator*{\argmax}{arg\,max}
\DeclareMathAlphabet\mathbfcal{OMS}{cmsy}{b}{n}
\newcommand{\errorband}[5][]{ 
\pgfplotstableread{#2}\datatable
    \addplot [draw=none, stack plots=y, forget plot] table [
        x={#3},
        y expr=\thisrow{#4}-2*\thisrow{#5}
    ] {\datatable};

    \addplot [draw=none, fill=blue!50, stack plots=y, area legend, #1] table [
        x={#3},
        y expr=4*\thisrow{#5}
    ] {\datatable} \closedcycle;

    \addplot [forget plot, stack plots=y,draw=none] table [x={#3}, y expr=-(\thisrow{#4}+2*\thisrow{#5})] {\datatable};
}
\title[Bayesian NN priors for edge-preserving inversion]
{Bayesian neural network priors for edge-preserving inversion}
\subjclass{Primary: 62F15, 
68T07, 	
60E07;  
Secondary: 65C05  	
}
 \keywords{Bayesian priors, Bayesian neural networks, inverse
   problems, $\alpha$-stable distribution, deblurring}
 \email{lichen@cims.nyu.edu}
 \email{matt.dunlop@nyu.edu}
 \email{stadler@cims.nyu.edu}
\thanks{Partially supported by the US National Science Foundation under
  grants \#1723211 and \#1913129.}
\begin{document}
\maketitle

\centerline{\scshape Chen Li, Matthew Dunlop, and Georg Stadler}
\medskip
{\footnotesize
 \centerline{Courant Institute of Mathematical Sciences, New York University}
   \centerline{251 Mercer Street, New York, NY 10012, USA}
}

\bigskip

\begin{abstract}
We consider Bayesian inverse problems wherein the unknown state is
assumed to be a function with discontinuous structure a priori. A
class of prior distributions based on the output of neural networks
with heavy-tailed weights is introduced, motivated by existing results
concerning the infinite-width limit of such networks. We show
theoretically that samples from such priors have desirable
discontinuous-like properties even when the network width is finite,
making them appropriate for edge-preserving inversion. Numerically we
consider deconvolution problems defined on one- and two-dimensional
spatial domains to illustrate the effectiveness of these priors; MAP
estimation, dimension-robust MCMC sampling and ensemble-based
approximations are utilized to probe the posterior distribution. The
accuracy of point estimates is shown to exceed those obtained from
non-heavy tailed priors, and uncertainty estimates are shown to
provide more useful qualitative information.

\end{abstract}

\section{Introduction}\label{sec:intro}

Inverse problems aim at inferring information about unknown parameters $u \in \mathcal U$ from observations $y \in \mathcal Y$ and a model
that relates parameters and observations. Denoting the model by
$\mathcal F : \mathcal U \to \mathcal Y$ and assuming additive
observation errors $\varepsilon$, the relation between parameters,
model and observations is
\begin{align}\label{eq:invprob}
  y = \mathcal F (u) + \varepsilon.
\end{align}
In the Bayesian approach to inverse problems, one considers $u,y$ and
$\varepsilon$ as random variables. In particular, one requires a
distribution $p(u)$ that reflects prior knowledge about the parameters $u$. The solution to this Bayesian inverse problem is then the posterior
distribution $p(u|y)$ of the parameters, defined via Bayes' rule as
\begin{align}\label{eq:Bayesrule}
	p (u|y) \propto  p (y | u) p(u),
\end{align}
where `$\propto$' means equality up to a normalization constant, and
$p(y|u)$ is the likelihood derived from the forward model
\eqref{eq:invprob} and the distribution of the errors $\varepsilon$. The Bayesian approach to inverse problems has a long history
\cite{franklin1970, Kaipio2005}. While many important questions around
the characterization of the posterior distribution remain, the main
focus of this paper is on the choice of prior distributions for $u\in
\mathcal U$. In particular, we are interested in problems where
$\mathcal U$ is a space of functions defined over a domain $\mathcal
D\subset \mathbb R^d$, with $d\in \{1,2,3\}$, and we have the prior
knowledge that the true parameter might be a discontinuous function
over $\mathcal D$. Samples from the prior distribution $p(u)$ should
thus include discontinuous (or approximately discontinuous)
functions. This is known to be a challenging problem, and the aim of
this paper is to propose a novel method based on Bayesian neural network
parameterizations to define $p(u)$.

\paragraph{\em Approach}

We study neural network priors with weights drawn from $\alpha$-stable
distributions as proposed in \cite{neal1995BayesianLF}. In Figure~\ref{fig:2dnnprior}, we compare samples generated with
such neural network priors with Cauchy and Gaussian weights, which are
$\alpha$-stable random variables with $\alpha = 1, 2$. As can be seen, neural networks with Cauchy weights generate
samples that have large jumps, compared to those with Gaussian weights. This behavior is due to the
heavy tails of Cauchy distributions, resulting in strongly re-scaled
neural network activation functions that visually appear as
discontinuities in the network outputs.
The prior we propose builds on this observation. To study its properties, we first summarize known
theoretical results on the convergence of neural networks with
$\alpha$-stable weights in the limit of infinite width. Then, for
finite-width networks, we present results on the distribution of
pointwise derivatives of the output and use them to explain
the behavior shown in Figure~\ref{fig:2dnnprior}.  We also discuss
practical methods to condition these neural network priors with
observations using optimization as well as sampling methods.

\setlength{\fboxsep}{-.2pt}
\setlength{\fboxrule}{.5pt}
\begin{figure}[tb]\centering
\begin{tikzpicture}
\node at (-1,2cm) (img1) {\fbox{\includegraphics[scale=0.31]{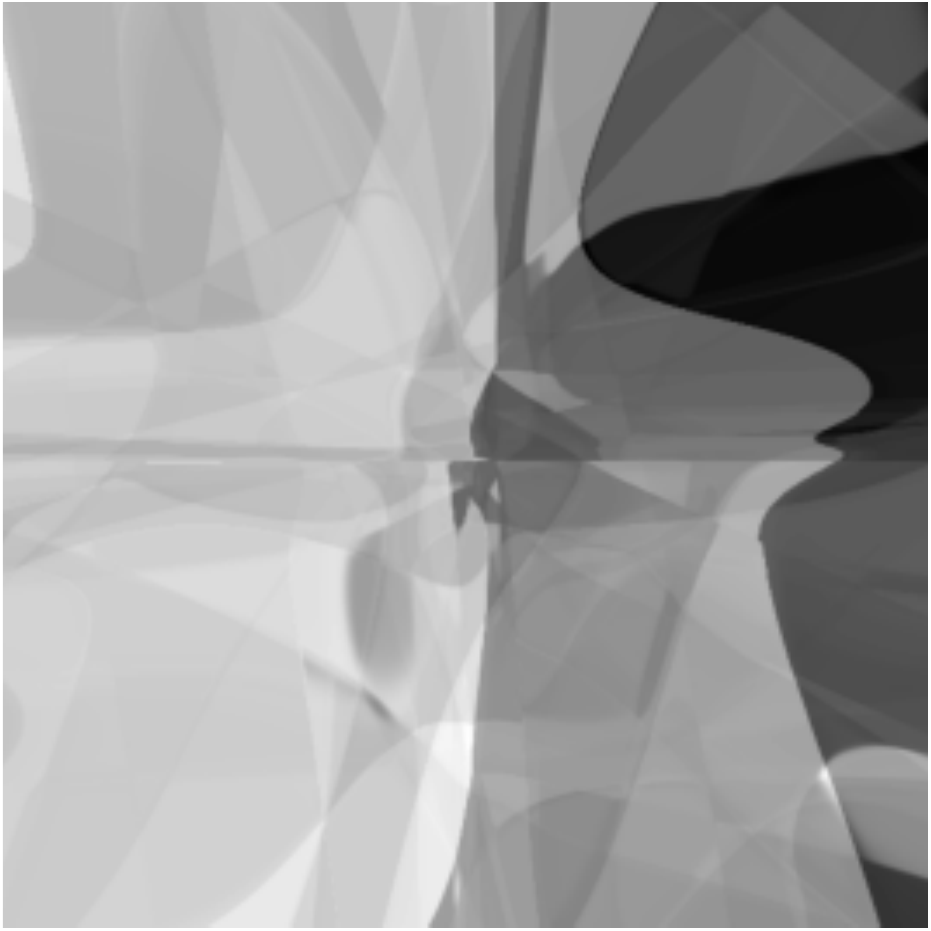}}};
  \node[left=.2cm of img1, node distance=0cm, rotate=90, anchor=center, yshift = 0cm, font=\color{black}] {Cauchy};
 \node at (2.1,2cm) {\fbox{\includegraphics[scale=0.31]{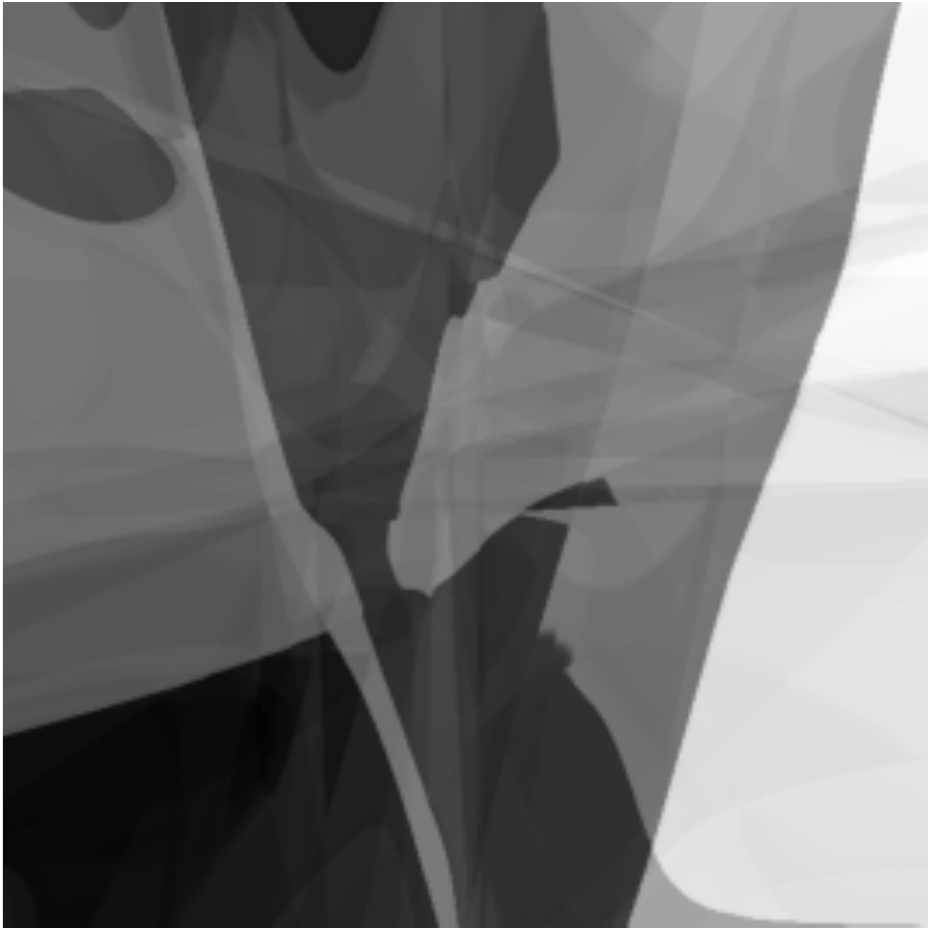}}};
 \node at (5.2,2cm) {\fbox{\includegraphics[scale=0.31]{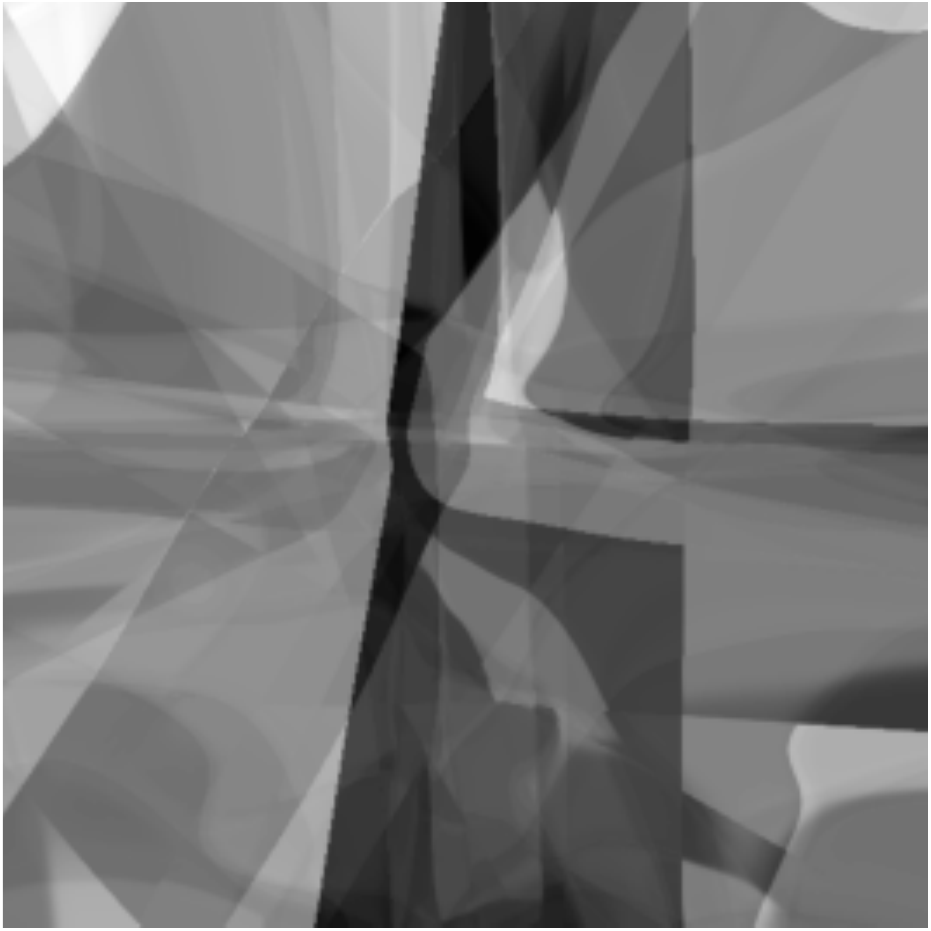}}};
\node at (8.3,2cm) {\fbox{\includegraphics[scale=0.31]{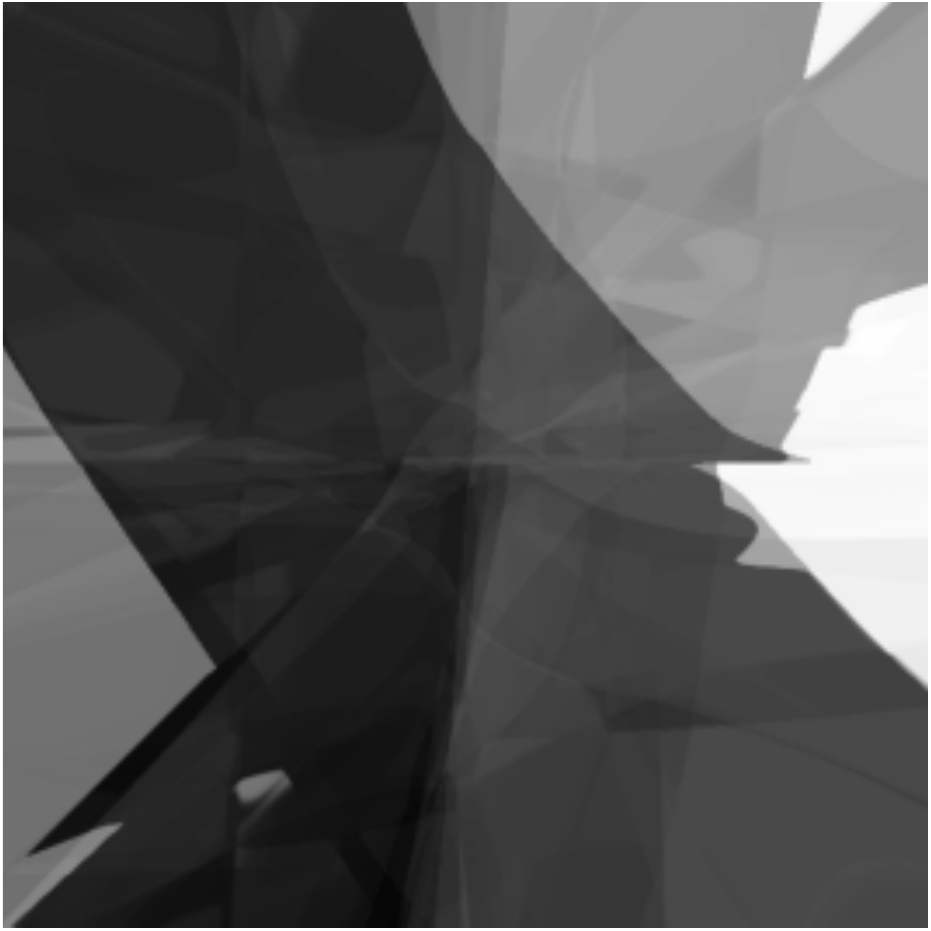}}};

\node at (-1,-1.5cm) (img2) {\fbox{\includegraphics[ scale=0.31]{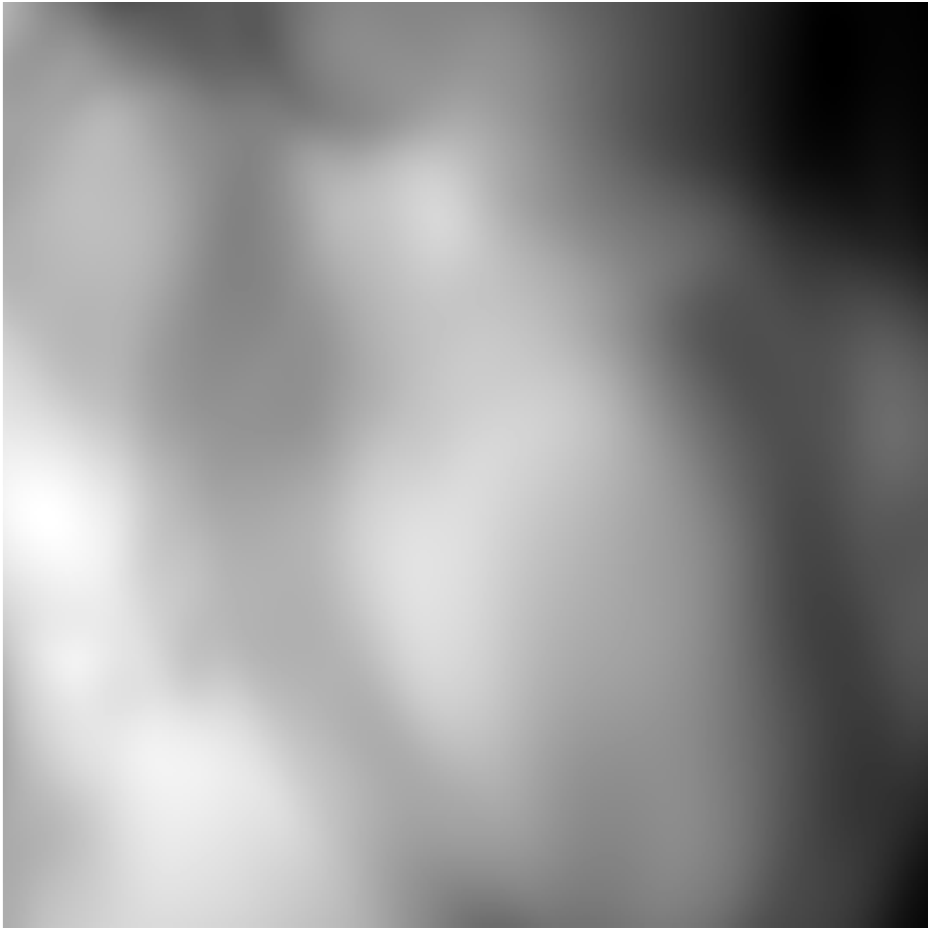}}};
  \node[left=.2cm of img2, node distance=0cm, rotate=90, anchor=center, yshift = 0cm, font=\color{black}] {Gaussian};
 \node at (2.1,-1.5cm) {\fbox{\includegraphics[scale=0.31]{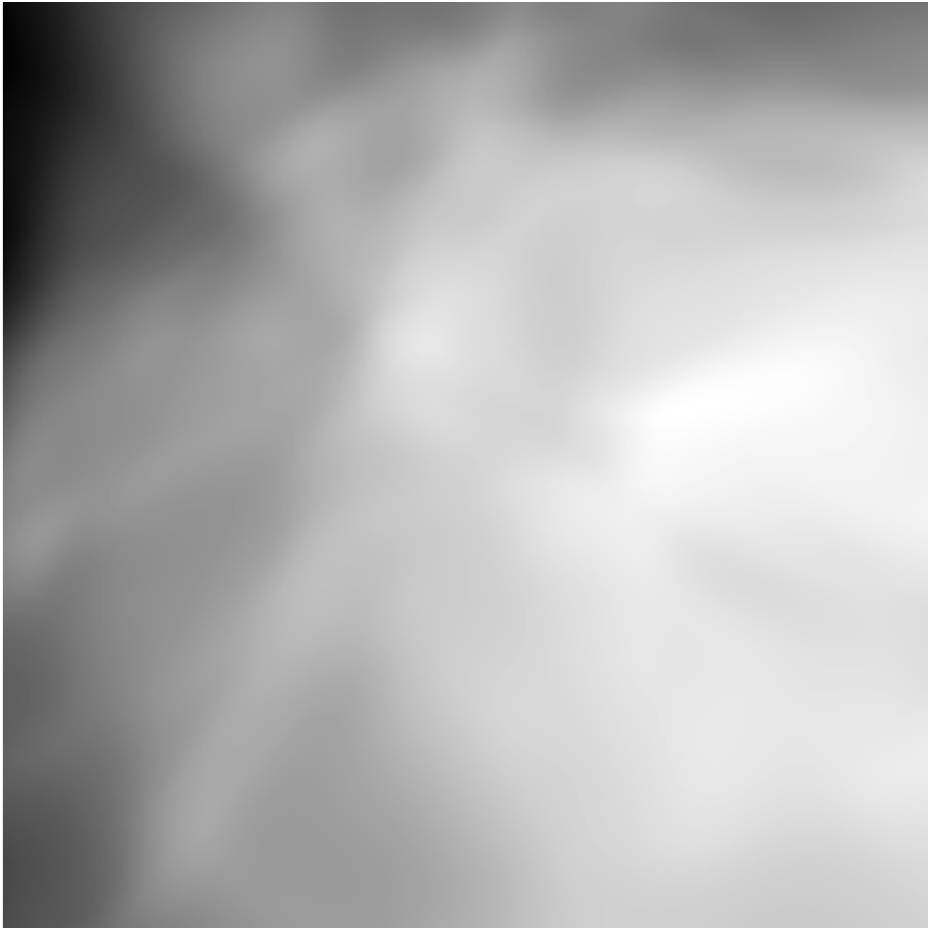}}};
 \node at (5.2,-1.5cm) {\fbox{\includegraphics[scale=0.31]{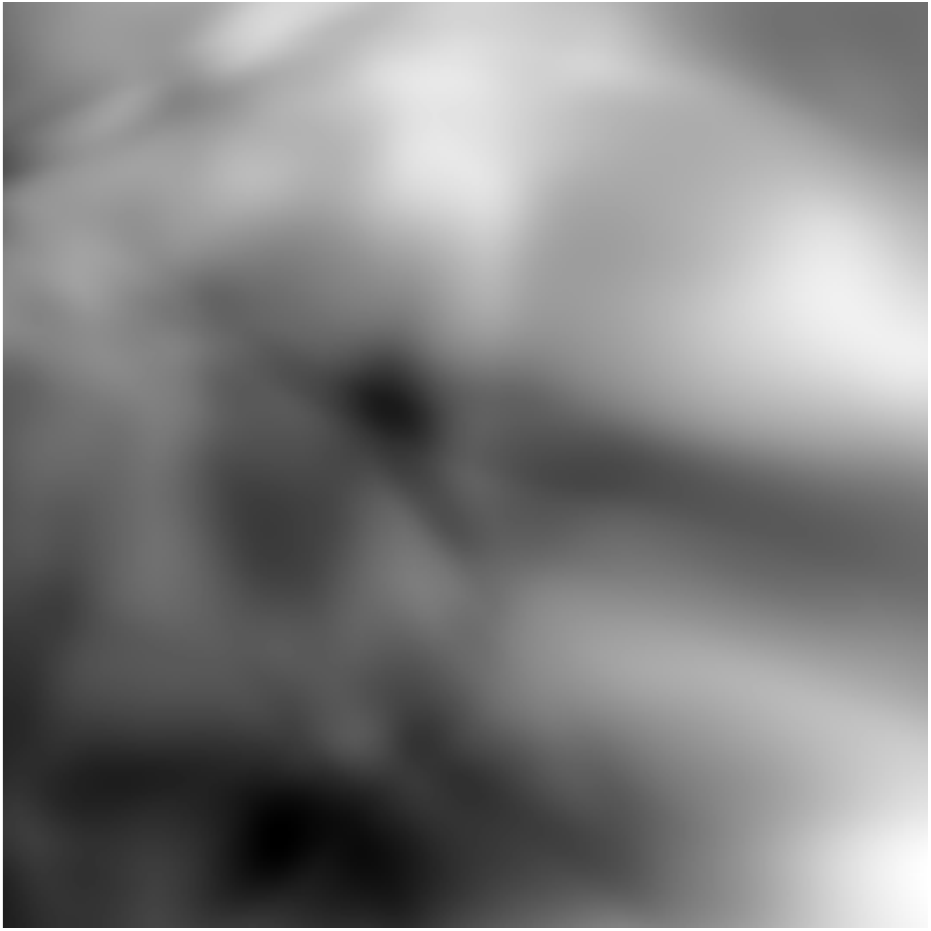}}};
\node at (8.3,-1.5cm) {\fbox{\includegraphics[scale=0.31]{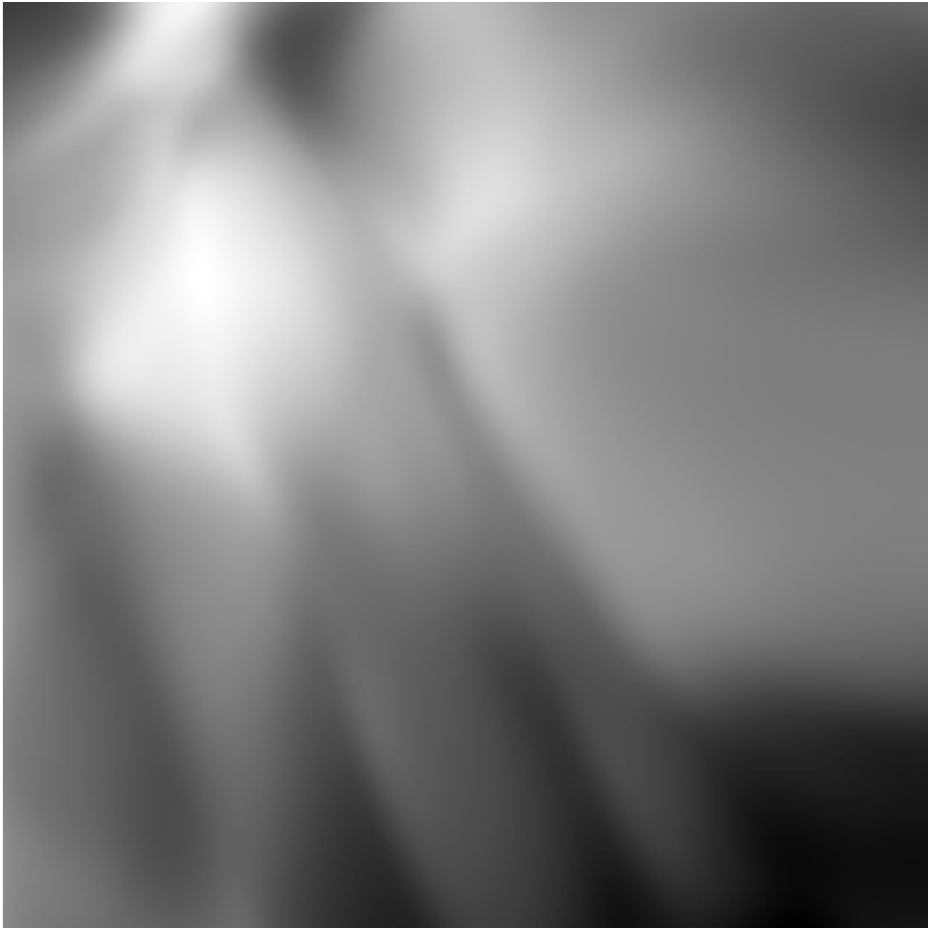}}};
\end{tikzpicture}
\caption{Comparison between outputs on $[-1,1]^2$ of Bayesian neural
  network priors with three hidden layers. Shown are realizations of
  networks with $\tanh(\cdot)$ as activation function and Cauchy
  weights (top) and Gaussian weights (bottom).\label{fig:2dnnprior}}
\end{figure}

\paragraph{\em Contributions}
The main contribution of this work are as follows.
(1) For finite-width networks, we prove that the distribution of the
derivative of the output function at an arbitrary but fixed point is heavy-tailed
provided the weights in the network are heavy-tailed. This explains
the large jumps in the output function generated by certain neural
networks.
(2) We discuss practical methods to condition these neural network
priors using observations, and present a comprehensive numerical study
using image deconvolution problems in one and two space dimensions.

\paragraph{\em Paper outline}

The outline of this work is as follows. In Sec.~\ref{sec:priormodel},
we review existing priors for Bayesian inverse problems and introduce
neural network priors with general $\alpha$-stable distributed
weights. We provide a systematic discussion of the properties of
neural network priors both in the cases of infinite width and finite
width. In Sec.~\ref{sec:BNN}, we review the Bayesian approach for
inverse problems and discuss optimization and sampling methods to
explore the posterior distribution. In
Sec.~\ref{sec:numerical}, we study the numerical performance of
different neural network priors using two deblurring examples with
discontinuous parameter functions. Finally, we draw conclusions and
point out potential research directions in Sec.~\ref{sec:conclu}.

\section{Prior modeling with neural networks}\label{sec:priormodel}
Here, we review priors for infinite-dimensional Bayesian inverse
problems and study properties of neural network priors. We provide an
overview of existing priors for Bayesian inverse problems in
Sec.~\ref{subsec:litereview}. In Sec.~\ref{subsec:nnprior}, we propose
functions parameterized using neural networks with random weights as
priors. Numerical tests motivate using neural networks with weights
drawn from $\alpha$-stable distributions as priors for discontinuous
parameter functions.  We introduce $\alpha$-stable distributions in
Sec.~\ref{subsec:alpha}. In Sec.~\ref{subsec:outputinfinite}, we
provide a review of the limiting properties of Bayesian neural
networks with Gaussian and $\alpha$-stable ($0 < \alpha < 2$) weights
as the width of network layers goes to infinity. We present our main
results on neural network priors with finite width in
Sec.~\ref{subsec:outputfinite}.

\subsection{Related literature}\label{subsec:litereview}

We consider priors for unknown parameter functions $u$ defined over a
domain $\mathcal D$. One approach to infinite-dimensional Bayesian
problems is to first discretize all functions and use a
finite-dimensional perspective \cite{cotter2010,kaipio2007}. Methods building on this approach may suffer from a dependence on the
discretization, i.e., their performance may degrade as the
discretization is refined. Alternatively, one can analyze and develop
methods for Bayesian inverse problems in function space and then
discretize them \cite{stuart_2010}. This approach typically avoids
mesh dependence but is theoretically more challenging.
We follow the latter approach, i.e., construct priors in function
space.

Priors encode a priori available information about the unknown
function $u$. A common choice are Gaussian priors, for which a vast
number of theoretical and computational results are available
\cite{stuart_2010, RasWill2005gaussian, BuiGeorg2013}. A sample $u$ from a Gaussian prior can be constructed using, e.g., the
Karhunen-Lo\`eve expansion \cite{RasWill2005gaussian}. Let $\{\varphi_j, \rho_j^2 \}_{j=1}^{\infty}$ be a
set of orthonormal eigenfunctions and eigenvalues of a positive
definite, trace-class covariance operator $\mathcal C_0:\mathcal
U\to\mathcal U$, and let $\{\xi_j\}_{j=1}^{\infty}$ be an
i.i.d.\ sequence $\xi_j \sim \mathcal N (0, 1)$. Then
\begin{align}\label{eq:kl_expansion}
	u := m_0 + \sum_{j =1}^{\infty} \rho_j \xi_j \varphi_j
\end{align}
is distributed according to $\mathcal N (m_0, \mathcal C_0)$, where
$m_0 \in \mathcal U$ is the mean.
Examples of covariance operators are Mat\'ern covariance operators and
fractional inverse powers of elliptic differential operators.  One can
show that samples from these Gaussian priors are almost surely
continuous \cite{stuart_2010}. However, this might not always be
desirable.  For example, in many practical problems, the unknown
function $u$ might have discontinuities. Thus, a prior
that allows for discontinuous samples might be preferable to a
Gaussian prior.

Several priors have been proposed to overcome the limitations of
Gaussian priors. For example, total variation priors based on the
total variation regularization \cite{Lassas_2004, gonzalez2017,
  Chambolle10anintroduction} can be defined for a fixed
discretization. However, these priors converge to Gaussian fields as
the mesh is refined, which is referred to as lack of discretization
invariance in \cite{Lassas_2004}. That is, total variation priors
depend on the discretization, which is undesirable from an
infinite-dimensional perspective. Non-Gaussian priors can also be based on generalizations of
the expansion \eqref{eq:kl_expansion} with non-Gaussian coefficients.
For instance, when
$\{\varphi_j\}_{j=1}^\infty$ is a wavelet basis of $\mathcal U$, Besov priors are obtained if
$\{ \xi_j \}_{j =1}^{\infty}$ are i.i.d.\ random variables with
probability density function $p_{\xi}(z) \propto \exp \left(− \frac 1
2 |z|^q \right)$ \cite{dashti2011besov}. This results in a
non-Gaussian discretization-invariant prior that is able to produce
discontinuous samples. However, choosing a proper wavelet basis for Besov priors is
challenging and the location of discontinuities depends on this
basis \cite{matti2009samuli,dashti2011besov}.

Due to properties related to their heavy tails,
$\alpha$-stable processes have recently been proposed as priors \cite{chada2019posterior,
  sullivan2017}. These can be realized by choosing for $\{ \xi_j \}_{j
  =1}^{\infty}$ in \eqref{eq:kl_expansion} i.i.d.\ $\alpha$-stable
distributions. The processes are well-defined in function space
\cite{sullivan2017} and due to their heavy tails, they can take on
extreme values and thus incorporate large jumps. Variations of $\alpha$-stable processes were proposed, e.g., the
Cauchy difference prior \cite{markkanen2016cauchy} and the Cauchy Markov
random field \cite{chada2021cauchy}. Such priors are difficult to sample in high dimensional problems \cite{markkanen2016cauchy}.

Priors based on neural networks have been considered
\cite{neal1995BayesianLF, Asim2020}. One ongoing research direction in
constructing neural network priors are generative methods, which can
tailor priors to a specific application and are then used in
inference problems \cite{Asim2020, ardizzone2019analyzing}. An alternative approach is constructing Bayesian neural networks,
i.e., to use neural networks to parameterize functions
\cite{neal1995BayesianLF} and assume that the weights in the neural network
are random variables. Here, we follow this latter approach and thus introduce Bayesian neural networks next.

\subsection{Neural network priors}\label{subsec:nnprior}

Motivated by their relation to Gaussian processes under certain
conditions \cite{neal1995BayesianLF, matthews2018gaussian}, neural
networks can be used to construct priors in function space. These
novel priors are defined as parameterizations given by a neural
network with random weights drawn from specific distributions.  The
network is defined on a spatial domain $\mathcal 
D \subseteq \mathbb R^{D_0}$, where $D_0=d$ is the dimension of the
input space. Let $\Psi(\bs w): \mathcal D \to \mathbb R$ denote the
neural network, where $\bs w$ summarizes all weights in the neural
network. The network (we only consider general fully connected
networks) consists of $L$ hidden layers and its
structure is
\begin{subequations}\label{eq:neural_nets}
\begin{align}
	\bs h^{(0)}(\bs x) &= \bs x, \\
	\bs h^{(l + 1)}(\bs x) &= \phi \left(\bs b^{(l)} + \bs V^{(l)} \bs h^{(l)}(\bs x) \right), \quad l = 0, \ldots, L-1, \\
	u (\bs x) &= \bs V^{(L)} \bs h^{(L)}(\bs x),
\end{align}
\end{subequations}
where $\bs h^{(l)}(\bs x)$ is the $l$-th hidden layer vector with
width $D_l$. The matrices $\bs V^{(l)} \in \mathbb R^{D_{l + 1}
  \times D_l}$, for $0\le l<L$, and $\bs V^{(L)} \in \mathbb R^{1 \times D_L}$ are the
weights of the $l$-th hidden layer and the output layer, respectively,
and $\bs b^{(l)} \in \mathbb R^{D_{l + 1} \times 1}$ is the bias of
the $l$-th hidden layer. We denote the output of the neural network as $u(\bs x)$ for
input $\bs x$. Note that, in general, $u$ can be vector-valued, but
for simplicity we consider only scalar-valued $u$ in this paper. The
function $\phi: \mathbb R \to \mathbb R$ denotes a nonlinear
activation function in the neural network, which acts on vectors
component-wise. We will mainly use the hyperbolic tangent
function $\tanh(\cdot)$ as activation in this paper. Since this
is a smooth function, the input-to-output mapping of
the neural network is smooth too. In particular, while samples
generated by these networks appear to be discontinuous, they are
continuous but have very sharp jumps that visually appear to be
discontinuities.

We consider priors parameterized using \eqref{eq:neural_nets} with
weights $\bs w$ drawn from certain distributions. Such neural
network priors are flexible; one can change the distributions on the
weights or the number and widths of the hidden layers to obtain
different priors. One choice of weight distribution
is Gaussian \cite{neal1995BayesianLF, Williams96computing}, but non-Gaussian
weights have also been studied \cite{neal1995BayesianLF,
  weiss06platt}. In this paper, we consider neural network priors with
weights drawn from $\alpha$-stable distributions ($0 < \alpha \leq 2$),
in particular, Cauchy and Gaussian distributions (i.e., $\alpha = 1$
or $\alpha=2$).

To illustrate the difference between neural network priors with
Cauchy and Gaussian weights, we show realizations from both neural
networks in one dimension in
Figure~\ref{fig:1dnnprior}. Here, we use the neural network structure
in \eqref{eq:neural_nets} with three hidden layers of widths $[80, 80,
  100]$. The distributions assigned to the weights are the standard
Cauchy and Gaussian distributions. We observe that the
realizations of the neural network prior with Cauchy weights
have large jumps while realizations with Gaussian weights tend to be
smooth. This is the same behavior as observed in two dimensions in Figure~\ref{fig:2dnnprior}.

\begin{figure}[tb]\centering
  \begin{tikzpicture}
    \begin{scope}[xshift=0cm]
    \begin{axis}[width=.54\columnwidth,xmin=-1,xmax=1,ymax=5.5, compat=1.13, legend pos=south west, legend style={nodes={scale=.85, transform shape}}, xlabel= Cauchy NN]
        \addplot[color=green!70!white,mark=none,thick] table [x=x,y=s2]{plot_paper/prior/1dpriorcauchy.txt};
        \addplot[color=violet!70!white,mark=none,thick] table [x=x,y=s3]{plot_paper/prior/1dpriorcauchy.txt};
        \addplot[color=red!70!white,mark=none,thick] table [x=x,y=s4]{plot_paper/prior/1dpriorcauchy.txt};
        \addplot[color=orange!70!white,mark=none,thick] table [x=x,y=s5]{plot_paper/prior/1dpriorcauchy.txt};
        \addplot[color=purple!70!white,mark=none,thick] table [x=x,y=s7]{plot_paper/prior/1dpriorcauchy.txt};
    \end{axis}
    \node[color=black] at (0.4,4.05cm) {a)};
	\end{scope}
	\begin{scope}[xshift=6.5cm]
    \begin{axis}[width=.54\columnwidth,xmin=-1,xmax=1,compat=1.13, legend pos=south west, legend style={nodes={scale=.85, transform shape}}, xlabel= Gaussian NN]
        \addplot[color=green!70!white,mark=none,thick] table [x=x,y=s2]{plot_paper/prior/1dpriorgaussian.txt};
        \addplot[color=violet!70!white,mark=none,thick] table [x=x,y=s3]{plot_paper/prior/1dpriorgaussian.txt};
        \addplot[color=red!70!white,mark=none,thick] table [x=x,y=s4]{plot_paper/prior/1dpriorgaussian.txt};
        \addplot[color=orange!70!white,mark=none,thick] table [x=x,y=s5]{plot_paper/prior/1dpriorgaussian.txt};
        \addplot[color=purple!70!white,mark=none,thick] table [x=x,y=s7]{plot_paper/prior/1dpriorgaussian.txt};
    \end{axis}
    \node[color=black] at (0.4,4.05cm) {b)};
	\end{scope}
\end{tikzpicture}
\caption{Realizations of the neural network with two different weight
  distributions on the interval $[-1, 1]$. Shown are realizations with
  Cauchy weights (a) and Gaussian weights (b).\label{fig:1dnnprior}}
\end{figure}
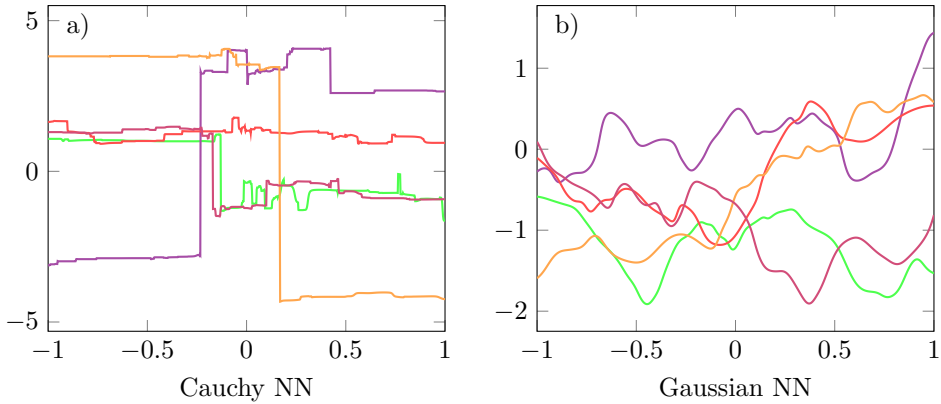

Note that realizations of neural network priors tend to exhibit
larger variations closer to the origin, i.e., the resulting processes
are non-stationary. To illustrate why this is the case, we consider
a simplified neural network with a one-dimensional input, one hidden
layer, and $V^{(0)}\in \mathbb R^{D_1}$ is a vector of all ones:
\begin{align*}
	u(x) = \sum_{i=1}^{D_1} V^{(1)}_i H \left( x - b^{(0)}_i \right),
\end{align*}
where the activation function $H$ is taken to be the Heaviside
function for illustration purposes. For fixed biases $b_i^{(0)}$,
the network output is a random walk with jumps occurring
at each $b_i^{(0)}$ with magnitude $V_i^{(1)}$. Thus, if the biases $b_i^{0}$ are
concentrated around zero (as in  a Cauchy or Gaussian
distribution), more jumps occur near the origin. If one wishes to
reduce this non-stationarity, $b_i^{(0)}$ could be taken to
be uniformly distributed in some zero-centered interval. However, this
will not guarantee stationarity for more general networks.

\subsection{$\alpha$-stable distributions}\label{subsec:alpha}

In the previous section, we reviewed neural networks and
how to consider their outputs as priors for Bayesian inverse problems. Before
moving to the theoretical discussion of neural network priors with
$\alpha$-stable weights ($0 < \alpha \leq 2$), we review scalar
$\alpha$-stable distributions.

First introduced by L\'evy, $\alpha$-stable distributions are a class 
of heavy-tailed distributions, which have a large probability of attaining 
extreme values. Given scalars $\alpha \in (0,2]$ (the stability parameter) and 
$\gamma > 0$ (the scaling parameter), the $\alpha$-stable distribution 
$\text{St}(\alpha, \gamma)$ may be defined by its characteristic function
\begin{align*}
	\Phi(t) = \mathbb E \left[\exp\left(it\text{St}(\alpha, \gamma)\right) \right] = e^{-\gamma^{\alpha} |t|^{\alpha}}.
\end{align*}
Additional parameters representing skew and location may also be introduced, 
though we consider these fixed at zero in this article. The absolute moment of an 
$\alpha$-stable random variable $\text{St}(\alpha, \gamma)$, i.e., 
$\mathbb E \left[|\text{St}(\alpha,\gamma)|^{\beta} \right]$, is finite when 
$\beta \in [0, \alpha)$  and infinite when $\beta \in [\alpha, \infty)$.
A key property of $\alpha$-stable distributions is that they are closed under 
independent linear combinations \cite{Borak2005} leading to generalizations of central limit theorems beyond the Gaussian regime \cite{gnedenko1954limit}.

Note that $\text{St}(2, \gamma)$ coincides with
$\mathcal N(0, 2\gamma^2)$, i.e., a mean-zero Gaussian random
variable. Another special case of an $\alpha$-stable distribution is
the Cauchy distribution for $\alpha = 1$. It has the probability density
function
\begin{align*} 
	f(x; \gamma) = \frac{1}{\pi \gamma} \left( \frac{\gamma^2}{x^2 + \gamma^2}\right),
\end{align*}
where the scaling parameter $\gamma$ specifies the half-width at
half-maximum (HWHM). In general, for $\alpha \neq 1,2$, the density of $\text{St}(\alpha,\gamma)$ 
may not be expressed analytically.

As discussed above, we construct different neural network priors by
using different $\alpha$-stable distributions ($0 < \alpha \leq 2$) on
the weights. Gaussian and Cauchy neural network priors are obtained
for $\alpha = 1, 2$, respectively. It is possible to define neural
network priors with weights drawn from general $\alpha$-stable
distributions, but in practice this is complicated by the lack
of analytic forms. In the following, we study theoretical results
for general $\alpha$-stable distributions but only show numerical
experiments from neural network priors with Gaussian and Cauchy
weights.

\subsection{Output properties of infinite-width neural network priors}\label{subsec:outputinfinite}
We have seen that realizations of neural networks with Cauchy weights
have jumps, while realizations of
networks with Gaussian weights tend to be smooth; see
Figures~\ref{fig:2dnnprior} and \ref{fig:1dnnprior}. In this section,
we review the convergence of neural networks with different weight
distributions in the limit of infinite width \cite{neal1995BayesianLF,
  matthews2018gaussian}. The results are stated for Gaussian ($\alpha
= 2$) and $\alpha$-stable distributions ($0 < \alpha < 2$).

\subsubsection{Gaussian weights}
Here we summarize theoretical results for Gaussian neural networks as
the widths of the hidden layers approach infinity. We start with the
case of one hidden layer, i.e., \eqref{eq:neural_nets} has the form
\begin{subequations}\label{eq:neural_net_1hidden}
\begin{alignat}{2}
	h^{(1)}_i(\bs x) &= \phi \left(b^{(0)}_i + \sum_{j=1}^{D_0} V^{(0)}_{ij} x_j \right), \\
	 u_{D_1}(\bs x) &= \frac{1}{\sqrt{D_1} } \sum_{i=1}^{D_1} V^{(1)}_{i} h^{(1)}_i(\bs x),
\end{alignat}
\end{subequations}
where $V_{ij}, b_{i}, h_i$ are the components of $\bs V, \bs b, \bs
h$, respectively. We denote the $j$-th component of $\bs x$ as
$x_j$. The normalization factor $\frac{1}{\sqrt{D_1}}$ could
also be absorbed into the variance of the Gaussian distribution.
Then, for Gaussian weights, the output converges to a Gaussian distribution
when the width of the hidden layer goes to
infinity, as summarized in the following theorem from
\cite{neal1995BayesianLF}.

\begin{theorem}\label{thm:gaussian_convergence}
  Consider the neural network \eqref{eq:neural_net_1hidden} with all
  weights following Gaussian distributions, i.e., $V_{ij}^{(0)},
  V_{i}^{(1)} \stackrel{iid}{\sim} \mathcal N(0, \sigma^2_V)$,
  $b_i^{(0)}\stackrel{iid}{\sim} \mathcal N(0, \sigma^2_b)$.
  Assume that the activation function $\phi(\cdot)$ is bounded and 
  fix the input $\bs x$. Then, as the width $D_1 \to \infty$, the
  output $u_{D_1}(\bs x)$ converges to a Gaussian distribution
  $\mathcal N(0, \sigma^2_O (\bs x))$, with
  \begin{align}
    \sigma^2_O (\bs x) = \sigma_V^2 \mathbb E \left[ \left( h_1^{(1)}(\bs x) \right)^2 \right].
  \end{align}
\end{theorem}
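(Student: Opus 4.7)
The plan is to reduce the statement to a direct application of the classical Lindeberg--L\'evy central limit theorem. With the input $\bs x$ fixed, I would write
\[
	u_{D_1}(\bs x) = \frac{1}{\sqrt{D_1}}\sum_{i=1}^{D_1} Z_i, \qquad Z_i := V_i^{(1)} h_i^{(1)}(\bs x),
\]
and aim to show that $(Z_i)_{i=1}^{D_1}$ is an i.i.d.\ sequence of mean-zero, finite-variance random variables with variance equal to $\sigma_V^2\,\mathbb{E}[(h_1^{(1)}(\bs x))^2]$. The convergence in distribution to $\mathcal N(0,\sigma_O^2(\bs x))$ then follows from the CLT.

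For the i.i.d.\ structure, I would note that for each $i$, the hidden unit $h_i^{(1)}(\bs x)$ depends only on $b_i^{(0)}$ and on the $i$-th row $(V_{ij}^{(0)})_{j=1}^{D_0}$, which by assumption are mutually independent across $i$ and identically distributed. Independence of the output weights $V_i^{(1)}$ from the first-layer weights then gives that the products $Z_i$ are i.i.d. For the mean-zero property I would use independence of $V_i^{(1)}$ from $h_i^{(1)}(\bs x)$ together with $\mathbb{E}[V_i^{(1)}]=0$, giving $\mathbb{E}[Z_i]=\mathbb{E}[V_i^{(1)}]\,\mathbb{E}[h_i^{(1)}(\bs x)]=0$. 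For the variance,
\[
	\mathrm{Var}(Z_i) = \mathbb{E}\!\left[(V_i^{(1)})^2\right]\mathbb{E}\!\left[(h_i^{(1)}(\bs x))^2\right] = \sigma_V^2\,\mathbb{E}\!\left[(h_1^{(1)}(\bs x))^2\right],
\]
which is the stated $\sigma_O^2(\bs x)$.

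The only nontrivial hypothesis being used is that $\phi$ is bounded: this is what guarantees $\mathbb{E}[(h_i^{(1)}(\bs x))^2]<\infty$, so that $Z_i$ has finite second moment and the CLT applies (without boundedness, one would need a separate moment assumption on $\phi$ composed with a Gaussian affine form). Once finite variance is in hand, Lindeberg--L\'evy gives
\[
	\frac{1}{\sqrt{D_1}}\sum_{i=1}^{D_1} Z_i \;\xrightarrow{d}\; \mathcal N\!\left(0,\,\sigma_V^2\,\mathbb{E}\!\left[(h_1^{(1)}(\bs x))^2\right]\right)
\]
as $D_1\to\infty$, which is exactly the claim.

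I do not expect any serious technical obstacle in the single-input, fixed-$\bs x$ statement as written; the mildly subtle point is bookkeeping the independence structure carefully so that the summands are genuinely i.i.d.\ (hence the need to use a single row of $\bs V^{(0)}$ per neuron and the independence of $\bs V^{(1)}$ from $\bs V^{(0)},\bs b^{(0)}$). The more delicate extension, which the statement does not require here but is the natural follow-up, is to joint convergence at finitely many inputs $\bs x_1,\ldots,\bs x_k$; that would be handled by the Cram\'er--Wold device applied to arbitrary linear combinations $\sum_{k} c_k u_{D_1}(\bs x_k)$, reducing again to the scalar CLT. For the present statement, the Lindeberg--L\'evy CLT suffices.
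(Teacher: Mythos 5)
Your argument is correct and coincides with the paper's treatment: the paper does not prove Theorem~\ref{thm:gaussian_convergence} itself but cites Neal, whose proof is exactly this reduction to the classical CLT for the i.i.d.\ mean-zero summands $Z_i = V_i^{(1)} h_i^{(1)}(\bs x)$, with boundedness of $\phi$ securing the finite second moment. Your bookkeeping of the independence structure (one row of $\bs V^{(0)}$ and one bias per hidden unit, with $\bs V^{(1)}$ independent of the first layer) and the variance identity $\mathrm{Var}(Z_i)=\sigma_V^2\,\mathbb E\left[\left(h_1^{(1)}(\bs x)\right)^2\right]$ are precisely what is needed, so there is nothing to add.
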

More generally, one can show that in the limit of infinite width, the
process $u_{D_1}$ is distributed as a centered Gaussian process with
kernel
$$K(\bs x, \bs x') = \sigma_V^2 \mathbb E \left[ h_1^{(1)}(\bs
  x) h_1^{(1)}(\bs x') \right].$$
If we further assume that $\phi(
h_1^{(1)}(\bs x))$ has a finite second moment, one can prove a similar
result for neural networks with multiple hidden layers by induction
\cite{matthews2018gaussian}.

\subsubsection{$\alpha$-stable weights ($0<\alpha < 2$)}
We now consider the case that all the weights of the neural network are
drawn from an $\alpha$-stable distribution with $\alpha \in (0,
2)$. As $D_L$ approaches infinity, we study the
convergence of the weighted sum
\begin{align}\label{eq:neu_lasthidden}
u_{D_L}(\bs x) = \frac{1}{{D_L}^{1/\alpha}} \sum_{i = 1}^{D_L} V_i^{(L)} h_{i}^{(L)}(\bs x),	
\end{align}
where $V_i^{(L)}$ satisfies the $\alpha$-stable distribution and
$h_{i}^{(L)}$ is the $i$-th node of the last hidden layer defined as
\begin{align}\label{eq:final_hidden_cauchy}
h_{i}^{(L)}(\bs x) = \phi \left(b^{(L - 1)}_i + \sum_{j=1}^{D_{L-1}}V^{(L - 1)}_{ij} h^{(L-1)}_j (\bs x) \right).
\end{align}
Note that for a fixed input $\bs x$, $\large\{h_j^{(L)}(\bs x), V_j^{(L)} \large \}_{j=1}^{D_{L-1}}$ are
pairwise independent and each $h_i^{(L)}(\bs x)$ follows the same
distribution. Thus,  we neglect the indices and simply use
$h(\bs x):=h_i^{(L)}(\bs x)$. By letting the width of each hidden layer tend to infinity, it is shown in
\cite{weiss06platt} that the final output also converges
to an $\alpha$-stable distribution. The precise result is stated next.
\begin{theorem}\label{thm:stable_convergence}
Assume that all the weights of \eqref{eq:neu_lasthidden} are i.i.d.,
and follow an $\alpha$-stable distribution with scale parameter
$\gamma$, where $\alpha \in (0, 2)$. Assume also that the activation
$\phi(\cdot)$ satisfies that $\mathbb E \left[ h(\bs x) \right]^{\alpha} <
\infty$, where $h_{i}^{(L)}(\bs x)$ is defined in
\eqref{eq:final_hidden_cauchy}. Then, as $D_L \to \infty$, the
output $u_{D_L}(\bs x)$ converges in distribution to an
$\alpha$-stable random variable $u(\bs x)$ with characteristic
function $\Phi_{u(\bs x)} (t) = \exp \left(-|\gamma t|^\alpha \mathbb
E \left[ h(\bs x) \right]^{\alpha} \right)$.
\end{theorem}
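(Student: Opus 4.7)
The plan is to evaluate the characteristic function of $u_{D_L}(\bs x)$ by conditioning on the hidden-layer activations and then passing to the limit via a strong law of large numbers.

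First I would observe that, for fixed input $\bs x$, the output-layer weights $\{V_i^{(L)}\}_{i=1}^{D_L}$ are independent of the hidden-layer activations $\{h_i^{(L)}(\bs x)\}_{i=1}^{D_L}$, since all network weights and biases are drawn independently and each $h_i^{(L)}$ depends only on weights and biases in layers of index at most $L-1$. Conditional on the sigma-algebra $\mathcal G$ generated by the hidden activations, each product $V_i^{(L)} h_i^{(L)}(\bs x)$ is $\alpha$-stable with scale $\gamma |h_i^{(L)}(\bs x)|$, and the different products are conditionally independent. The closure of $\alpha$-stable distributions under independent linear combinations then gives, conditionally on $\mathcal G$,
\[
\sum_{i=1}^{D_L} V_i^{(L)} h_i^{(L)}(\bs x) \ \sim\ \text{St}\!\left(\alpha,\ \gamma\Big(\sum_{i=1}^{D_L} |h_i^{(L)}(\bs x)|^\alpha\Big)^{1/\alpha}\right).
\]

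By the tower property, the unconditional characteristic function of $u_{D_L}(\bs x) = D_L^{-1/\alpha}\sum_i V_i^{(L)} h_i^{(L)}(\bs x)$ equals
\[
\Phi_{u_{D_L}(\bs x)}(t) \ =\ \mathbb E\!\left[\exp\!\left(-|\gamma t|^\alpha \cdot \frac{1}{D_L}\sum_{i=1}^{D_L} |h_i^{(L)}(\bs x)|^\alpha\right)\right].
\]
Since the $\{h_i^{(L)}(\bs x)\}_{i\ge 1}$ are i.i.d.\ copies of $h(\bs x)$ and $\mathbb E[|h(\bs x)|^\alpha] < \infty$ by hypothesis, Kolmogorov's strong law of large numbers yields $D_L^{-1}\sum_{i=1}^{D_L}|h_i^{(L)}(\bs x)|^\alpha \to \mathbb E[|h(\bs x)|^\alpha]$ almost surely as $D_L \to \infty$. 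The integrand is uniformly bounded by $1$, so dominated convergence gives $\Phi_{u_{D_L}(\bs x)}(t) \to \exp(-|\gamma t|^\alpha \mathbb E[|h(\bs x)|^\alpha])$ pointwise in $t$, and L\'evy's continuity theorem upgrades this to convergence in distribution, which is the claimed statement.

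The main obstacle is really only bookkeeping: one must carefully track the independence of network weights across layers to justify conditioning on $\mathcal G$, and make sure the finite-moment hypothesis is genuinely what the law of large numbers needs. No deeper probabilistic machinery is required, because stability of $\alpha$-stable laws under scaling and summation reduces what would otherwise be a delicate non-Gaussian central limit problem to an almost-sure limit for a sum of i.i.d.\ nonnegative variables with finite mean. The same template can, in principle, be iterated layer by layer to extend the statement to deeper networks, provided the corresponding $\alpha$-th moment conditions propagate through the activation $\phi$.
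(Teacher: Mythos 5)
Your proposal is correct, and it is essentially \emph{the} proof of this result: the paper itself does not prove Theorem~\ref{thm:stable_convergence} but defers to \cite{weiss06platt}, and the argument there is exactly your route --- condition on the last hidden layer, use closure of symmetric $\alpha$-stable laws under independent scaling and summation to obtain the conditional characteristic function $\exp\bigl(-|\gamma t|^{\alpha}\, D_L^{-1}\sum_{i}|h_i^{(L)}(\bs x)|^{\alpha}\bigr)$, then pass to the limit via the strong law, dominated convergence, and L\'evy's continuity theorem. (You also silently repair the statement's notation: the hypothesis and limit must involve $\mathbb E\bigl[|h(\bs x)|^{\alpha}\bigr]$, as you write, not $\mathbb E[h(\bs x)]^{\alpha}$.)

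One caveat deserves to be explicit, since it is the only place where your ``bookkeeping'' remark hides real content. For $L\geq 2$ with the lower-layer widths held finite, the activations $h_i^{(L)}(\bs x)$ in \eqref{eq:final_hidden_cauchy} are \emph{not} i.i.d.: they all share the same lower-layer output $\bs h^{(L-1)}(\bs x)$, so they are exchangeable and only conditionally i.i.d.\ given layer $L-1$. The strong law then applies conditionally and yields the random limit $\mathbb E\bigl[|h(\bs x)|^{\alpha}\,\big|\,\bs h^{(L-1)}(\bs x)\bigr]$ rather than the constant $\mathbb E\bigl[|h(\bs x)|^{\alpha}\bigr]$, so the limit law as $D_L\to\infty$ alone is a scale mixture of $\alpha$-stable laws, not a stable law. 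The theorem as stated is exact for one hidden layer, or under the sequential limit in which the earlier widths are sent to infinity first --- the reading the paper adopts when it declares the $h_i^{(L)}(\bs x)$ ``pairwise independent'' and identically distributed just before the theorem, and the reading your closing remark about iterating the template layer by layer implicitly requires. A minor technical footnote in the same vein: Kolmogorov's SLLN requires full independence; if you only grant the paper's literal pairwise independence, Etemadi's version of the strong law covers that case with no change to the conclusion.
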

We note that the assumption $\mathbb E \left [h(\bs x) \right]^\alpha <
\infty$ in the theorem always holds for bounded activation functions
such as $\tanh(\cdot)$ and $\text{sgn}(\cdot)$.

\subsection{Outputs of neural network priors with finite width}\label{subsec:outputfinite}

Since infinite-width neural networks
are not practical, we next consider neural network
priors with finite width. In particular, we study the
distribution of the output's derivative at a fixed point $\bs x$. We
distinguish two cases, namely neural networks with heavy-tailed (e.g.,
Cauchy) weights and with finite moment (e.g., a Gaussian) weights.

We start  with the case of a single hidden layer. We first
assume that the input is one-dimensional and extend the result to
multi-dimensional input later. The network
\eqref{eq:neural_nets} becomes
\begin{align}\label{eq:neu_onediminput}
u(x) = \sum_{i = 1}^{D_1} V_i^{(1)} \phi \left( b^{(0)}_i + V^{(0)}_{i} x  \right).
\end{align}
Assuming $\phi$ is differentiable, the gradient of
$u(x)$ with respect to $x$ is 
\begin{align}\label{eq:derivative_}
	u'(x) &= \sum_{i = 1}^{D_1} V_i^{(1)} V_i^{(0)}\phi' ( b_i^{(0)} + V_i^{(0)} x).
\end{align}
We now show that the distribution of the derivative
\eqref{eq:derivative_} is heavy-tailed if the weights from the input
to the hidden layer follow a heavy-tailed distribution. A
distribution is referred to as heavy-tailed if its tail is heavier
than an exponential distribution. The formal definition is given next.
\begin{definition}\label{eq:heavy-tailed}
  A random variable $X$ with cumulative distribution function $F_X(x)$ is said to be heavy-tailed if
 	\begin{align*}
 	\int_{-\infty}^{\infty} e^{t|x|} \, \dee F_X(x) = \infty \quad \text{for all} \ t > 0.
 	\end{align*}
\end{definition}
To motivate our interest in \eqref{eq:derivative_},
consider the Taylor expansion of $u$ at a point $x$,
\begin{align*}
u(x + \delta) - u(x) = u'(x) \delta + o(|\delta|),	
\end{align*}
where $|\delta|$ is small. When, for fixed $x$, $u'(x)$ follows a heavy-tailed distribution, the
difference $|u(x + \delta) - u(x)|$ is very large with a
non-negligible probability, resulting in a large jump in $u$.
To study when \eqref{eq:derivative_} is heavy-tailed, we
first consider
$ V_i^{(0)} \phi' (b_i^{(0)} + V_i^{(0)} x)$. To simplify notation, we
neglect indices and introduce the random variable
\begin{align}\label{eq:gen_deri}
 G := V \phi'(B + V x),
\end{align}
which depends
on the i.i.d.\ heavy-tailed and general
symmetric random variables $V$ and $B$, respectively. We next show that the
distribution of $G$ is heavy-tailed for appropriate activation
functions. Recall that for two measurable spaces $X, Y$ with
$\mu$ being a measure on $X$ and a measurable function $f: X \to Y$,
the induced measure $\nu$ on $Y$ defined by $\nu(A) =
\mu(f^{-1}(A))$, for any measurable set $A$, satisfies
\begin{align}\label{eq:measure-transport}
  \mathbb E_{x  \sim \mu } (f(x))	 = \int_X f(x)\,\mu(\dee x)= \int_Y  y\,\nu(\dee y) = \mathbb E_{y \sim \nu} y.
\end{align}
The following theorem states the main results.
\begin{theorem}\label{thm:grad_heavy_tailed}
	Assume that in \eqref{eq:gen_deri}, $V$ follows a 
	    symmetric heavy-tailed distribution and $B$ is
        a symmetric random variable. Furthermore, assume that
        $\phi(\cdot)$ is differentiable and its derivative is bounded away from zero,
        i.e., $|\phi'(\cdot)| \geq c > 0$. Then the
        distribution of $G$ is heavy-tailed.
\end{theorem}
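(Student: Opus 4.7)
My approach is to verify the defining integral condition for heavy-tailedness directly, by reducing the moment generating function of $|G|$ to that of $|V|$ via the pointwise lower bound on $|\phi'|$. The symmetry hypotheses on $V$ and $B$ appear not to be needed for the implication in this direction; only the lower bound $|\phi'(\cdot)|\geq c>0$ and the heavy-tailedness of $V$ will enter.

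\textbf{Key steps.} First, I would unpack Definition~\ref{eq:heavy-tailed} and write, using the measure-transport identity \eqref{eq:measure-transport},
\begin{align*}
\int_{-\infty}^{\infty} e^{t|g|}\,\dee F_G(g) \;=\; \mathbb{E}\bigl[e^{t|G|}\bigr] \;=\; \mathbb{E}\bigl[\exp\bigl(t\,|V|\,|\phi'(B+Vx)|\bigr)\bigr].
\end{align*}
Second, I would invoke the hypothesis $|\phi'(\cdot)|\geq c$ to obtain the pointwise (almost sure) inequality $|G|\geq c|V|$, hence
\begin{align*}
\exp\bigl(t\,|V|\,|\phi'(B+Vx)|\bigr) \;\geq\; \exp(tc\,|V|)\quad\text{a.s.}
\end{align*}
Taking expectations and using monotonicity of the integral gives
\begin{align*}
\mathbb{E}\bigl[e^{t|G|}\bigr] \;\geq\; \mathbb{E}\bigl[e^{tc|V|}\bigr].
\end{align*}
Third, since $V$ is heavy-tailed by hypothesis and $tc>0$ for every $t>0$, the right-hand side equals $+\infty$ by Definition~\ref{eq:heavy-tailed} applied to $V$. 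Therefore $\mathbb{E}[e^{t|G|}]=\infty$ for every $t>0$, which is exactly the statement that $G$ is heavy-tailed.

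\textbf{Main obstacle.} There is no serious obstacle: the argument is a one-line pointwise domination turned into an expectation estimate. The only conceptual point worth emphasizing is why the lower bound $|\phi'|\geq c$ is essential---without it, $\phi'(B+Vx)$ could in principle vanish precisely on the tails of $V$ and wash out the heavy tail of the product, so some uniform positive lower bound (not merely positivity) is what transfers the heavy-tail property from $V$ to $G$. I would add a brief remark that activations like $\tanh$ do \emph{not} satisfy $|\phi'|\geq c$ globally, so Theorem~\ref{thm:grad_heavy_tailed} as stated is a clean theoretical template whose hypothesis must be relaxed (e.g., by arguing heavy-tailedness survives conditioning on a suitable event where $|\phi'|$ is bounded below) when applied to the networks used in the numerical section.
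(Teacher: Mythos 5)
Your proposal is correct and follows essentially the same route as the paper's proof: both bound $|\phi'|$ below by $c$ to reduce $\mathbb{E}\bigl[e^{t|G|}\bigr]$ to $\mathbb{E}\bigl[e^{tc|V|}\bigr]$, which diverges by the heavy-tailedness of $V$. Your pointwise a.s.\ domination $|G|\geq c|V|$ is a slightly streamlined version of the paper's explicit double-integral/Fubini computation (and, as you correctly note, dispenses with the symmetry and even the independence hypotheses, which the paper's factorization step formally uses but does not essentially need); your closing remark about $\tanh$ matches the paper's own fix via $\tilde\phi(x)=\phi(x)+\varepsilon x$.
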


\begin{proof}
We denote the joint cumulative distribution function of $V$ and $B$ as
$F_{V, B}(\cdot, \cdot)$, and the cumulative distribution functions of
$V$ and $B$ as $F_V(\cdot)$ and $F_B(\cdot)$, respectively.  For $t >
0$, using \eqref{eq:measure-transport}, we have
\begin{align*}
\int_{\mathbb{R}} e^{t|g|} \, \dee F_G(g) =  \mathbb E_{V, B} \left [ e^{t|V \phi'(B + V x)|} \right] &= \int_{\mathbb R} \int_{\mathbb R} e^{t|v \phi'(b + v x)|} \, \dee F_{V, B}(v, b) \\
	&= \int_{\mathbb R} \int_{\mathbb R} e^{t|v \phi'(b + v x)|} \, \dee F_V(v)\,\dee F_B(b),
\end{align*}
since $V$ and $B$ are independent. The
boundedness of $\phi$ from below implies
\begin{align*}
\int_{\mathbb{R}} e^{t|g|} \,\dee F_G(g) &\geq \int_{\mathbb R} \int_{\mathbb R} e^{ct |v |} \,\dee F_V(v)\,\dee F_B(b) \geq \int_{\mathbb R} e^{tc|v |} \,\dee F_V(v) = \infty,
\end{align*}
where we used $\int_{\mathbb R} \,\dee F_B(b) = 1$ and that
$V$ is heavy-tailed. Thus, $G$ is heavy-tailed.
\end{proof}

We note that the
assumption of the derivative bounded from below is satisfied for most activation
functions including leaky-ReLU and SeLU. Furthermore, we can apply
this theorem to more activation functions, e.g., $\tanh(\cdot)$, ReLU,
by using
$\tilde \phi (x) = \phi (x) + \varepsilon x$,
with small $\varepsilon > 0 $. Having established that each $ V_i^{(0)} \phi' (b_i^{(0)} + V_i^{(0)}
x)$ in \eqref{eq:derivative_} is heavy-tailed, we next show that
$u'(x)$ is also heavy-tailed. We first
formulate a basic lemma, whose proof can be found in the appendix.
\begin{lemma}\label{lemma:heavy_product_sum}
	If $X$ is a heavy-tailed random variable and $Y$ is an independent
        symmetric continuous random variable, then we have the
        following properties:
	\begin{enumerate}[(i)]
	\item The product of $XY$, is also a heavy-tailed random variable.
	\item If we further assume that $Y$ is also heavy-tailed, then
          the sum of $X + Y$, is also a heavy-tailed random variable.
	\end{enumerate}
\end{lemma}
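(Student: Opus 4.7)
My plan is to prove both parts by exploiting independence (via Tonelli) together with a simple pointwise lower bound on $e^{t|\cdot|}$, and then invoking the heavy-tailed hypothesis on $X$ to force the resulting integral to be infinite. Both parts reduce to this template, so the arguments are parallel.

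For (i), the key preliminary step is to extract a $\delta>0$ with $P(|Y|\geq\delta)>0$. I would get this from continuity of $Y$: continuity forces $P(|Y|=0)=0$, so $P(|Y|>0)=1$, and since $\{|Y|\geq\delta\}\uparrow\{|Y|>0\}$ as $\delta\downarrow 0$, some threshold $\delta$ must carry positive mass. Then on the event $\{|Y|\geq\delta\}$ we have $|xy|\geq\delta|x|$, and Tonelli combined with independence yields
\begin{align*}
\int_{\mathbb{R}} e^{t|z|}\,\dee F_{XY}(z) \geq P(|Y|\geq\delta)\int_{\mathbb{R}} e^{t\delta|x|}\,\dee F_X(x),
\end{align*}
which is $+\infty$ since $X$ is heavy-tailed and the prefactor is strictly positive.

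For (ii), my plan is to start from the reverse triangle inequality $|X+Y|\geq|X|-|Y|$, which gives the pointwise bound $e^{t|X+Y|}\geq e^{t|X|}e^{-t|Y|}$. Independence and Tonelli then let me factor the expectation:
\begin{align*}
\mathbb{E}\bigl[e^{t|X+Y|}\bigr] \geq \mathbb{E}\bigl[e^{t|X|}\bigr]\cdot\mathbb{E}\bigl[e^{-t|Y|}\bigr].
\end{align*}
The first factor is $+\infty$ by heavy-tailedness of $X$, while the second factor is strictly positive because $e^{-t|Y|}\in(0,1]$ almost surely; the product is therefore $+\infty$.

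Neither step is really the hard part. The only subtlety is in (i), where the pointwise bound must be applied on a set of positive $F_Y$-measure to avoid a $0\cdot\infty$ collapse in the outer integration — which is precisely what the preliminary choice of $\delta$ secures. As a side remark, this approach uses only that $X$ is heavy-tailed and that $Y$ is independent of $X$ and nondegenerate at $0$; symmetry of $Y$ plays no role, and the extra heavy-tailedness of $Y$ hypothesized in (ii) is superfluous for this argument.
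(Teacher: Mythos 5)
Your proof is correct, and it takes a genuinely different route from the paper's. The paper argues at the level of densities: for (i) it starts from the product-density formula $f_{XY}(z)=\int f_X(z/y)\,f_Y(y)\,|y|^{-1}\,\dee y$, uses symmetry of $Y$ to locate an interval $[a,b]\subset(0,\infty)$ on which $f_Y\geq\varepsilon$, and drives truncated integrals $a_N(t)$ to infinity after a change of variables and Fubini; for (ii) it pushes the assumed heavy tail of $Y$ through the convolution density, lower-bounding by $\int_0^\infty e^{ty}f_Y(y)\,\dee y=\infty$. You instead work directly with expectations of nonnegative functions and pointwise bounds: restriction to the positive-probability event $\{|Y|\geq\delta\}$ in (i), and the factorization $e^{t|X+Y|}\geq e^{t|X|}e^{-t|Y|}$ from the reverse triangle inequality in (ii). Your steps are sound: Tonelli justifies both factorizations even when one factor is infinite, $\mathbb{E}\bigl[e^{t\delta|X|}\bigr]=\infty$ because heavy-tailedness holds at every positive rate, and $\mathbb{E}\bigl[e^{-t|Y|}\bigr]>0$ since $e^{-t|Y|}>0$ almost surely. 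Your approach buys real advantages: it needs no densities at all --- in particular it sidesteps the paper's assertion that $f_Y\geq\varepsilon$ on some interval, which does not follow from mere existence of a density plus symmetry without additional regularity --- and, as you correctly note, it shows the lemma's hypotheses are stronger than necessary: symmetry of $Y$ is never used, (i) needs only $P(Y\neq 0)>0$, and in (ii) heavy-tailedness of $Y$ is superfluous, since any independent real-valued $Y$ works. The paper's density computation in (ii) does exhibit the complementary fact that the heaviness can be sourced from $Y$'s tail alone, but your argument recovers that immediately by exchanging the roles of $X$ and $Y$, so on balance your proof is both simpler and strictly more general than the one in the paper.
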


We now show that all components of the gradient \eqref{eq:derivative_} are heavy-tailed
under the assumptions in Theorem~\ref{thm:grad_heavy_tailed}. Note
that we only require the weights in the last hidden layer to the
output to be symmetric continuous random variables. While the result
above is for one-dimensional input $x$, the generalization to
multi-dimensional input $\bs x$ by considering the partial derivatives
is straightforward. A neural network with multi-dimensional input $\bs
x$ and one hidden layer is given as
\begin{align}\label{eq:neu_multidiminput}
u(\bs x) = \sum_{i = 1}^{D_1} V_i^{(1)} \phi \left(b^{(0)}_i + \sum_{j=1}^{D_0} V_{ij}^{(0)} x_j \right).
\end{align}
The next theorem extends
Theorem~\ref{thm:grad_heavy_tailed} to networks of the form
\eqref{eq:neu_multidiminput}. 

\begin{theorem}\label{thm:gradone_heavy_tailed}
 Assume $V_{ij}^{(0)}, b^{(0)}_i$ follow i.i.d.\ heavy-tailed and
 symmetric distributions, respectively, and $V_i^{(1)}$ are
 i.i.d.\ symmetric continuous random variables. Further assume that
 $\phi(\cdot)$ is differentiable and its derivative is 
        bounded away from zero. Then the partial derivatives of
 \eqref{eq:neu_multidiminput},
 \begin{align}\label{eq:gr_partial}
   \partial_{x_k} u(\bs x) &= \sum_{i = 1}^{D_1} V_i^{(1)} V_{ik}^{(0)}	\phi' \left(b_i^{(0)} + \sum_{j=1}^{D_0} V_{ij}^{(0)} x_j \right)
\end{align}
are also heavy-tailed.
\end{theorem}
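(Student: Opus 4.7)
The plan is to reduce Theorem~\ref{thm:gradone_heavy_tailed} to the one-dimensional Theorem~\ref{thm:grad_heavy_tailed} by absorbing the cross-terms in the argument of $\phi'$ into an effective bias, and then to lift each summand to the full sum using Lemma~\ref{lemma:heavy_product_sum}. Fix $k$ and $i$. I would begin by writing the $i$-th term of \eqref{eq:gr_partial} as
\begin{align*}
T_{i,k} \;:=\; V_i^{(1)}\, V_{ik}^{(0)}\,\phi'\!\left(\tilde B_i + V_{ik}^{(0)} x_k\right),\qquad \tilde B_i \;:=\; b_i^{(0)} + \sum_{j \ne k} V_{ij}^{(0)} x_j.
\end{align*}
Since $b_i^{(0)}$ and the $V_{ij}^{(0)}$ for $j \ne k$ are mutually independent symmetric random variables, $\tilde B_i$ is symmetric, and it is independent of $V_{ik}^{(0)}$ and of $V_i^{(1)}$.

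The first key step is to apply Theorem~\ref{thm:grad_heavy_tailed} to the inner factor $V_{ik}^{(0)}\,\phi'(\tilde B_i + V_{ik}^{(0)} x_k)$. Here $V_{ik}^{(0)}$ plays the role of the heavy-tailed $V$, $\tilde B_i$ plays the role of the symmetric bias $B$, independence holds, and $\phi'$ is bounded away from zero by hypothesis; hence this quantity is heavy-tailed. The second step is to multiply by $V_i^{(1)}$: this factor is symmetric, continuous, and independent of everything inside $\phi'$, so Lemma~\ref{lemma:heavy_product_sum}(i) implies that $T_{i,k}$ is itself heavy-tailed. Moreover, because $V_i^{(1)}$ is symmetric and continuous and appears multiplicatively, $T_{i,k}$ inherits symmetry and continuity from $V_i^{(1)}$, which is exactly what will be needed to invoke part (ii) of the lemma.

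Finally, I would combine the summands. Across different indices $i$, the collections $\{V_i^{(1)}, V_{ij}^{(0)}, b_i^{(0)}\}_{j}$ are independent by assumption, so $T_{1,k}, \ldots, T_{D_1,k}$ are independent symmetric continuous heavy-tailed random variables. A straightforward induction on $D_1$ using Lemma~\ref{lemma:heavy_product_sum}(ii) — at each step the partial sum remains symmetric continuous heavy-tailed, and is independent of the next summand — yields that $\partial_{x_k} u(\bs x) = \sum_{i=1}^{D_1} T_{i,k}$ is heavy-tailed. The only real subtlety is bookkeeping the symmetry and independence properties carefully at each step so that the hypotheses of Lemma~\ref{lemma:heavy_product_sum} remain satisfied throughout the induction; this is the main obstacle but is more notational than substantive, since the grouping $\tilde B_i = b_i^{(0)} + \sum_{j\ne k} V_{ij}^{(0)} x_j$ cleanly isolates $V_{ik}^{(0)}$ and $V_i^{(1)}$ from the bias.
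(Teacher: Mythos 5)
Your proposal is correct and takes essentially the same route the paper intends: its one-line proof (``the proof follows by using that each component is heavy-tailed'') is exactly your argument, namely absorbing the $j \neq k$ terms into an effective symmetric bias $\tilde B_i$ so that Theorem~\ref{thm:grad_heavy_tailed} applies to each summand, multiplying by $V_i^{(1)}$ via Lemma~\ref{lemma:heavy_product_sum}(i), and summing the independent terms inductively via Lemma~\ref{lemma:heavy_product_sum}(ii). Your write-up in fact supplies more detail (the symmetry and independence bookkeeping) than the paper itself records.
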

The proof follows by using that each component is
heavy-tailed. Next, we prove our main theorem for neural networks with
multiple hidden layers and weights from heavy-tailed
and finite-moments distributions.
\begin{theorem}\label{thm:gradmulti_heavy_tailed}
	Assume the neural network \eqref{eq:neural_nets} and that $\bs
        b^{(l)}$ are symmetric random variables for $l = 0, \ldots,
        L-1$. Furthermore, assume that $\phi(\cdot)$ is differentiable, 
        and its derivative is bounded and bounded away from zero. Then, for a fixed $\bs x$, the distribution of
        $\partial_{x_k} u(\bs x)$ satisfies:
	\begin{enumerate}[(i)]
	\item If the weights in $\bs V^{(l)}$ are i.i.d.\ symmetric
          heavy-tailed and $\bs V^{(L)}$ are i.i.d.\ symmetric continuous random
          variables, then all $\partial_{x_k}u(\bs x)$ are heavy-tailed.
	\item If $\bs V^{(l)}$ are i.i.d.\ symmetric random variables
          of finite $k$-th order moment, then all $\partial_{x_k}u(\bs
          x)$
          have finite moment of $k$-th order.
	\end{enumerate}
\end{theorem}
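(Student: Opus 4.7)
I would proceed by induction on the number of hidden layers $L$, treating both parts in parallel. The base case $L=1$ of part~(i) is exactly Theorem~\ref{thm:gradone_heavy_tailed}, and for part~(ii) at $L=1$ the explicit single-layer gradient is a finite sum of products of one heavy-tailed weight, one symmetric continuous weight, and a bounded $\phi'$-factor, so the $k$-th moment follows from independence of the weights in each path product and Minkowski's inequality (or the subadditivity of $x \mapsto x^k$ for $0 < k < 1$).

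For the inductive step I would view the $L$-hidden-layer network as the composition of the first hidden layer $\bs x \mapsto \bs h^{(1)}(\bs x)$ with an $(L-1)$-hidden-layer subnetwork $\tilde g$ whose weights $\bs V^{(1)},\ldots,\bs V^{(L)}$ and biases $\bs b^{(1)},\ldots,\bs b^{(L-1)}$ satisfy the hypotheses of the theorem with $L-1$ layers. The chain rule gives
\begin{align*}
\partial_{x_k} u(\bs x) = \sum_{j=1}^{D_1} V^{(0)}_{j k}\, \phi'(z^{(1)}_j)\, \partial_{h_j} \tilde g\bigl(\bs h^{(1)}(\bs x)\bigr).
\end{align*}
The inductive hypothesis, applied to $\tilde g$ at an arbitrary fixed input $\bs h$, yields that $\partial_{h_j}\tilde g(\bs h)$ is heavy-tailed with respect to the weights of $\tilde g$. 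Since $\bs h^{(1)}(\bs x)$ is determined by $(\bs V^{(0)}, \bs b^{(0)})$, which is independent of the weights of $\tilde g$, conditioning on $(\bs V^{(0)}, \bs b^{(0)})$ and using the tower property transfers heavy-tailedness of the conditional moment generating function to the unconditional law. For part~(ii) in the inductive step the same chain-rule expansion yields a finite sum of products of i.i.d.\ finite-$k$-th-moment weights and bounded $\phi'$-factors; independence of the weights within each path product makes the $k$-th moment factor, and the sum is controlled as in the base case.

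The main obstacle is the final assembly in part~(i): showing that the sum above is itself heavy-tailed. Its summands are not independent across $j$, because they share the randomness in $\bs h^{(1)}$ and in the weights of $\tilde g$, so Lemma~\ref{lemma:heavy_product_sum}(ii) does not apply verbatim. My plan is to exploit the independence of the output-layer weights $\bs V^{(L)}$ from everything else: conditioning on all randomness except $\bs V^{(L)}$ turns the display into a linear combination of i.i.d.\ symmetric continuous variables $V^{(L)}_j$, and the $|\phi'|\geq c$ lower bound combined with Lemma~\ref{lemma:heavy_product_sum}(i) applied to a single dominating summand produces heavy-tailedness, which the tower property then promotes to the unconditional claim. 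Carefully tracking the dependence between $V^{(0)}_{jk}$ and $\bs h^{(1)}$, and verifying that this conditioning does not destroy the heavy-tail behavior through the $\phi'$-factor, is the delicate technical step where I expect most of the work to lie.
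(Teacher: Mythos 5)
Your induction runs in the opposite direction from the paper's, and this difference matters. You peel off the \emph{first} hidden layer and apply the inductive hypothesis to the output-side subnetwork $\tilde g$ evaluated at the random input $\bs h^{(1)}(\bs x)$, which then forces the conditioning step on $(\bs V^{(0)}, \bs b^{(0)})$. The paper instead peels off the \emph{last} layer: in \eqref{eq:grmulti_partial} the derivative is written as $\sum_{i} V_i^{(L)} \phi'(\cdot) \bigl( \sum_{j} V_{ij}^{(L-1)} \partial_{x_k} h_j^{(L-1)}(\bs x) \bigr)$, and each inner sum is precisely the $x_k$-derivative of an $(L-1)$-hidden-layer network evaluated at the \emph{same fixed} $\bs x$, with output weights $V_{ij}^{(L-1)}$ that are heavy-tailed and hence in particular symmetric continuous. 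The inductive hypothesis therefore applies verbatim, with no random subnetwork inputs to condition away; the remaining factor $V_i^{(L)} \phi'(\cdot)$ is absorbed using $|\phi'| \geq c$ as in the proof of Theorem~\ref{thm:grad_heavy_tailed} together with Lemma~\ref{lemma:heavy_product_sum}. Keeping the input fixed is the main structural advantage of the paper's decomposition, and it is exactly what your input-side split gives up.

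The repair you sketch for the assembly step contains a genuine flaw as stated. Conditional on all randomness except $\bs V^{(L)}$, the derivative is a \emph{fixed} linear combination $\sum_i c_i V_i^{(L)}$ of the i.i.d.\ symmetric continuous last-layer weights, and such a conditional law is in general \emph{not} heavy-tailed: with Gaussian $\bs V^{(L)}$, as in the Cauchy--Gaussian prior, it is conditionally Gaussian. So there is no conditional heavy-tailedness for the tower property to promote; the heavy tails live entirely in the coefficients $c_i$, which your conditioning freezes. The correct use of your ingredients is quantitative rather than qualitative: conditionally the terms $c_i V_i^{(L)}$ are independent and symmetric, so a L\'evy-type symmetrization inequality gives $P\bigl(|\sum_i c_i V_i^{(L)}| > s\bigr) \geq \frac{1}{2} P\bigl(|c_{i_0} V_{i_0}^{(L)}| > s\bigr)$ for any fixed $i_0$, and integrating this tail bound over the remaining randomness reduces heavy-tailedness of the sum to that of the single product $c_{i_0} V_{i_0}^{(L)}$, to which Lemma~\ref{lemma:heavy_product_sum}(i) applies once $c_{i_0} = \partial_{x_k} h_{i_0}^{(L)}(\bs x)$ is known to be heavy-tailed. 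But that last fact is not delivered by your inductive hypothesis, which concerns $\partial_{h_j} \tilde g$ and already sums over $\bs V^{(L)}$; you would need to strengthen the induction to derivatives of individual hidden nodes, and doing so essentially reconstructs the paper's output-side recursion \eqref{eq:grmulti_partial}. Two smaller points: in your part~(ii) base case the summands should be products of two independent finite-$k$-th-moment weights --- no heavy-tailed weight appears under the hypotheses of (ii), and if one did the $k$-th moment would generally be infinite (an evident slip); and, in fairness, your concern about dependent summands is legitimate --- the paper's own final appeal to Lemma~\ref{lemma:heavy_product_sum} for the outer sum in \eqref{eq:grmulti_partial} is terse on exactly this point and also implicitly requires conditioning on the shared lower-layer randomness.
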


\begin{proof}
We use induction for the proof. The partial derivative
is
\begin{align}\label{eq:grmulti_partial}
	\partial_{x_k} u(\bs x) &= \sum_{i = 1}^{D_L} V_i^{(L)}	\phi' \bigg( b_i^{(L-1)} + \sum_{j=1}^{D_{L-1}} V_{ij}^{(L-1)} h_j^{(L-1)}(\bs x) \bigg) \bigg( \sum_{j=1}^{D_{L-1}} V_{ij}^{(L-1)} \partial_{x_k} h_j^{(L-1)}(\bs x) \bigg),
\end{align}
where $\partial_{x_k} h_j^{(L-1)}(\bs x)$ is the partial
derivative of the $j$-th component of the $(L-1)$-th hidden layer.

We consider first case (i). In Theorem~\ref{thm:gradone_heavy_tailed},
for one hidden layer, we have shown that the distribution of the
partial derivative is heavy-tailed. We assume that this
argument holds for neural networks with $L-1$ hidden layers. It is
easy to see that
\begin{align}\label{eq:gramulti_part}
\sum_{j=1}^{D_{L-1}} V_{ij}^{(L-1)} \partial_{x_k} h_j^{(L-1)}(\bs x)
\end{align}
is also heavy-tailed using Lemma~\ref{lemma:heavy_product_sum} and
since each component is the partial derivative of the output of a
neural network of $L-1$ hidden layers and . Since $\phi'(\cdot)$ is
bounded away from zero, $V_i^{(L)} \phi' \left(b_i^{(L-1)} +
\sum_{j=1}^{D_{L-1}} V_{ij}^{(L-1)} h_j^{(L-1)}(\bs x) \right)$ is
heavy-tailed following the proof in
Theorem~\ref{thm:grad_heavy_tailed}, and thus that
\eqref{eq:grmulti_partial} is also heavy-tailed by
Lemma~\ref{lemma:heavy_product_sum}.

For case (ii), we start with a one-hidden-layer neural network given by
\begin{align}\label{eq:neu_onediminputgau}
u(\bs x) =  \sum_{i = 1}^{D_1} V_i^{(1)} \phi \left(b^{(0)}_i + \sum_{j=1}^{D_0} V_{ij}^{(0)} x_j \right),	
\end{align}
with corresponding partial derivative
\begin{align*}
\partial_{x_k} u(\bs x)= \sum_{i = 1}^{D_1} V_i^{(1)} V_{ij}^{(0)}	\phi' \left(b_i^{(0)} + \sum_{j=1}^{D_0} V_{ij}^{(0)} x_j \right).
\end{align*}
Note that $|\phi'(\cdot)|$ is bounded by the assumption. Therefore,
the partial derivative of the output has finite $k$-th moment since
$V_i^{(1)}, V_{ij}^{(0)}$ are i.i.d.\ random variables of finite
$k$-th moment.
We now assume that the result holds for any neural network with $L-1$
hidden layers. Then \eqref{eq:gramulti_part} has
finite $k$-th moment since both $V_{ij}^{(L-1)}$ and $\partial_{x_k} h_j^{(L-1)}(\bs x)$ have finite $k$-th moment and
are independent of each other. This implies that the partial
derivative in \eqref{eq:grmulti_partial} also has finite $k$-th
moment. Thus, the result follows by induction.
\end{proof}

Again, note that the results in this section hold for heavy-tailed
weights, which includes $\alpha$-stable weights. From
Theorem~\ref{thm:gradmulti_heavy_tailed}, one can see that the
derivative of the neural network output with Gaussian weights has
finite moments, which implies smooth outputs in practice. In contrast,
the derivative of the neural network output is heavy-tailed if all the
weights before the last hidden layer are $\alpha$-stable distributed,
$0 < \alpha <2$. This implies that the derivative of the output can
have extreme values with non-negligible probability and thus the
corresponding neural network outputs can have
large jumps, emulating discontinuities even when the activation
function is smooth. Furthermore, we only assume the weights before the last hidden layer to be
$\alpha$-stable. Biases can follow any symmetric random
distribution. For the numerical experiments in
Sec.~\ref{sec:numerical}, we study one type of such neural network
priors, which has Cauchy random variables as the weights except for
the last hidden layer, which has Gaussian weights. We refer to this
setup as the Cauchy-Gaussian prior.


\section{Bayesian inference with neural network priors}\label{sec:BNN}

In this section, we study Bayesian inverse problems in function space
with neural network priors. We first use observation data to define the posterior on the
weights. Several approaches to
probe the posterior distribution are then discussed, including
maximum a posterior (MAP) estimation via optimization, and Markov
chain Monte Carlo (MCMC) sampling.

In the Bayesian approach, one treats the inverse problem as
statistical inference of the function $u$ in a function space
$\mathcal U$. This amounts to finding the posterior distribution of
$u$ that reflects the prior information and the observations. The
forward map is the parameter-to-observable operator
$\mathbfcal{F}: \mathcal U \to \mathbb R^{N_{\text{obs}}}$ that maps
the parameter field $u$ to observations 
in $\mathbb R^{N_{\text{obs}}}$, where $N_{\text{obs}}$ is the
dimension of the observables. We assume
additive Gaussian errors $\bs \varepsilon \sim \mathcal N(0, \eta^2
I_{N_{\text{obs}}})$, so that the observations are
\begin{align}\label{eq:fwd_inf}
	\bs y_{\text{obs}} = \mathbfcal F(u) + \bs \varepsilon.
\end{align}
Hence, the likelihood for given $u$ is
\begin{align*}
	p(\bs y_{\text{obs}}|u) \propto \exp \left(-\frac{1}{2\eta^2} \left\| \mathbfcal F(u) - \bs y_{\text{obs}} \right\|_2^2 \right).
\end{align*}
Given a prior measure $\mu^0$, the
posterior measure $\mu^y$ of $u$ is defined via Bayes' rule
\begin{align}\label{eq:bayesinf}
\frac{\dee \mu^y}{\dee \mu^0} = \frac 1 Z p(\bs y_{\text{obs}}|u),
\end{align}
where the left-hand side is the Radon--Nikodym derivative of the
posterior measure $\mu^y$ with respect to the prior measure $\mu^0$
and $Z = \int_{\mathcal U} p(\bs y_{\text{obs}}|u)\,\dee \mu_0$ is the
normalization constant.

Note that in infinite dimensions, Bayes' formula cannot
be written in terms of probability density functions since there is no
Lebesgue measure against which to define the densities of the prior
and posterior distributions. Thus, finite-dimensional
approximations of the prior and posterior distributions are
proposed in \cite{BuiGeorg2013,cotter2010}. In particular, we assume
that the prior distribution is approximated by a finite-dimensional
measure $\mu^{0, h}$ which is absolutely continuous with respect to
the Lebesgue measure $\lambda$; the resulting posterior $\mu^{y,h}$ is
then also absolutely continuous with respect to $\lambda$. If we
define $p_{\text{post}}(u | \bs y_{\text{obs}})$ and
$p_{\text{prior}}(u)$ as the Lebesgue densities of $\mu^{0, h}$ and
$\mu^{y, h}$, respectively, we have
\begin{align}\label{eq:bayesinfappro}
	p_{\text{post}}(u | \bs y_{\text{obs}}) = \frac{\dee \mu^{y, h}}{\dee \lambda} = \frac{\dee \mu^{y, h}}{\dee \mu^{0, h}} \frac{\dee \mu^{0, h}}{\dee \lambda} \propto p(\bs y_{\text{obs}}|u) p_{\text{prior}}(u).
\end{align}

In this paper, we estimate $u$ defined by the  the finite-dimensional
network parameterization. The output of the neural network is a
function of the weights, which we denote by $u =
\Psi (\bs w)$, where $\bs w$ represents the weights inside the neural
network. This connects the weights $\bs w$ with the
observations $\bs y_{\text{obs}}$ as
\begin{align*} 
\bs y_{\text{obs}} = \mathbfcal F \left(\Psi \left(\bs w \right) \right) + \bs \varepsilon.	
\end{align*}
This allows one to infer $u$ by learning the
posterior distribution on the weights of the neural network
instead. Again, the posterior measure $\mu_{w}^y$ on the weights can
be computed by Bayes' rule as
\begin{align}\label{eq:Bayesrulenn}
	\frac{\dee \mu_{w}^y}{\dee \mu_{w}^0} \propto  p(\bs y_{\text{obs}}| \bs w),
\end{align}
where $p(\bs y_{\text{obs}}| \bs w)$ is the likelihood for given
weights $\bs w$.  The finite-dimensional equation \eqref{eq:Bayesrulenn}
can be expressed in terms of densities.
For the prior density on $\bs w$, we use the
product of one-dimensional densities on each component.

\subsection{Posterior approximation based on minimizers}

Here, we give an overview of methods that approximate the
posterior distribution using optimization. Minimization-based point estimation only provides limited uncertainty
information, but it is computationally much cheaper
than sampling, which we discuss in Sec.~\ref{subsubsec:MCMC_pcn}.

\subsubsection{MAP estimation}
A widely-used estimation of the posterior \eqref{eq:Bayesrulenn}
is maximum a posteriori (MAP) estimation. MAP estimates $\bs
w_{\text{map}}$ are obtained by maximizing the posterior distribution
$p_{\text{post}}(\bs w|\bs y_{\text{obs}})$:
\begin{align*}
  \bs w_{\text{map}} = \argmax_{\bs w} p_{\text{post}}(\bs w | \bs y_{\text{obs}}).
\end{align*}
This is equivalent to
\begin{align}\label{eq:mapobjective}
  \bs w_{\text{map}} = \argmin_{\bs w} J(\bs w) := \frac{1}{2 \eta^2} \left\| \mathbfcal F \left(\Psi (\bs w) \right) - \bs y_{\text{obs}} \right\|_2^2 + R(\bs w),
\end{align}
where $R(\bs w) := -\log \left(p_{\text{prior}}( \bs w) \right)$ is the regularization
term. We assume that the prior on $\bs w$ is an
$\alpha$-stable distribution. Note that \eqref{eq:mapobjective} is a non-convex optimization problem
due to the presence of the neural network and due to a possibly
non-linear map $\mathbfcal F$. It is thus not guaranteed (and even
unlikely) that we find the global minimum numerically. Instead, we
likely find a local minimizer, which we denote by $\bs w_{\text{loc}}$.
Which local minimizer is found may depend on the initialization of the
optimization algorithm and the algorithm itself.
We will use local minima
to construct different approximations to the posterior distribution as
discussed next.

\subsubsection{Laplace approximation}\label{sec:laplace}
Based on a local minimum $\bs w_{\text{loc}}$, the Laplace
approximation is defined as the Gaussian distribution
\begin{align}\label{eq:laplaceapp}
  \mathcal N \left(\bs w_{\text{loc}},  \nabla^2 J (\bs w_{\text{loc}})^{-1} \right),
\end{align}
where $\nabla^2 J (\bs w_{\text{loc}})$ denotes the Hessian of the
objective function \eqref{eq:mapobjective} at the local minimum $\bs
w_{\text{loc}}$, assuming that it is positive definite.
That is, the Laplace approximation replaces the true posterior using a
Gaussian distribution centered at the $\bs w_{\text{loc}}$ with
`local' covariance information of the posterior distribution
\cite{schillings20philipp, BuiGeorg2013}. The Laplace approximation
coincides with the true posterior if the mapping $\bs w \mapsto
\mathbfcal{F}(\Psi(\bs w))$ is linear and the prior and observation error
distributions are Gaussian. The approximation might be insufficient
when the posterior distribution is multimodal or heavy-tailed. As reported in \cite{ImmerKB21}, we observed numerically that
the Laplace approximation can
lead to substantial overestimation of the variance, and hence we do not
consider it further in this
paper.

\subsubsection{Ensemble method}\label{sec:ensemble}
A heuristic approach to estimate uncertainty is based on the set of
local minima computed with different initializations. This approach is
known as the ensemble method or model averaging method
\cite{rahaman2020uncertainty}. Local minimizers are averaged to
approximate the posterior mean and the variation of minimizers is
used to estimate uncertainty. Although not well understood
theoretically, the ensemble method can provide useful and cheap
uncertainty quantification compared to more sophisticated approaches
\cite{Laksh2017,zhou2002}. We will show results obtained with this
heuristic approach as part of our numerical results.

\subsubsection{Last-layer Gaussian regression}\label{sec:regression}
This approach starts with finding a minimizer for networks where the
last layer is Gaussian. The output of the neural network \eqref{eq:neural_nets} can be viewed as a
linear combination of the nodes in the last hidden layer. For linear
parameter-to-observable maps $\mathbfcal{F}(\cdot)$ and Gaussian
weights between the last hidden layer to the output,
\eqref{eq:neural_nets} reduces to a Gaussian regression problem upon
fixing the weights in all previous layers. This is related to the
majority voting algorithm \cite{Laksh2017} except that here we use it
to quantify uncertainty. The output function can be represented as
\begin{align*}
u(\bs x) = \sum_{j = 1}^{D_L} \nu_j f_i(\bs x) = \bs f(\bs x)^T \bs \nu,	
\end{align*}
where $D_L$ denotes the width of the last hidden layer, i.e., the
number of base functions, $\bs f(\bs x) = [f_1(\bs x), \ldots,
  f_{D_L}(\bs x)]^T$ denotes the vector containing the base functions,
and $\bs \nu = [\nu_1, \ldots, \nu_{D_L}]^T$ is the weight vector,
which follows a multivariate Gaussian prior $\mathcal N(0,
\frac{1}{D_L} I_{D_L})$. Thus, the forward model is
$
	\bs y_{\text{obs}} = \mathbfcal F\bs f(\cdot)^T \bs \nu + \bs \varepsilon = \bs F^T \bs \nu + \bs \varepsilon,
$
where $\bs F = [\mathbfcal F f_1 , \ldots, \mathbfcal F f_{D_L} ]^T$
and $\mathbfcal F(\cdot)$ is the linear parameter-to-observable
mapping. By Bayes' formula, the posterior distribution of $\bs \nu$ follows a
multivariate Gaussian distribution $\mathcal N(\bs \nu_n, \Sigma_n)$
with
\begin{align*}
\bs \nu_n &= \frac{1}{\eta^2}\Sigma_n \bs F^* \bs y_{\text{obs}}, \quad \Sigma_n^{-1} = \frac{1}{\eta^2 }\bs F^* \bs F + I_{D_L},
\end{align*}
where $\bs F^*$ is the conjugate transpose of $\bs F$.

Note that this Gaussian distribution is $D_L$-dimensional, which is
much smaller than the number of weights.  The covariance $\Sigma_n$
can thus be computed and factored easily to generate samples. The base
functions $f_i$, which are found through optimization, typically
contain local structural information such as jumps, and so the $f_i$
essentially provide a basis adapted to the inverse problem \cite{Helmut2017Optimal}. The
assumption of a linear parameter-to-observable mapping $\mathbfcal{F}$
is restrictive. However, when $\mathbfcal{F}$ is nonlinear this
approach may still provide a form of dimension reduction for use with
other sampling/approximation methods.

\subsection{Posterior approximation using MCMC sampling}\label{subsubsec:MCMC_pcn}

Sampling methods aim at full exploration of the posterior
distribution.
MCMC explores the posterior distribution $p(\bs w|\bs
y_{\text{obs}})$ by constructing a Markov chain which targets the
posterior in stationarity; states of this chain then form a sequence
of correlated samples following convergence. One starts with an
initial state $\bs w^{(0)}$ and proposes another state based on a
Markov transition kernel. This proposed state is then accepted or
rejected based on an acceptance criterion. Iterating this procedure is
the basis for generating a sequence of MCMC samples.

Ideally one desires the correlation between states of the chain to be
minimal in order to reduce the cost of computing accurate uncertainty
estimates: generating each proposal and/or acceptance probability
typically involves evaluation of the likelihood function and possibly
its derivatives. Many classical MCMC algorithms suffer from increasing
correlations between samples when the dimension of the parameter space
is increased, which is an issue in settings such as ours where the
number of weights to be inferred is high \cite{hairer2014spectral}. A
simple MCMC method which does not suffer from this dimensional
dependence, in the case of a Gaussian prior, is the preconditioned
Crank-Nicolson (pCN) method \cite{cotter2013mcmc}. In this method the
proposed states encode the prior information via a rescaling and
random perturbation of the current state, and the acceptance
probability requires only evaluation of the likelihood; see
\cite{cotter2013mcmc} for a full statement of the algorithm. In this
algorithm one must choose a scalar parameter $\beta$ that controls the
size of the perturbation in proposed states: if the perturbations are
too large, proposed states will be unlikely to be accepted, but if they
are too small the states of the chain will be highly correlated, and
so a balance must be achieved. In practice this parameter may be
adapted on-the-fly to ensure a certain proportion of proposals are
accepted. Variants of pCN are available that use gradient and
curvature information of the likelihood in order to generate more
feasible proposals with larger perturbations, reducing correlation
between samples in exchange for increased computational cost per
sample; see \cite{beskos2017geometric} for a review. In this paper we
will use only the pCN method.

We remark that our prior is in general non-Gaussian, however we may
still utilize the above methods via a reparameterization typically
known as non-centering \cite{chen2019dimensionrobust}: we rewrite our prior as a nonlinear
transformation of a Gaussian distribution. Since our prior is assumed
to be an independent product of one-dimensional distributions this
mapping may be found using the inverse CDF method. For example, in the
case of a standard Cauchy prior we define the mapping
\begin{align*}
	\Lambda (p) = \tan \left[ \pi \left(\psi_G \left( p \right) - \frac 1 2 \right) \right],
\end{align*}
where $\psi_G$ is the standard Gaussian cumulative density function:
if $p \sim N(0,1)$, $\Lambda(p)$ is then a standard Cauchy sample. We
therefore compose our likelihood function with $\Lambda$ acting
componentwise, and apply the pCN method assuming independent $N(0,1)$
priors on each weight. Transforming the resulting MCMC samples with
the function $\Lambda$ componentwise then provides samples from our
desired posterior.

\section{Numerical experiments}\label{sec:numerical}
In this section, we study the behavior of
neural network priors for Bayesian inverse problems. Our goal is to
compare inversion results obtained with neural network priors with
Cauchy and Gaussian weight distributions. For that purpose, we use
deconvolution problems in one and
two dimensions. All numerical experiments are implemented using
the PyTorch 1.9.0 environment.

\subsection{One-dimensional deconvolution}

\begin{problem}\label{ex1:fun_appro} 
We first consider a deconvolution problem on the interval
$\mathcal D = [-1, 1]$ with forward model
\begin{align*}
  \bs y_{\text{obs}} = A (u)  + \bs \varepsilon, \quad \bs
  \varepsilon \sim \mathcal N(0, \eta^2 I_{N_{\text{obs}}}),
\end{align*}
where $u$ is the unknown, potentially discontinuous parameter to be
recovered. The forward operator is defined as $A = P\circ B$, where $B:
L^{\infty}(\mathcal D) \to \mathcal C^{\infty}$ denotes the blurring
operator defined by the convolution with a mean-zero Gaussian kernel
with variance $0.03^2$, and $P: \mathcal C^{\infty} \to \mathbb
R^{N_{\text{obs}}}$ is the evaluation operator at $N_{\text{obs}}=50$
uniformly distributed points. The variance of the errors $\bs
\varepsilon$ is $\eta^2 = 0.05^2$.
We discretize the forward operator using
a uniform mesh with $128$ points.

\end{problem}

The true model and the synthetic observations are shown in
Figure~\ref{fig:1ddatagpr}(a).  As a reference, we show the
reconstruction obtained with a Gaussian process regression in
Figure~\ref{fig:1ddatagpr}(b). Here, we used a mean-zero Gaussian
process prior with the Mat\'ern covariance operator $0.25 (I -
10\Delta)^{-2}$ with homogeneous Neumann boundary conditions.  Here
and in the remainder of this section, we show an uncertainty region
corresponding to the $95 \%$ credible interval. For Gaussians, this
corresponds to $\hat \mu \pm 1.96 \hat \sigma$, where $\hat \mu$ and
$\hat \sigma$ denote the sample mean and sample pointwise standard
deviation, respectively.  For the following neural network priors, we
use a network with 3 hidden layers with widths $[50, 50, 100]$.

\begin{figure}[tb]\centering
  \begin{tikzpicture}
    \begin{scope}[xshift=0cm]
    \begin{axis}[width=.54\columnwidth,xmin=-1,xmax=1,ymin=-1.8,ymax=0.8,compat=1.13, ytick ={-1.5, -1, -0.5, 0, 0.5}, legend pos=south west, legend cell align={left}, legend style={nodes={scale=.85, transform shape}}, xlabel = ]
      \addplot[mark=none, black, dotted, color=blue!20!red, thick] table [x=x,y=U]{plot_paper/ex1/data/1dtruth.txt};
      \addlegendentry{Truth $u^{\dagger}$}
      \addplot[color=blue!60!green,mark=none,thick] table [x=x,y=blur]{plot_paper/ex1/data/1dtruth.txt};
      \addlegendentry{Blurred model}
      \addplot[only marks, mark=x, color=black, mark size=1] table [x=x,y=yobs]{plot_paper/ex1/data/1dobs.txt};
      \addlegendentry{Observations}
    \end{axis}
    \node[color=black] at (0.4,4.1cm) {a)};
	\end{scope}
	\begin{scope}[xshift=6cm]
    \begin{axis}[width=.54\columnwidth,xmin=-1,xmax=1,ymin=-1.8,ymax=0.8,compat=1.13, xtick ={-0.5, 0, 0.5, 1}, yticklabels=\empty, legend pos=south west, legend cell align={left}, legend style={nodes={scale=.85, transform shape}}, xlabel =]
      \addplot[mark=none, black, dotted, color=blue!20!red, thick] table [x=x,y=U]{plot_paper/ex1/data/1dtruth.txt};
      \addlegendentry{Truth}
      \addplot[color=blue!40!orange,mark=none,thick] table [x=x,y=mean]{plot_paper/ex1/data/1dgpr.txt};
      \addlegendentry{Mean}
      \errorband[blue, opacity=0.2]{plot_paper/ex1/data/1dgpr.txt}{x}{mean}{std}
      \addlegendentry{$\pm 1.96$ Std. dev.}
    \end{axis}
    \node[color=black] at (0.4,4.1cm) {b)};
	\end{scope}
\end{tikzpicture}
\caption{Setup for Problem~\ref{ex1:fun_appro}. Shown in (a) are the
  truth model and synthetic observations. As reference, shown in (b)
  is the result of Gaussian process regression with the covariance
  operator $0.25(I - 10 \Delta)^{-2}$.\label{fig:1ddatagpr}}
\end{figure}
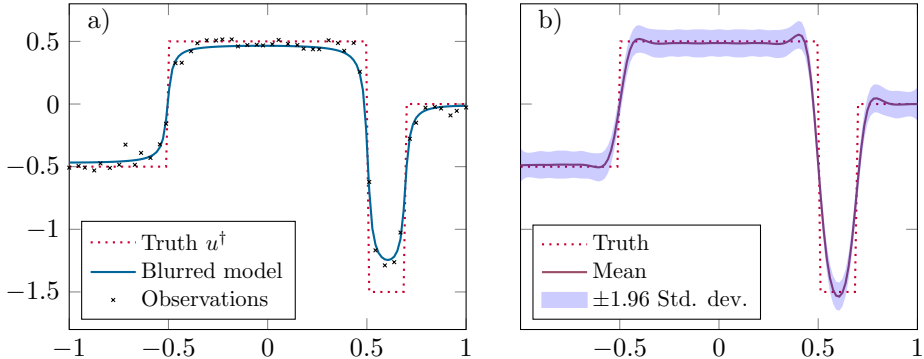

\subsubsection{Optimization-based methods}
For the optimization-based methods, the objective function to be minimized is
\begin{align}\label{eq:optobj}
J(\bs w) = \frac{1}{2\eta^2} \left\| A \left(\Psi (\bs w) \right) - \bs y_{\text{obs}} \right\|_2^2 + R(\bs w).
\end{align}
For Gaussian weights, the regularization $R(\bs w)$ is
$R_{\text{G}}(\bs w)$ and for Cauchy weights it is
$R_{\text{C}}(\bs w)$, defined as
\begin{align}\label{eq:regularization}
  R_{\text{G}}(\bs w) := \frac 1 2 \sum_i w_i^2, \quad R_{\text{C}}(\bs w) := \sum_i \log(1 + w_i^2),
\end{align}
where the summation is over all weights $\bs w$. We compare reconstructions for three regularizations, namely:
\begin{enumerate}[(i)]
\item Fully Gaussian weights,
\item Cauchy-Gaussian weights, i.e., all weights except those in the
  last layer follow a Cauchy distribution. The weights in the last
  layer are Gaussian,
\item Fully Cauchy weights.
\end{enumerate}
For each optimization, we first use $500$ iterations of the Adaptive Moment
Estimation (Adam) method, which adaptively changes the
stepsize. Following that, we use $1500$ iterations using the
Limited-memory Broyden–Fletcher–Goldfarb–Shannon (L-BFGS) algorithm
\cite{NocedalWright06} for a faster convergence to a local
minimum. Examples of reconstructions of the parameter function $u$
for the different regularizations are shown in
Figure~\ref{fig:1drecons}. We observe that while optimizations from
different initializations typically result in different weights, the
differences in the outputs $u$ are small. Moreover, the
reconstructions using Gaussian regularizations are smooth while the
reconstructions with Cauchy regularizations capture the
discontinuities in the parameter function better, although overall the
differences are rather small.

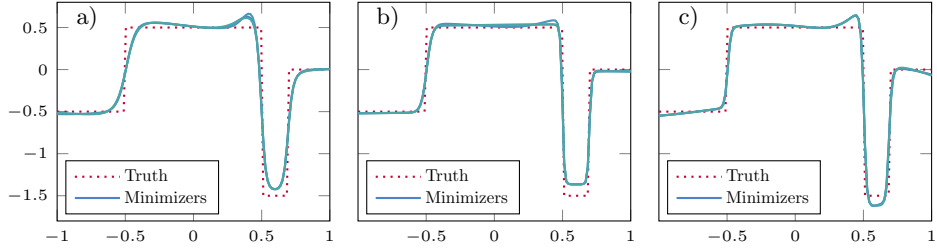
\begin{figure}[tb]\centering
  \begin{tikzpicture}
    \begin{scope}[xshift=0cm]
    \begin{axis}[width=.41\columnwidth,xmin=-1,xmax=1,ymin=-1.8,ymax=0.8,compat=1.13, ytick ={-1.5, -1, -0.5, 0, 0.5}, ticklabel style = {font=\tiny}, legend pos=south west, legend cell align={left}, legend style={nodes={scale=0.65, transform shape}}]
      \addplot[mark=none, black, dotted, color=blue!20!red, thick] table [x=x,y=U]{plot_paper/ex1/data/1dtruth.txt};
      \addlegendentry{Truth}
      \addplot[color=blue!70!green!70!white,mark=none,thick] table [x=x,y=recon1]{plot_paper/ex1/opt/Gau/recons.csv};
      \addplot[color=blue!60!green!70!white,mark=none,thick] table [x=x,y=recon3]{plot_paper/ex1/opt/Gau/recons.csv};
      \addplot[color=blue!50!green!70!white,mark=none,thick] table [x=x,y=recon7]{plot_paper/ex1/opt/Gau/recons.csv};
      \addlegendentry{Minimizers}
    \end{axis}
    \node[color=black] at (0.4,2.7cm) {\small a)};
	\end{scope}
	\begin{scope}[xshift=4.0cm]
    \begin{axis}[width=.41\columnwidth,xmin=-1,xmax=1,ymin=-1.8,ymax=0.8,compat=1.13, xtick ={-0.5, 0, 0.5, 1}, yticklabels=\empty, ticklabel style = {font=\tiny}, legend pos=south west, legend cell align={left}, legend style={nodes={scale=0.65, transform shape}}]
      \addplot[mark=none, black,dotted, color=blue!20!red, thick] table [x=x,y=U]{plot_paper/ex1/data/1dtruth.txt};
      \addlegendentry{Truth}
      \addplot[color=blue!70!green!70!white,mark=none,thick] table [x=x,y=recon1]{plot_paper/ex1/opt/Cau_Gau/recons.csv};
      \addplot[color=blue!60!green!70!white,mark=none,thick] table [x=x,y=recon2]{plot_paper/ex1/opt/Cau_Gau/recons.csv};
      \addplot[color=blue!50!green!70!white,mark=none,thick] table [x=x,y=recon10]{plot_paper/ex1/opt/Cau_Gau/recons.csv};
      \addlegendentry{Minimizers}
    \end{axis}
    \node[color=black] at (0.4,2.7cm) {\small b)};
	\end{scope}
	\begin{scope}[xshift=8cm]
    \begin{axis}[width=.41\columnwidth,xmin=-1,xmax=1,ymin=-1.8,ymax=0.8,compat=1.13, xtick ={-0.5, 0, 0.5, 1}, yticklabels=\empty, ticklabel style = {font=\tiny}, legend pos=south west, legend cell align={left}, legend style={nodes={scale=0.65, transform shape}}]
      \addplot[mark=none, black,dotted, color=blue!20!red, thick] table [x=x,y=U]{plot_paper/ex1/data/1dtruth.txt};
      \addlegendentry{Truth}
      \addplot[color=blue!70!green!70!white,mark=none,thick] table [x=x,y=recon1]{plot_paper/ex1/opt/Cau/recons.csv};
      \addplot[color=blue!60!green!70!white,mark=none,thick] table [x=x,y=recon2]{plot_paper/ex1/opt/Cau/recons.csv};
      \addplot[color=blue!50!green!70!white,mark=none,thick] table [x=x,y=recon3]{plot_paper/ex1/opt/Cau/recons.csv};
      \addlegendentry{Minimizers}
    \end{axis}
    \node[color=black] at (0.4,2.7cm) {\small c)};
	\end{scope}
\end{tikzpicture}
\caption{Shown are reconstructions using different initializations obtained through
  optimization for Problem~\ref{ex1:fun_appro}. The results correspond
  to the Gaussian (a), Cauchy-Gaussian (b), and fully Cauchy
  (c) weights.  \label{fig:1drecons}}
\end{figure}

We also study the uncertainty based
on these reconstructions obtained with the ensemble
method (Sec.~\ref{sec:ensemble}) and the last-layer Gaussian regression method
(Sec.~\ref{sec:regression}). The results obtained with both approaches for
Cauchy-Gaussian weights are shown in Figure~\ref{fig:1densemblegr}. A quantitative summary of these results using the ensemble method is shown in Table~\ref{tb:tab1}. Here, we report the quantities
${\|\mathbb E \left[ u \right] -
  u^{\dagger}\|_{L^1}}/{\|u^{\dagger}\|_{L^1}}$ and the mean and
standard deviation of
${\| u - u^{\dagger}\|_{L^1}}/{\|u^{\dagger}\|_{L^1}}$, where
$u^{\dagger}$ and $u$ denote the truth and reconstructions,
respectively. Based on the
numerical reconstructions and the $L^1$ relative error, we observe
that the results obtained with Cauchy-Gaussian and fully Cauchy
weights better fit the discontinuities of the truth parameter
better.

\begin{figure}[tb]\centering
  \begin{tikzpicture}
    \begin{scope}[xshift=0cm]
    \begin{axis}[width=.54\columnwidth,xmin=-1,xmax=1,ymin=-1.8,ymax=0.8,compat=1.13, ytick ={-1.5, -1, -0.5, 0, 0.5}, legend pos=south west, legend cell align={left}, legend style={nodes={scale=0.65, transform shape}}]
      \addplot[mark=none, black,dotted, color=blue!20!red, thick] table [x=x,y=U]{plot_paper/ex1/data/1dtruth.txt};
      \addlegendentry{Truth}
      \addplot[color=black!40!orange,mark=none,thick] table [x=x,y=mean]{plot_paper/ex1/opt/ensemble_caugau.txt};
      \addlegendentry{Mean}
      \errorband[blue, opacity=0.2]{plot_paper/ex1/opt/ensemble_caugau.txt}{x}{mean}{std}
      \addlegendentry{$\pm 1.96$ Std. dev.}
    \end{axis}
    \node[color=black] at (0.4,4.05cm) {a)};
	\end{scope}
	\begin{scope}[xshift=6cm]
    \begin{axis}[width=.54\columnwidth,xmin=-1,xmax=1,ymin=-1.8,ymax=0.8,compat=1.13, xtick ={-0.5, 0, 0.5, 1}, yticklabels=\empty, legend pos=south west, legend cell align={left}, legend style={nodes={scale=0.65, transform shape}}]
      \addplot[mark=none, black,dotted, color=blue!20!red, thick] table [x=x,y=U]{plot_paper/ex1/data/1dtruth.txt};
      \addlegendentry{Truth}
      \addplot[color=black!40!orange,mark=none,thick] table [x=x,y=mean]{plot_paper/ex1/opt/gpr_caugau.txt};
      \addlegendentry{Mean}
      \errorband[blue, opacity=0.2]{plot_paper/ex1/opt/gpr_caugau.txt}{x}{mean}{std}
      \addlegendentry{$\pm 1.96$ Std. dev.}
    \end{axis}
    \node[color=black] at (0.4,4.05cm) {b)};
	\end{scope}
\end{tikzpicture}
\caption{Shown are the means and pointwise standard deviations obtained
  with the ensemble method (a) and the last-layer Gaussian regression
  method (b). Both results are for Cauchy-Gaussian priors for
  Problem~\ref{ex1:fun_appro}. \label{fig:1densemblegr}}
\end{figure}
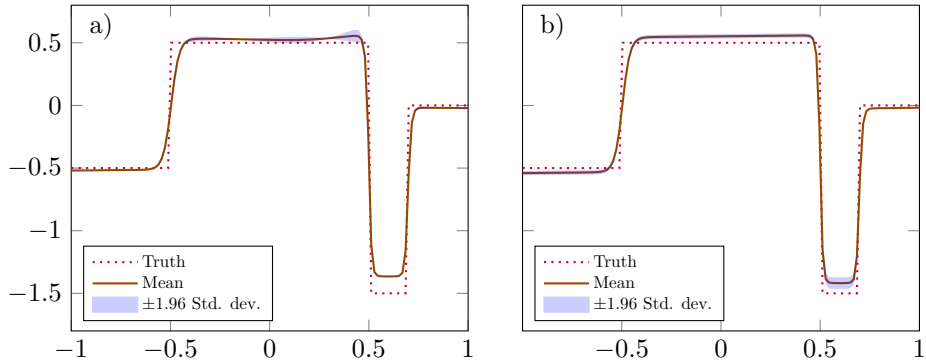

\begin{table}[ht!]
\centering
\caption{Relative $L^1$-error of reconstructions obtained using the ensemble method with
  Gaussian, Cauchy-Gaussian and Cauchy priors in
  Problem~\ref{ex1:fun_appro}.}
\begin{tabular}{ | p {3.5cm} | p {2.4cm} | p {2.9cm}| p {2.4cm}|}
\hline
\hfil Regularizations & \hfil Gaussian & \hfil Cauchy-Gaussian & \hfil Cauchy \\
\hline
 $\|\mathbb E \left[ u \right] - u^{\dagger}\|_{L^1}/\|u^{\dagger}\|_{L^1}$ &  \hfil 8.35 & \hfil 5.90 & \hfil 5.53 \\
 $\mathbb E [ \|u - u^{\dagger}\|_{L^1} ]/\|u^{\dagger}\|_{L^1}$ &  \hfil 8.44 & \hfil 5.91 & \hfil 5.56 \\
 $\text{Std} [\|u - u^{\dagger}\|_{L^1} ]/\|u^{\dagger}\|_{L^1}$ &  \hfil 0.10 & \hfil 0.11 & \hfil 0.13 \\
\hline
\end{tabular}
\label{tb:tab1}
\end{table}

\subsubsection{MCMC sampling}

We also aim at exploring the posterior using the MCMC method. We use a
local minimum obtained with the optimization method as the starting
point to reduce the burn-in phase. The pCN method is used to generate
$5 \times 10^6$ samples as discussed in
Sec.~\ref{subsubsec:MCMC_pcn}. We use an adaptive method
to adjust the value of $\beta$ to  maintain an
acceptance rate of about $30 \%$. Note that we always use Gaussian
increments in the pCN method. Numerical results obtained with each neural network prior are shown in Figure~\ref{fig:1dpcn}. We observe that the pCN samples based on
Cauchy weights capture the discontinuities better. The sample mean and
uncertainty region indicate a better fit to the truth model when we
use the Cauchy-Gaussian or the Cauchy neural network prior. We also
compare the $L^1$ relative error between the truth $u^\dagger$ and the
samples $u$ generated using each prior in Table~\ref{tb:tab2}. Note
that the landscape of the posterior is multi-modal due to the
nonlinearity of the neural network priors. Thus, these results might not fully explore the posterior distribution
but only provide a local estimate of the uncertainty around one (or
several) local minimizers.

\begin{figure}[tb]\centering
  \begin{tikzpicture}
    \begin{scope}[xshift=0cm]
    \begin{axis}[width=.41\columnwidth,xmin=-1,xmax=1,ymin=-1.8,ymax=0.8,compat=1.13, ytick ={-1.5, -1, -0.5, 0, 0.5}, ticklabel style = {font=\tiny}, legend pos=south west, legend cell align={left},legend style={nodes={scale=0.65, transform shape}}]
      \addplot[mark=none, black,dotted, color=blue!20!red, thick] table [x=x,y=U]{plot_paper/ex1/data/1dtruth.txt};
      \addlegendentry{Truth}
      \addplot[color=blue!40!green!70!white,mark=none,thick] table [x=x,y=r1]{plot_paper/ex1/MCMC/Gau/1dpcngau.txt};
      \addplot[color=blue!50!green!60!white,mark=none,thick] table [x=x,y=r2]{plot_paper/ex1/MCMC/Gau/1dpcngau.txt};
      \addplot[color=blue!60!green!50!white,mark=none,thick] table [x=x,y=r3]{plot_paper/ex1/MCMC/Gau/1dpcngau.txt};
      \addplot[color=blue!70!green!40!white,mark=none,thick] table [x=x,y=r4]{plot_paper/ex1/MCMC/Gau/1dpcngau.txt};
      \addlegendentry{Samples}
    \end{axis}
    \node[color=black] at (0.4,2.7cm) {\small a)};
	\end{scope}
	\begin{scope}[xshift=4cm]
    \begin{axis}[width=.41\columnwidth,xmin=-1,xmax=1,ymin=-1.8,ymax=0.8,compat=1.13, xtick ={-0.5, 0, 0.5, 1}, yticklabels=\empty, ticklabel style = {font=\tiny}, legend pos=south west, legend cell align={left},legend style={nodes={scale=0.65, transform shape}}]
      \addplot[mark=none, black,dotted, color=blue!20!red, thick] table [x=x,y=U]{plot_paper/ex1/data/1dtruth.txt};
      \addlegendentry{Truth}
      \addplot[color=blue!40!green!70!white,mark=none,thick] table [x=x,y=r1]{plot_paper/ex1/MCMC/Cau_Gau//1dpcncaugau.txt};
      \addplot[color=blue!50!green!60!white,mark=none,thick] table [x=x,y=r2]{plot_paper/ex1/MCMC/Cau_Gau/1dpcncaugau.txt};
      \addplot[color=blue!60!green!50!white,mark=none,thick] table [x=x,y=r3]{plot_paper/ex1/MCMC/Cau_Gau/1dpcncaugau.txt};
      \addplot[color=blue!70!green!40!white,mark=none,thick] table [x=x,y=r4]{plot_paper/ex1/MCMC/Cau_Gau/1dpcncaugau.txt};
      \addlegendentry{Samples}
    \end{axis}
    \node[color=black] at (0.4,2.7cm) {\small b)};
	\end{scope}
	\begin{scope}[xshift=8cm]
    \begin{axis}[width=.41\columnwidth,xmin=-1,xmax=1,ymin=-1.8,ymax=0.8,compat=1.13, xtick ={-0.5, 0, 0.5, 1}, yticklabels=\empty, ticklabel style = {font=\tiny}, legend pos=south west, legend cell align={left}, legend style={nodes={scale=0.65, transform shape}}]
      \addplot[mark=none, black,dotted, color=blue!20!red, thick] table [x=x,y=U]{plot_paper/ex1/data/1dtruth.txt};
      \addlegendentry{Truth}
      \addplot[color=blue!40!green!70!white,mark=none,thick] table [x=x,y=r1]{plot_paper/ex1/MCMC/Cau/1dpcncau.txt};
      \addplot[color=blue!50!green!60!white,mark=none,thick] table [x=x,y=r2]{plot_paper/ex1/MCMC/Cau/1dpcncau.txt};
      \addplot[color=blue!60!green!50!white,mark=none,thick] table [x=x,y=r3]{plot_paper/ex1/MCMC/Cau/1dpcncau.txt};
      \addplot[color=blue!70!green!40!white,mark=none,thick] table [x=x,y=r4]{plot_paper/ex1/MCMC/Cau/1dpcncau.txt};
      \addlegendentry{Samples}
    \end{axis}
    \node[color=black] at (0.4,2.7cm) {\small c)};
	\end{scope}

	\begin{scope}[xshift = 0cm, yshift=-4cm]
    \begin{axis}[width=.41\columnwidth,xmin=-1,xmax=1,ymin=-1.8,ymax=0.8,compat=1.13, ytick ={-1.5, -1, -0.5, 0, 0.5}, ticklabel style = {font=\tiny}, legend pos=south west, legend cell align={left}, legend style={nodes={scale=0.65, transform shape}}]
      \addplot[mark=none, black,dotted, color=blue!20!red, thick] table [x=x,y=U]{plot_paper/ex1/data/1dtruth.txt};
      \addlegendentry{Truth}
      \addplot[color=blue!40!orange,mark=none,thick] table [x=x,y=mean]{plot_paper/ex1/MCMC/Gau/1dpcngaumeanstd.txt};
      \addlegendentry{Mean}
      \errorband[blue, opacity=0.2]{plot_paper/ex1/MCMC/Gau/1dpcngaumeanstd.txt}{x}{mean}{std}
      \addlegendentry{$\pm 1.96$ Std. dev}
    \end{axis}
    \node[color=black] at (0.4,2.7cm) {\small d)};
	\end{scope}
	\begin{scope}[xshift = 4.0cm, yshift=-4cm]
    \begin{axis}[width=.41\columnwidth,xmin=-1,xmax=1,ymin=-1.8,ymax=0.8,compat=1.13, xtick ={-0.5, 0, 0.5, 1}, yticklabels=\empty, ticklabel style = {font=\tiny}, legend pos=south west, legend cell align={left}, legend style={nodes={scale=0.65, transform shape}}]
      \addplot[mark=none, black,dotted, color=blue!20!red, thick] table [x=x,y=U]{plot_paper/ex1/data/1dtruth.txt};
      \addlegendentry{Truth}
      \addplot[color=blue!40!orange,mark=none,thick] table [x=x,y=mean]{plot_paper/ex1/MCMC/Cau_Gau/1dpcncaugaumeanstd.txt};
      \addlegendentry{Mean}
      \errorband[blue, opacity=0.2]{plot_paper/ex1/MCMC/Cau_Gau/1dpcncaugaumeanstd.txt}{x}{mean}{std}
      \addlegendentry{$\pm 1.96$ Std. dev.}
    \end{axis}
    \node[color=black] at (0.4,2.7cm) {\small e)};
	\end{scope}
	\begin{scope}[xshift = 8cm, yshift=-4cm]
    \begin{axis}[width=.41\columnwidth,xmin=-1,xmax=1,ymin=-1.8,ymax=0.8,compat=1.13, xtick ={-0.5, 0, 0.5, 1}, yticklabels=\empty, ticklabel style = {font=\tiny}, legend pos=south west, legend cell align={left}, legend style={nodes={scale=0.65, transform shape}}]
      \addplot[mark=none, black,dotted, color=blue!20!red, thick] table [x=x,y=U]{plot_paper/ex1/data/1dtruth.txt};
      \addlegendentry{Truth}
      \addplot[color=blue!40!orange,mark=none,thick] table [x=x,y=mean]{plot_paper/ex1/MCMC/Cau/1dpcncaumeanstd.txt};
      \addlegendentry{Mean}
      \errorband[blue, opacity=0.2]{plot_paper/ex1/MCMC/Cau/1dpcncaumeanstd.txt}{x}{mean}{std}
      \addlegendentry{$\pm 1.96$ Std. dev.}
    \end{axis}
    \node[color=black] at (0.4,2.7cm) {\small f)};
	\end{scope}
\end{tikzpicture}
\caption{Shown in (a,b,c) are samples and in (d,e,f) the uncertainty of
  posterior distributions with different neural network priors for
  Problem~\ref{ex1:fun_appro}. The plots correspond to the Gaussian
  (a,d), Cauchy-Gaussian (b,e), and Cauchy (c,f) neural network
  priors. \label{fig:1dpcn}}
\end{figure}
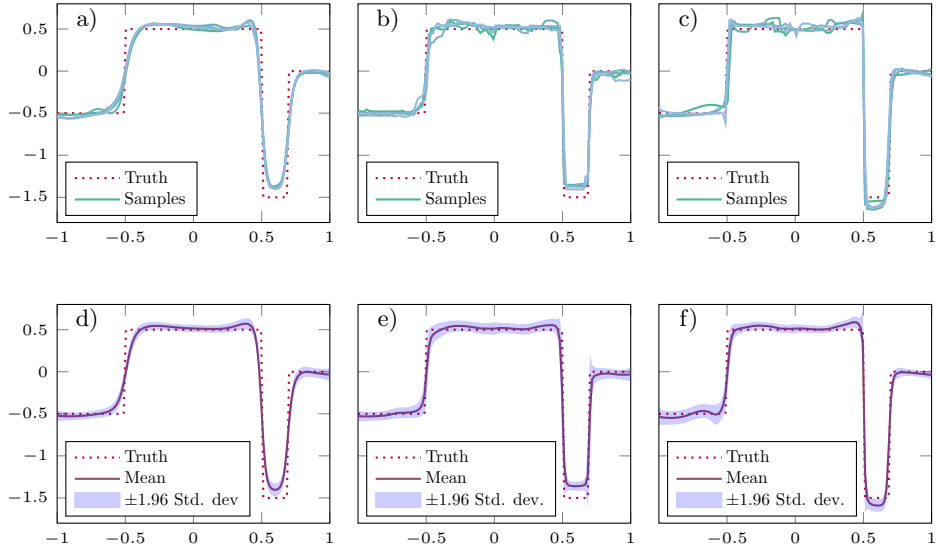

\begin{table}[ht!]
\centering
\caption{Relative $L^1$-error of samples computed by pCN
  with Gaussian, Cauchy-Gaussian and Cauchy priors in Problem~\ref{ex1:fun_appro}. }
\begin{tabular}{ | p {3.5cm} | p {2.4cm} | p {2.9cm}| p {2.4cm}|}
\hline
\hfil Neural network prior & \hfil Gaussian & \hfil Cauchy-Gaussian & \hfil Cauchy \\
\hline
 $\|\mathbb E \left[ u \right] - u^{\dagger}\|_{L^1}/\|u^{\dagger}\|_{L^1}$ &  \hfil 8.14 & \hfil 5.66 & \hfil 4.74 \\
 $\mathbb E [ \|u - u^{\dagger}\|_{L^1} ]/\|u^{\dagger}\|_{L^1}$ &  \hfil 8.57 & \hfil 6.45 & \hfil 5.63 \\
 $\text{Std} [\|u - u^{\dagger}\|_{L^1} ]/\|u^{\dagger}\|_{L^1}$ &  \hfil 0.63 & \hfil 0.57 & \hfil 0.73 \\
\hline
\end{tabular}
\label{tb:tab2}
\end{table}

\subsection{Two-dimensional deconvolution}

\begin{problem}\label{ex2:deblurring}
This is a two-dimensional deblurring problem on $\mathcal D = [-1, 1]^2$. The
forward model involves a PDE-solve, namely the solution of
\begin{subequations}\label{eq:deblurring_prob}
\begin{align}\label{eq:deblurring_prob:1}
(I - \kappa \Delta) y &= u \quad \text{in} \quad \mathcal D, \\
	\frac{\partial y}{\partial n} &= 0 \quad \text{on} \quad \partial \mathcal D,
\end{align}
\end{subequations}
where $y$ and $u$ are PDE solution and the unknown parameter field,
respectively.  In \eqref{eq:deblurring_prob}, $\kappa = 0.01$, which controls the
amount of blurring. The forward model is
denoted as
\begin{align*}
  \bs y_{\text{obs}} = A \left( u \right) + \bs \varepsilon, \quad \bs \varepsilon \sim  \mathcal N(0, \eta^2 I_{N_{\text{obs}}}),
\end{align*}
where $\eta=0.01$ and the forward operator is $A = P \circ B$. Here,
$B: L^infty(\mathcal D) \to \mathcal C^{\infty}$ is the PDE operator and
$P: \mathcal C^{\infty} \to \mathbb R^N_{\text{obs}}$ point evaluation
on a uniform grid of $14 \times 14$ points. We discretize the forward
PDE on a uniform mesh of $100 \times 100$ grid points using the
standard 5-point finite difference stencil.
\end{problem}

The true parameter function $u^\dagger$ and the blurred model are
shown in Figure~\ref{fig:2drecons}.  Similar to
Problem~\ref{ex1:fun_appro}, we test the optimization and pCN methods
on this problem with different neural network priors. For all tests,
we use the three-hidden-layer neural network structure with layer
widths of $[80, 80, 1000]$. Note that the input dimension of the
neural network is $2$, and the output dimension is $1$. This example
is computationally more costly than Problem~\ref{ex1:fun_appro} due to
the required PDE solves and the use of a wider network. Since the PDE
solve requires most of the computation time, we use an upfront
Cholesky decomposition of the discretized blurring operator to accelerate the computation.

\subsubsection{Optimization-based methods}
The objective to be minimized is formally as in the
one-dimensional case\label{eq:optobj}. We use the Adam method for the first $200$
steps followed by $1500$ iterations of L-BFGS.  The initial step size
is set to be $0.01$ for each method. Reconstructions obtained with
different regularizations and different initialization in the
optimization are shown in Figure~\ref{fig:2drecons}. We observe that
the reconstructions using the Cauchy neural network capture the edges
better, especially the middle edge and the right bottom block. In
contrast, the fully Gaussian neural network tends to result in smooth
reconstructions. We also note that the fully Cauchy neural network
results in better reconstructions of the linear ramp in the upper part
of the image compared to the Cauchy-Gaussian neural network.

\begin{figure}[tb]\centering
\begin{tikzpicture}
\node at (0,2cm) (img1) {\includegraphics[scale=0.31]{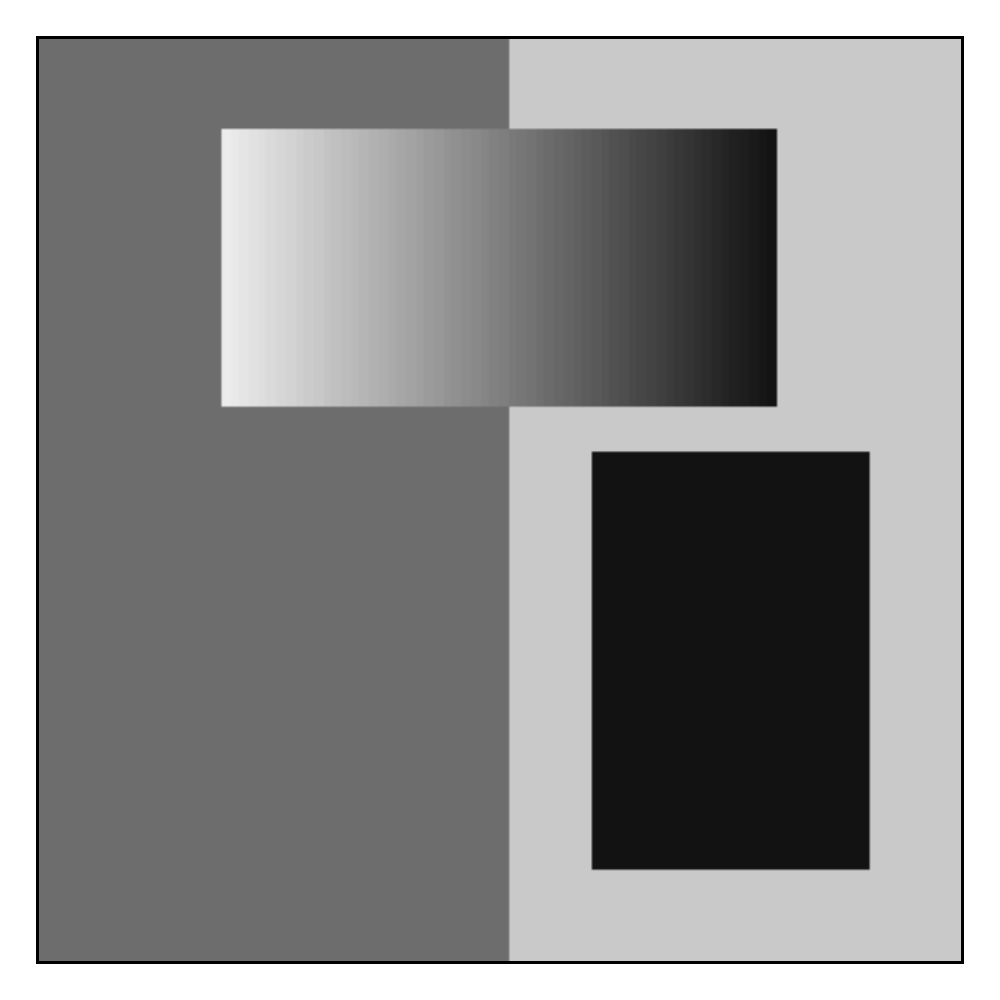}};
\node[above=of img1, node distance=0cm, yshift=-1.2cm,font=\color{black}] {Truth};
 \node at (3.1,2cm) (img12) {\includegraphics[scale=0.31]{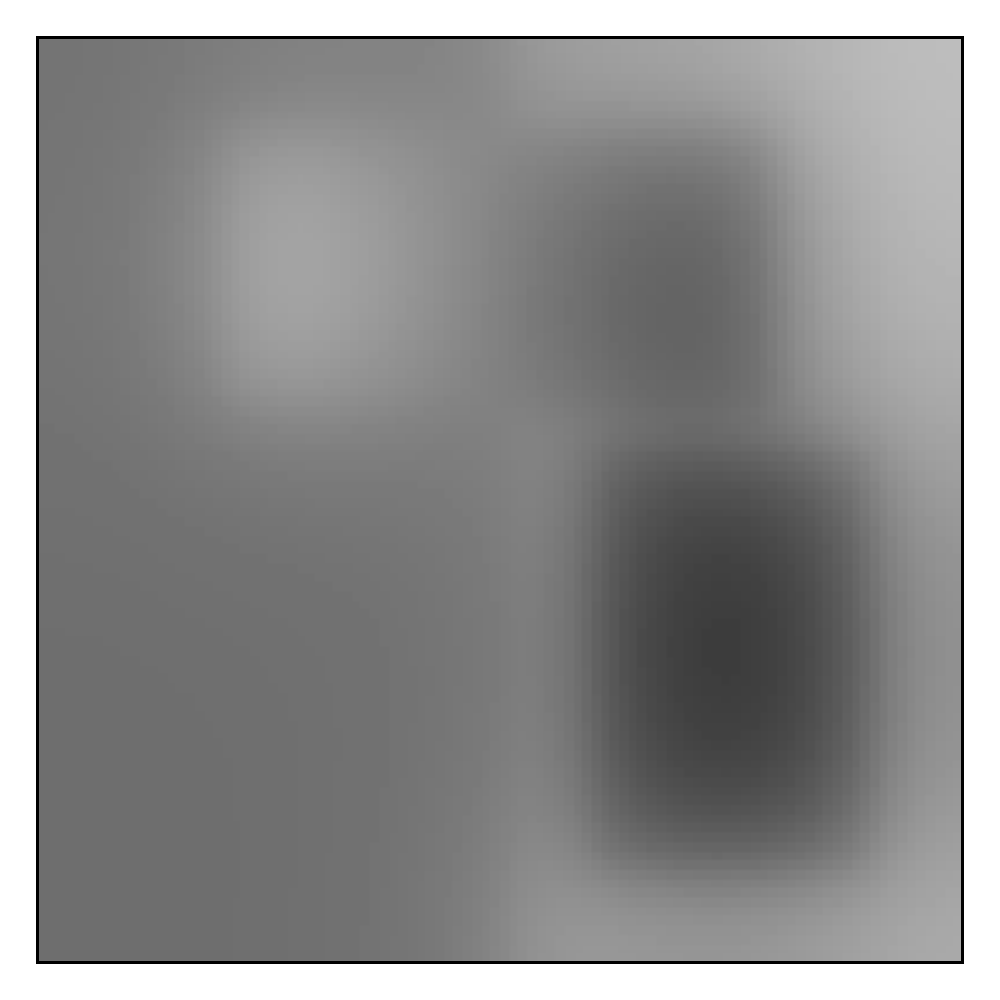}};
 \node[above=of img12, node distance=0cm, yshift=-1.2cm,font=\color{black}] {Convolution};
 \node at (7.8,1.7cm) (img13) {\includegraphics[scale=0.4]{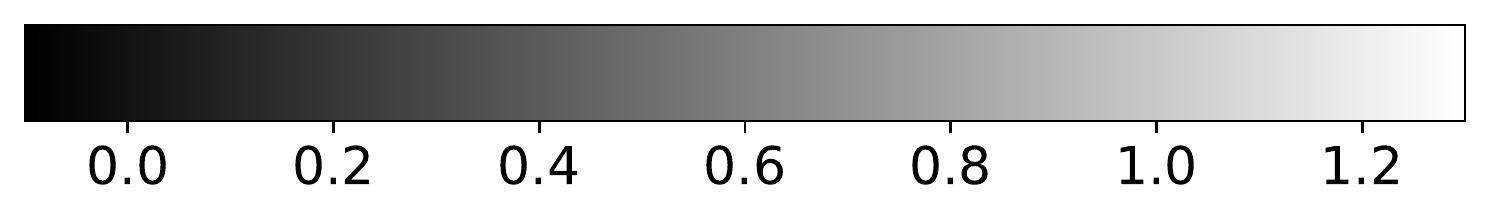}};
 \node[above=of img13, node distance=0cm, yshift=-1.2cm,font=\color{black}] {Colorbar $[-0.1, 1.3]$};

\node at (0,-1.5cm) (img2) {\includegraphics[scale=0.31]{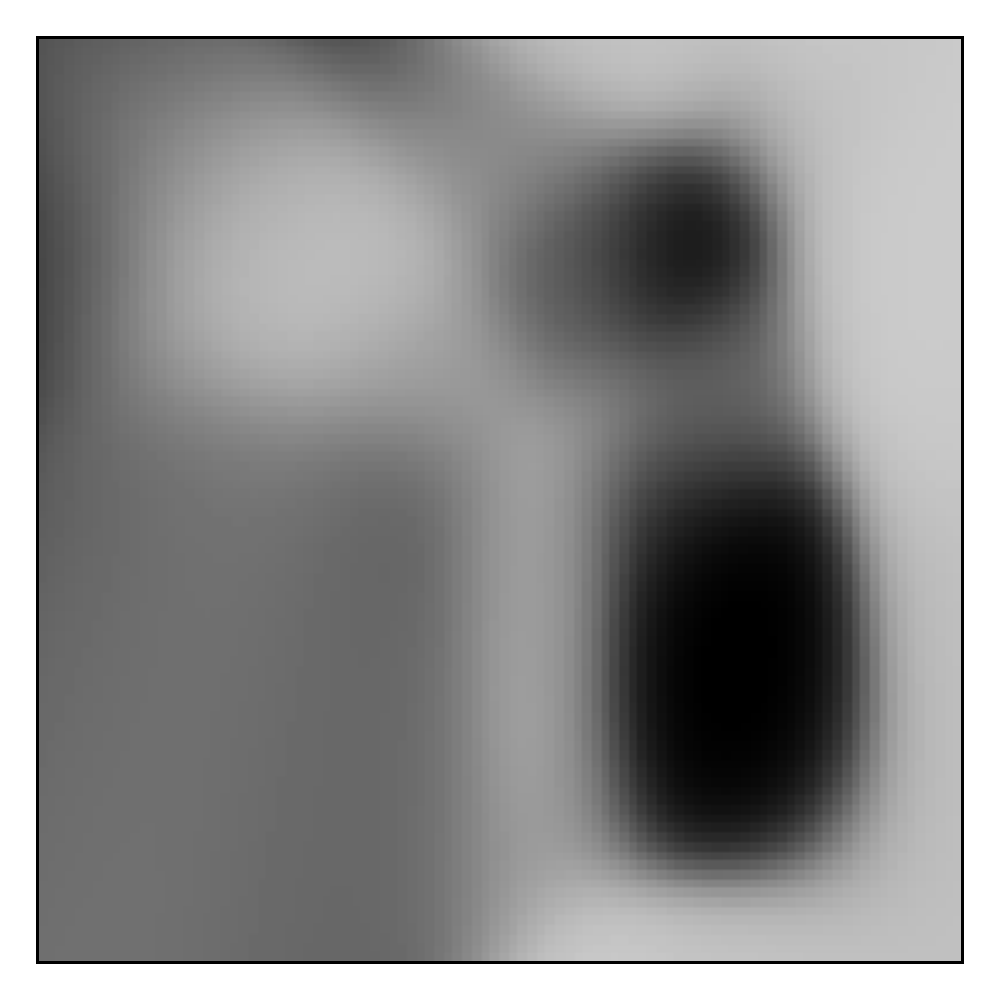}};
  \node[left=.1cm of img2, node distance=0cm, rotate=90, anchor=center, yshift = 0cm, font=\color{black}] {Gaussian};
 \node at (3.1,-1.5cm) {\includegraphics[scale=0.31]{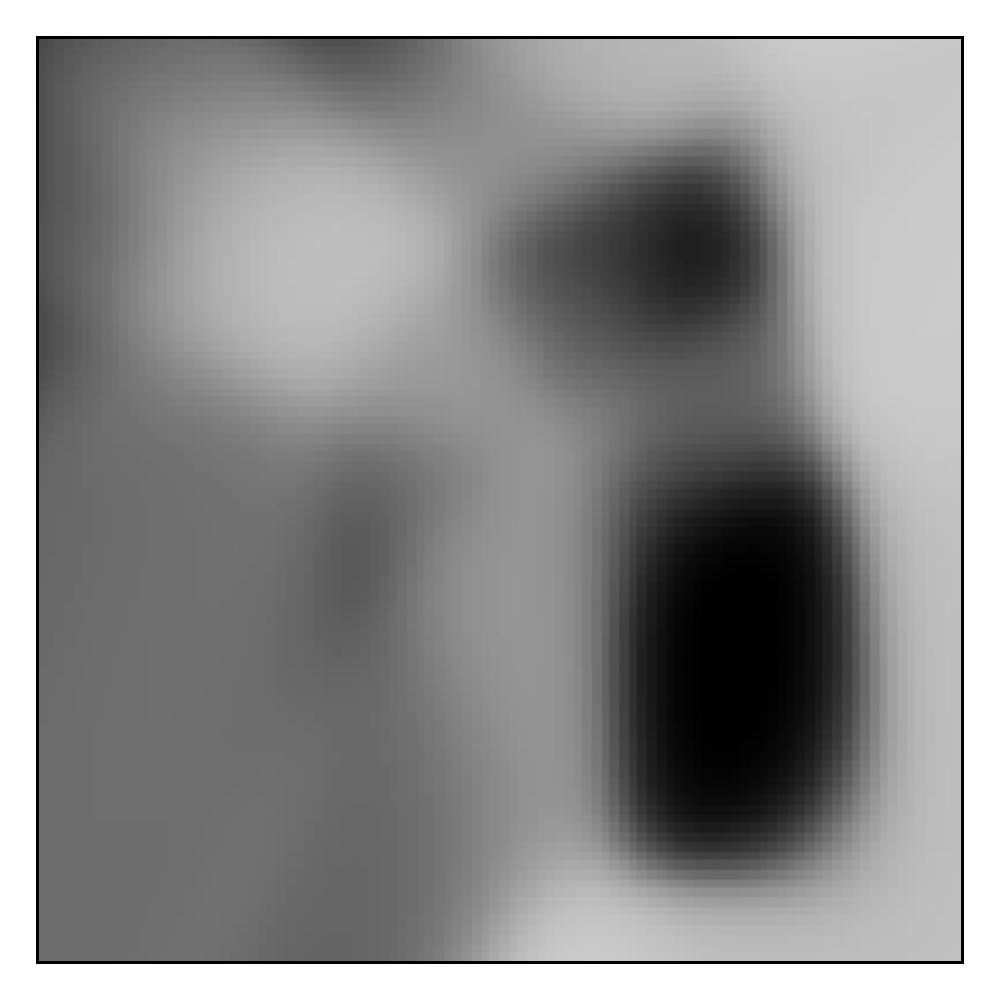}};
 \node at (6.2,-1.5cm) {\includegraphics[scale=0.31]{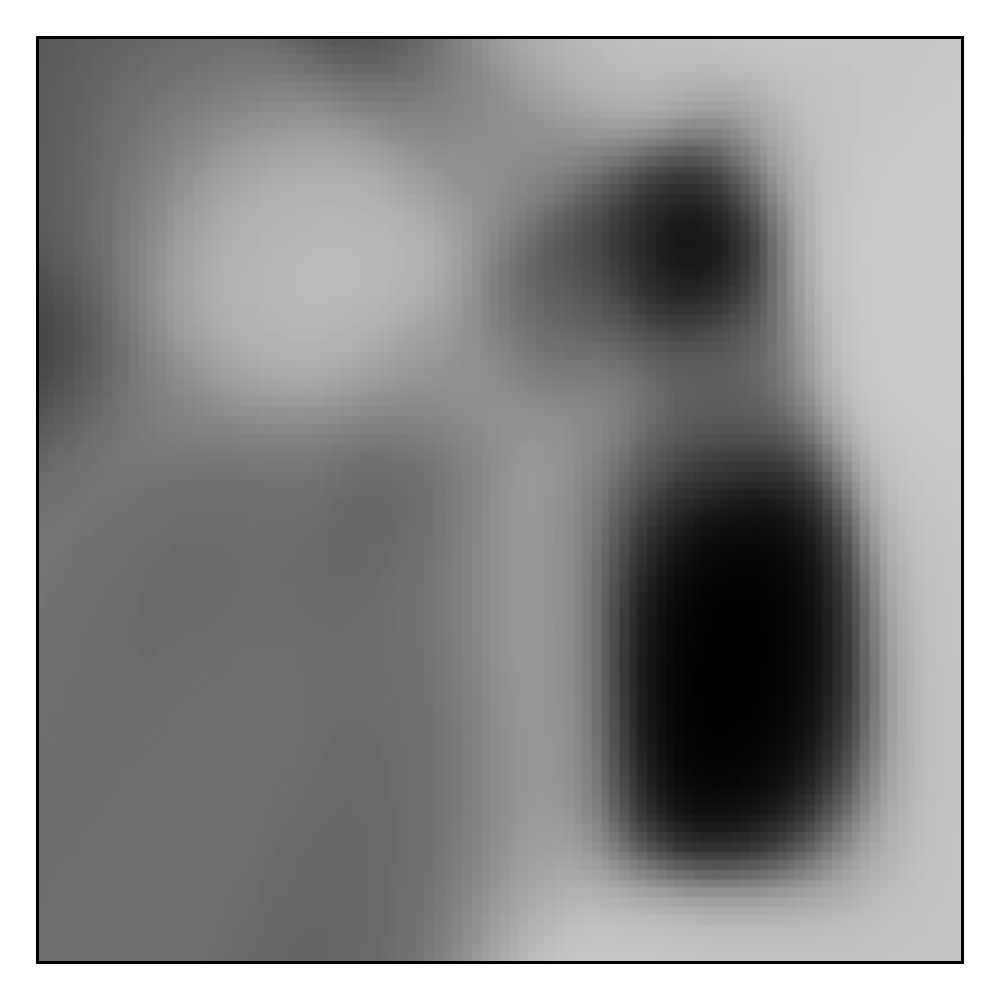}};
\node at (9.3,-1.5cm) {\includegraphics[scale=0.31]{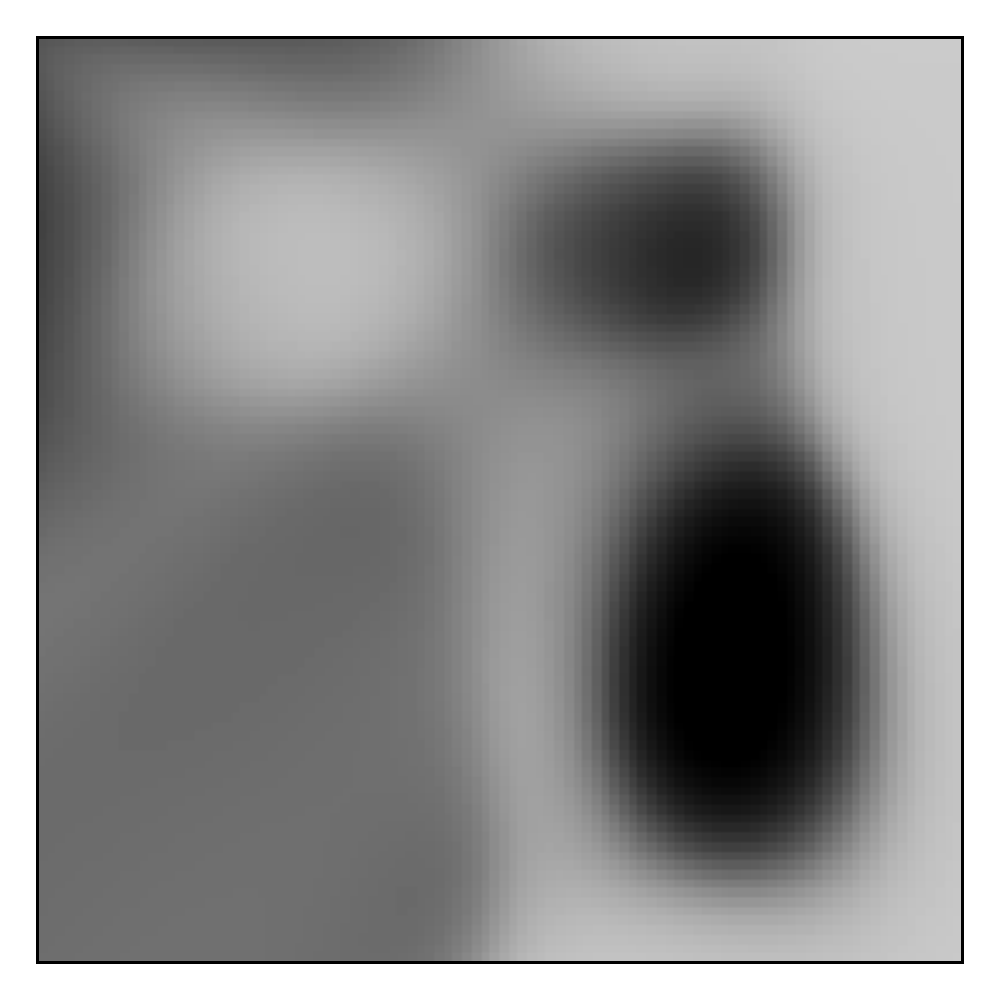}};

\node at (0,-5cm) (img3) {\includegraphics[scale=0.31]{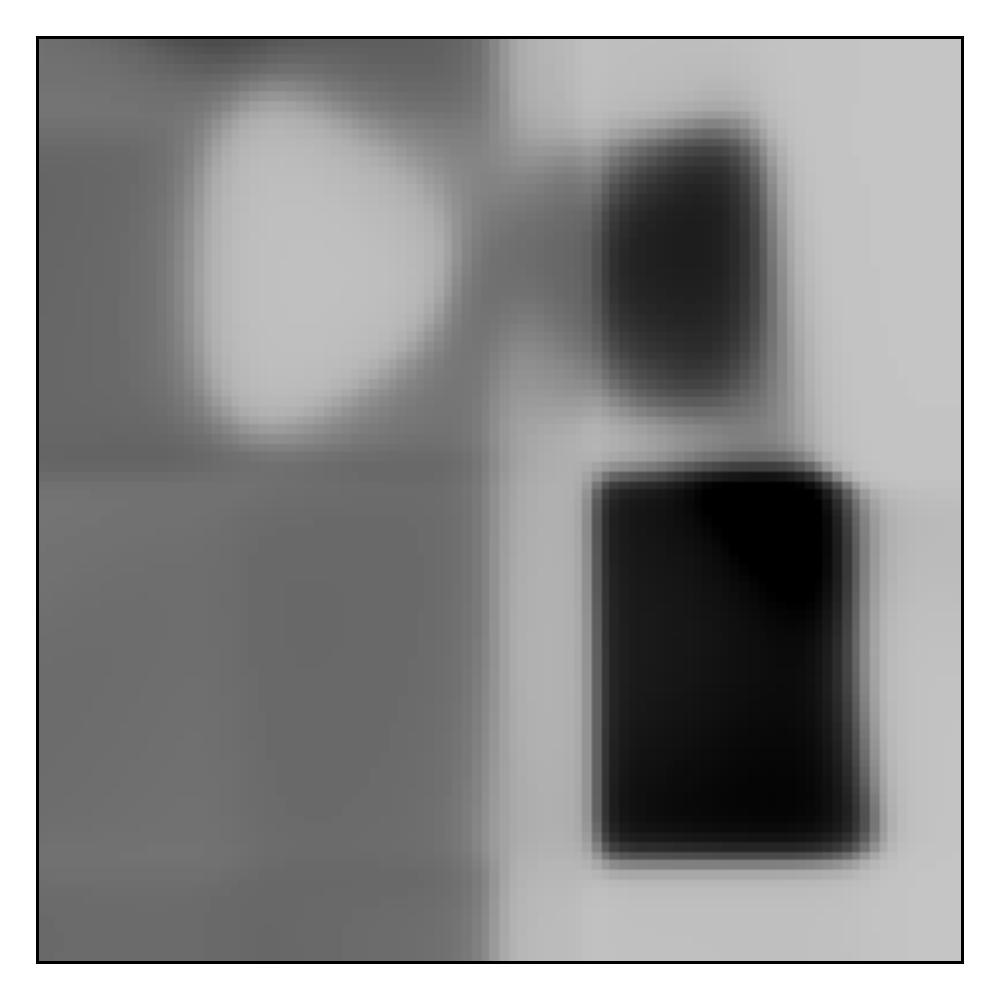}};
  \node[left=.1cm of img3, node distance=0cm, rotate=90, anchor=center, yshift = 0cm, font=\color{black}] {Cauchy-Gaussian};
 \node at (3.1,-5cm) {\includegraphics[scale=0.31]{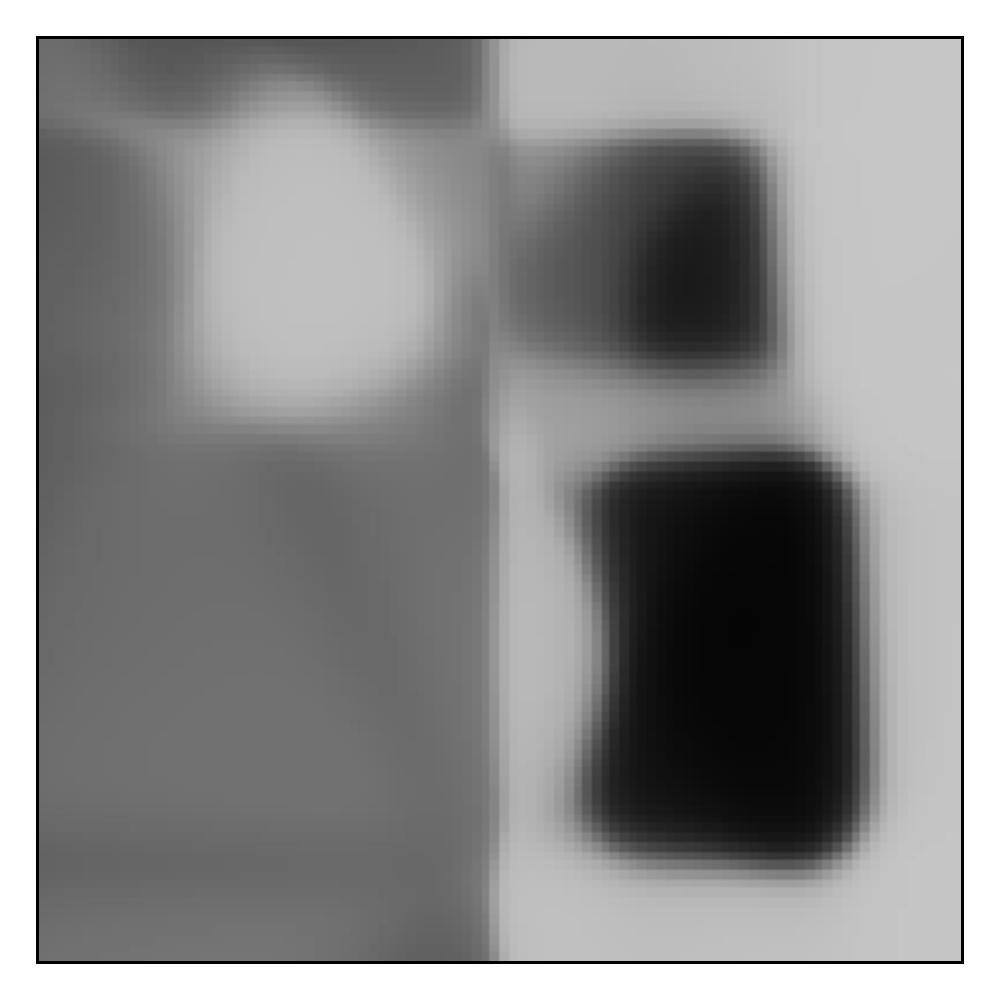}};
 \node at (6.2,-5cm) {\includegraphics[scale=0.31]{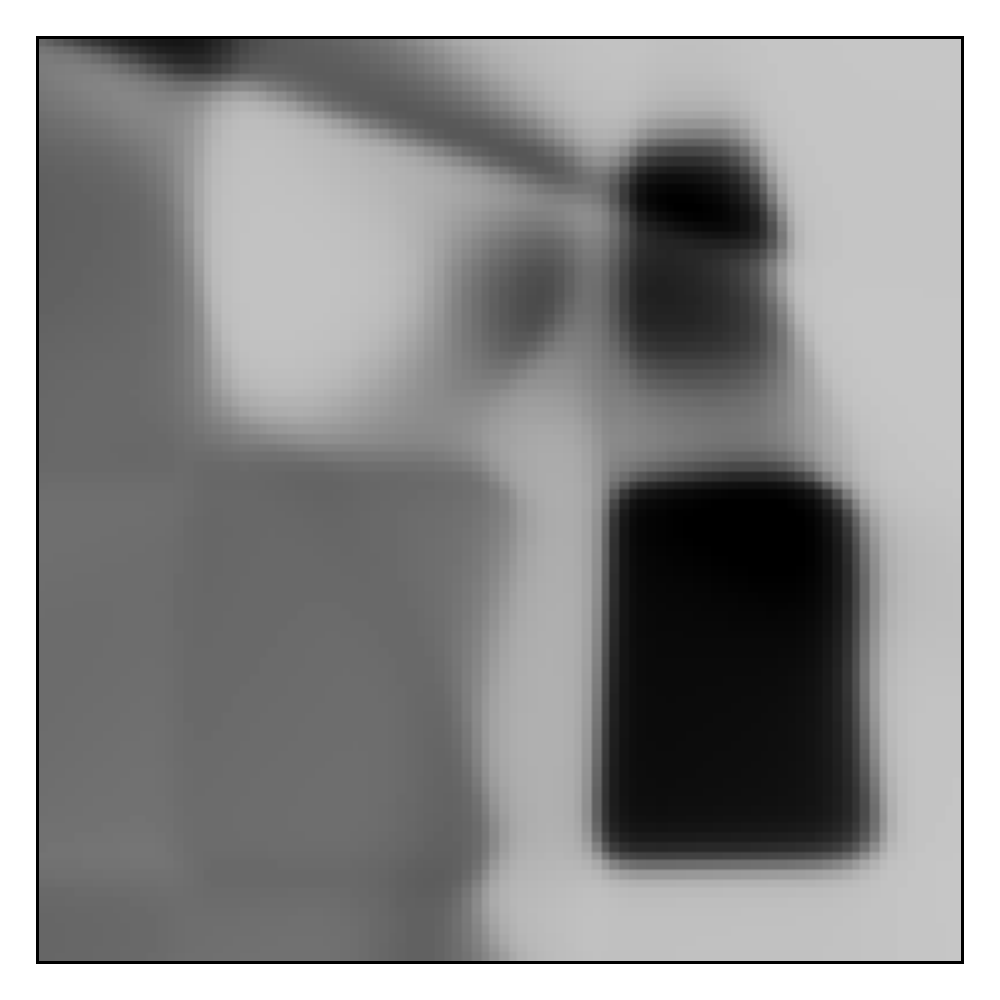}};
\node at (9.3,-5cm) {\includegraphics[scale=0.31]{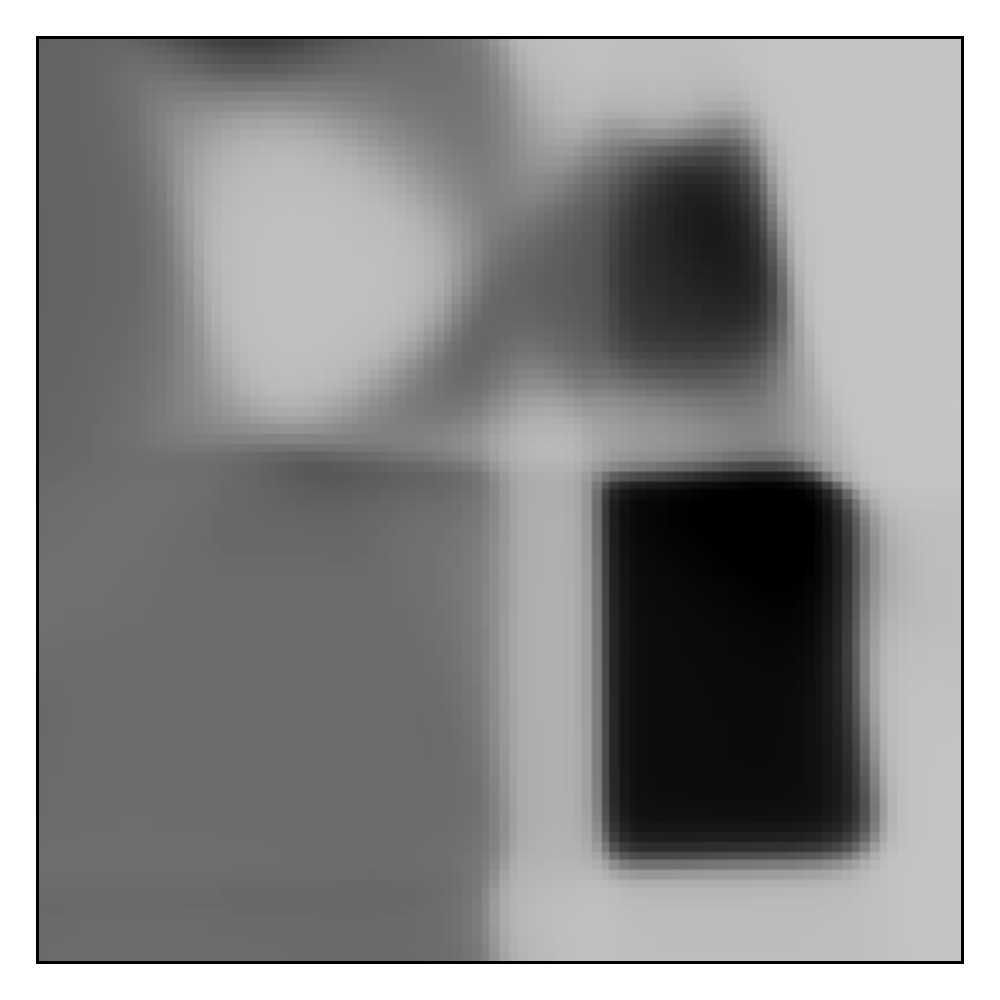}};

\node at (0,-8.5cm) (img4) {\includegraphics[scale=0.31]{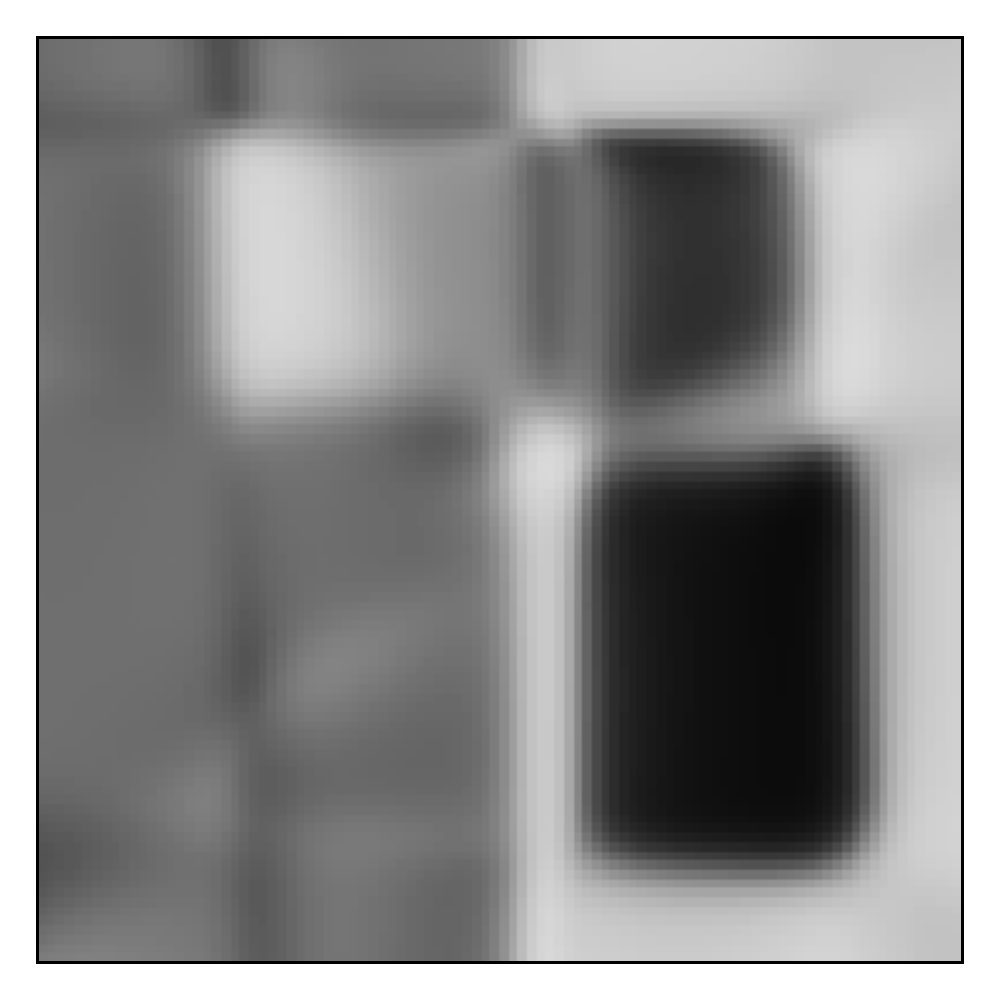}};
  \node[left=.1cm of img4, node distance=0cm, rotate=90, anchor=center, yshift = 0cm, font=\color{black}] {Cauchy};
 \node at (3.1,-8.5cm) {\includegraphics[scale=0.31]{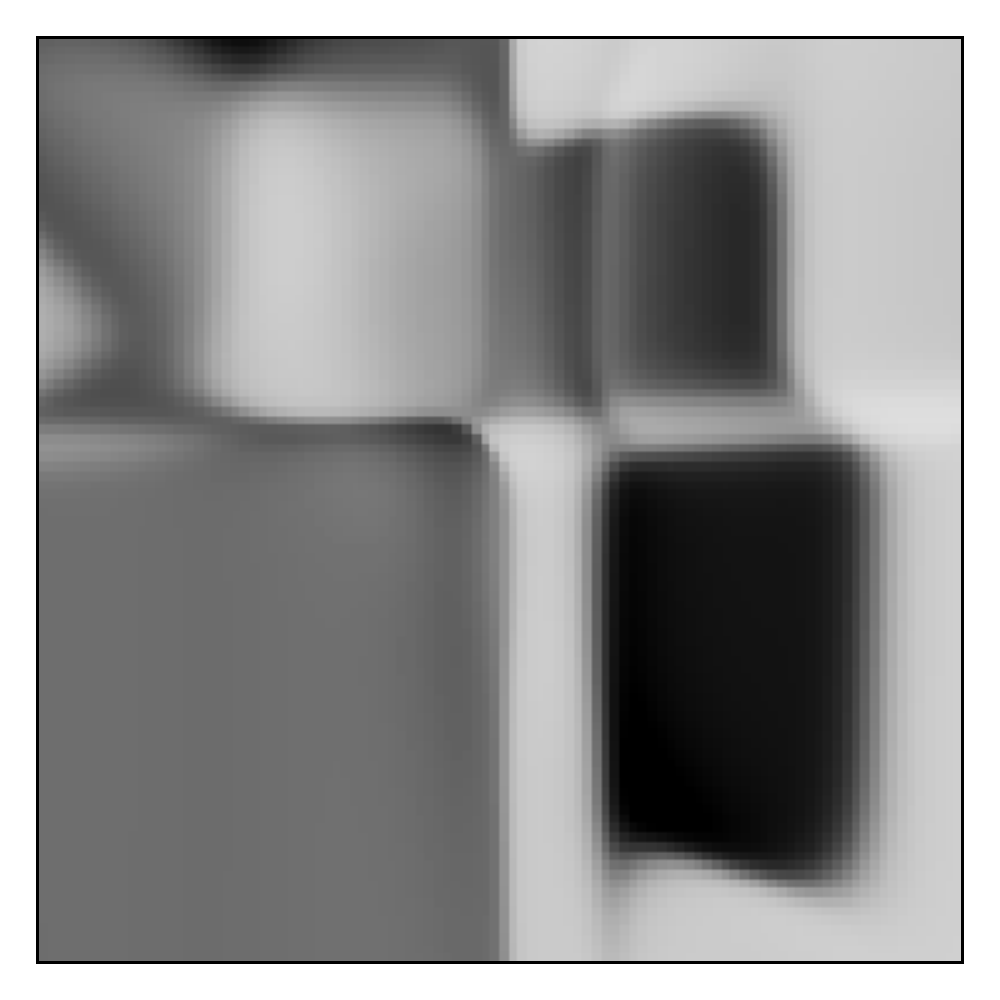}};
 \node at (6.2,-8.5cm) {\includegraphics[scale=0.31]{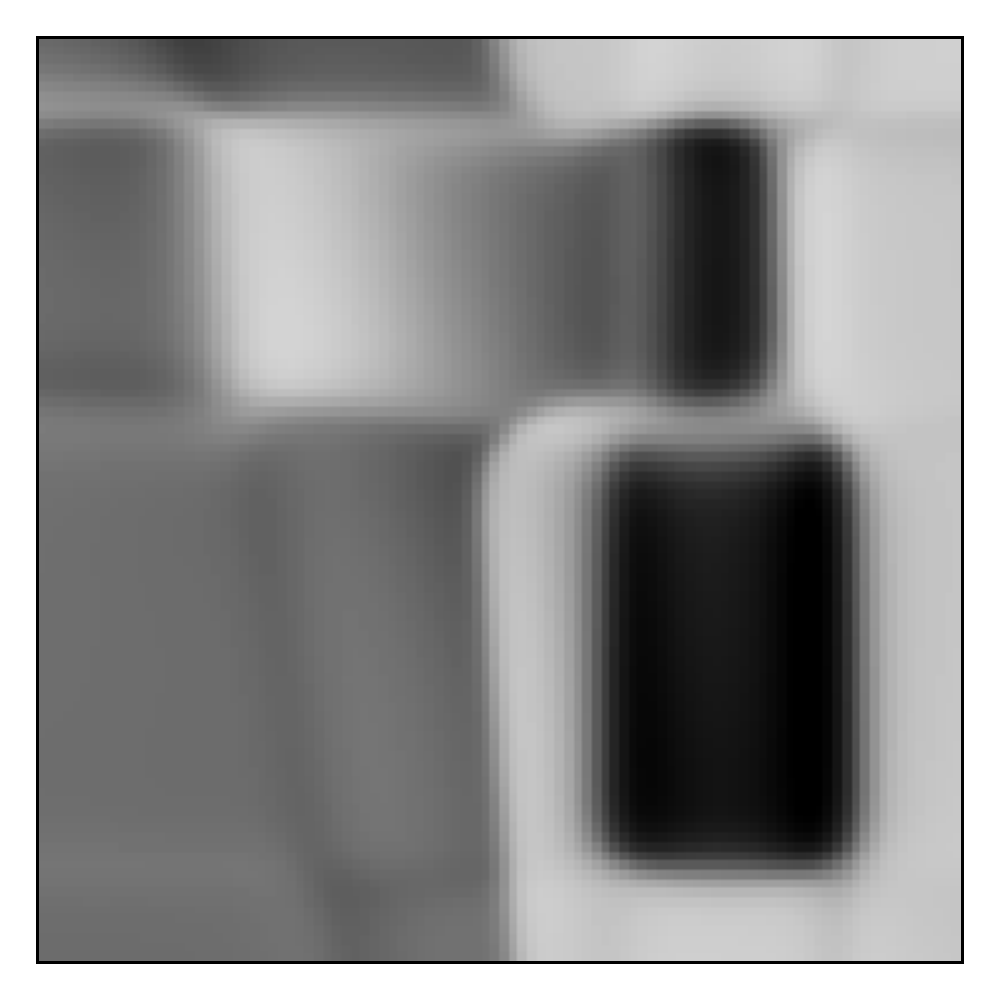}};
\node at (9.3,-8.5cm) {\includegraphics[scale=0.31]{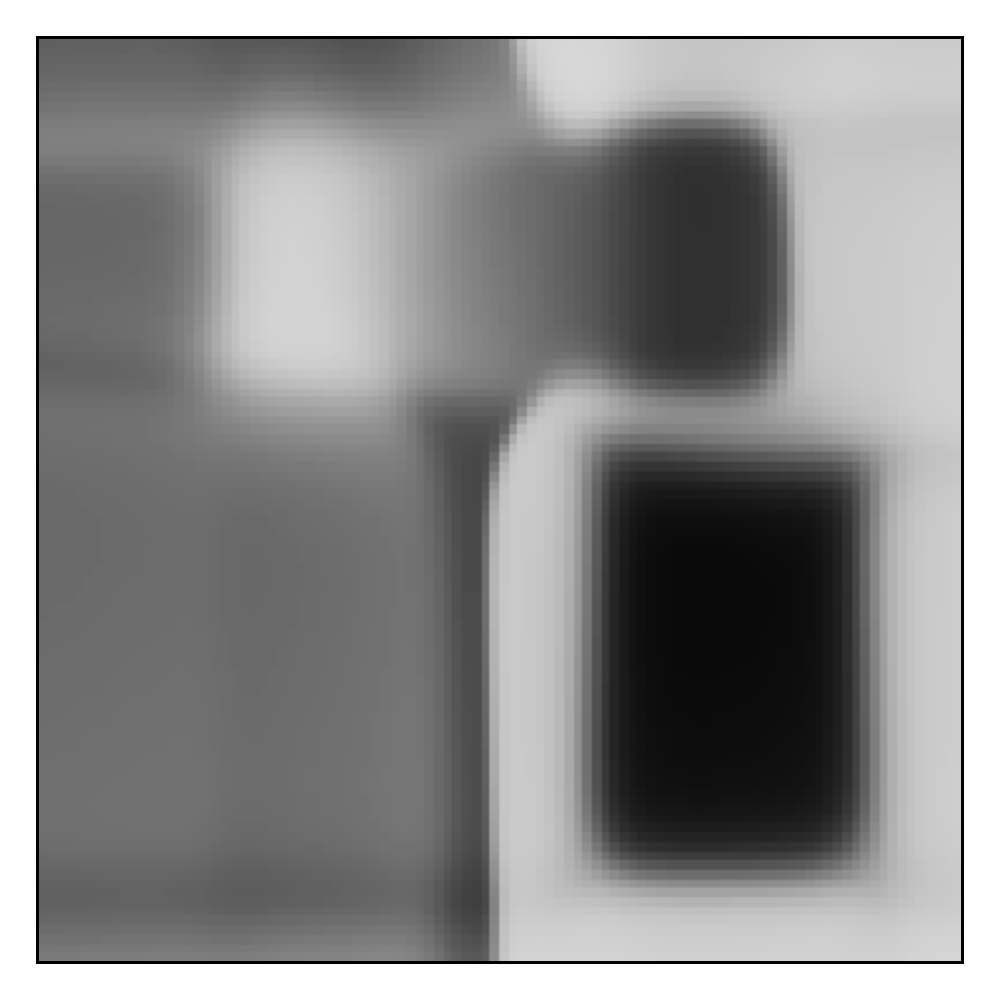}};
\end{tikzpicture}
\caption{Shown in the first row are the truth $u^\dagger$ and blurred
  models used for Problem~\ref{ex2:deblurring}. We show minimizers
  obtained using the optimization method with different
  initializations. Results are shown for Gaussian weights (second
  row), Cauchy-Gaussian weights (third row), and Cauchy weights (forth
  row). \label{fig:2drecons}}
\end{figure}

We also study the uncertainty using the ensemble method from
Sec.~\ref{sec:ensemble}. Results for different regularizations are shown in
Figure~\ref{fig:2densemble}. We observe that the Cauchy-Gaussian
neural network outperforms the Gaussian neural network for edge
detection.

\begin{figure}[tb]\centering
\begin{tikzpicture}
\node at (0,2cm) (img1) {\includegraphics[scale=0.39]{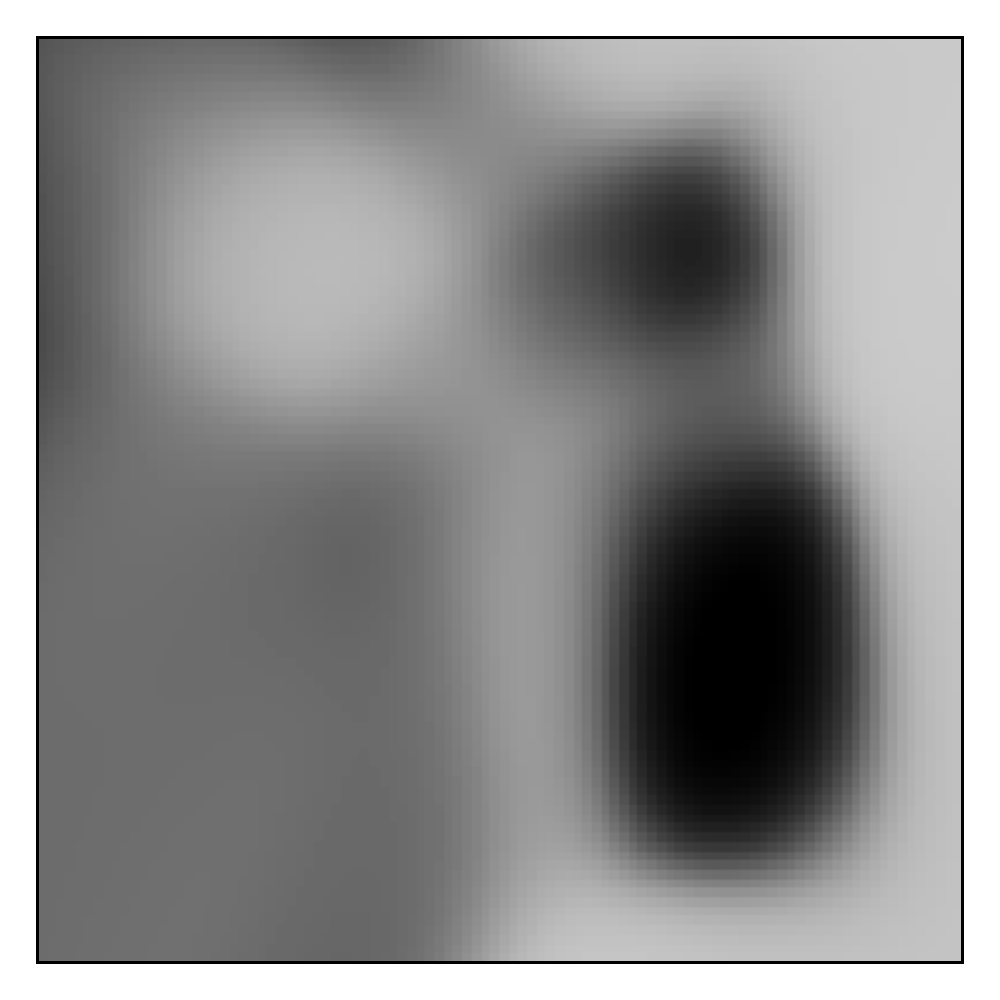}};
\node[left=0cm of img1, node distance=0cm, rotate=90, anchor=center, yshift = 0cm, font=\color{black}] {Mean};
\node[above=of img1, node distance=0cm, yshift=-1.25cm,font=\color{black}] {Gaussian};
 \node at (3.9,2cm) (img12) {\includegraphics[scale=0.39]{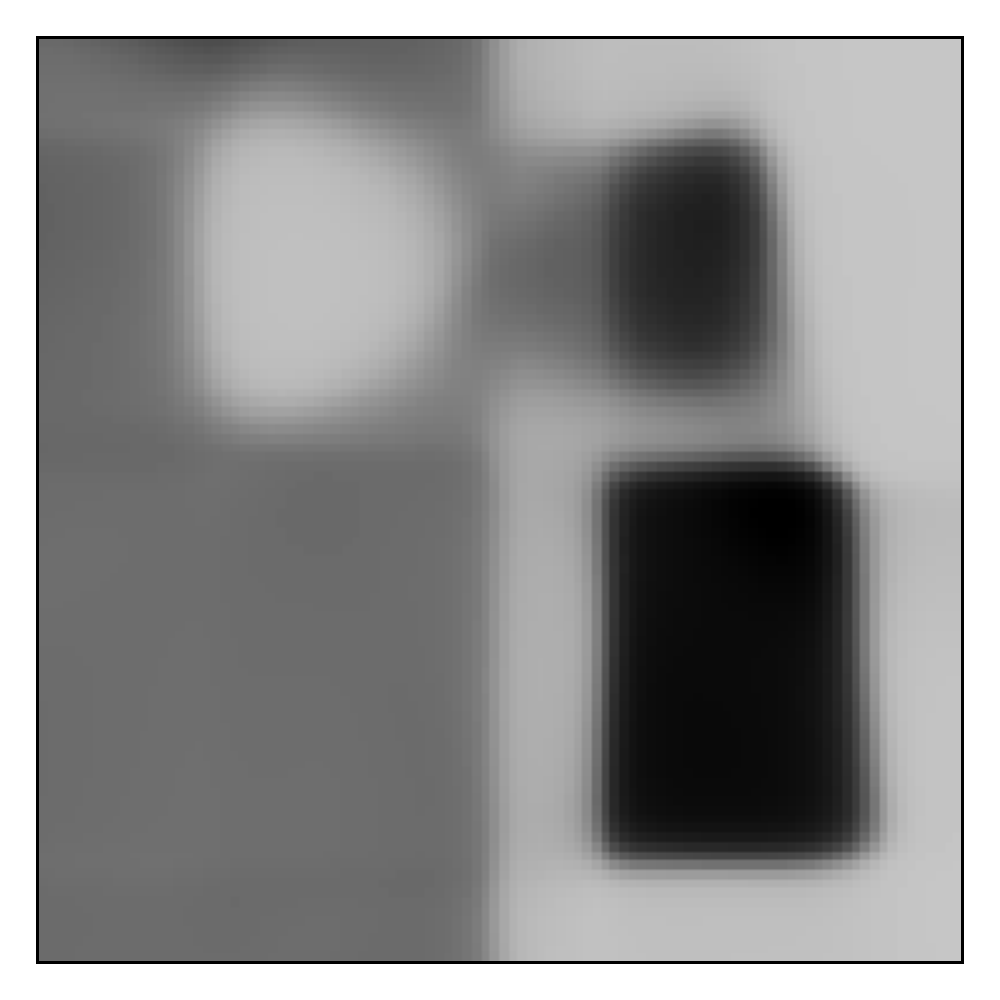}};
 \node[above=of img12, node distance=0cm, yshift=-1.25cm,font=\color{black}] {Cauchy-Gaussian};
 \node at (7.8,2cm) (img13) {\includegraphics[scale=0.39]{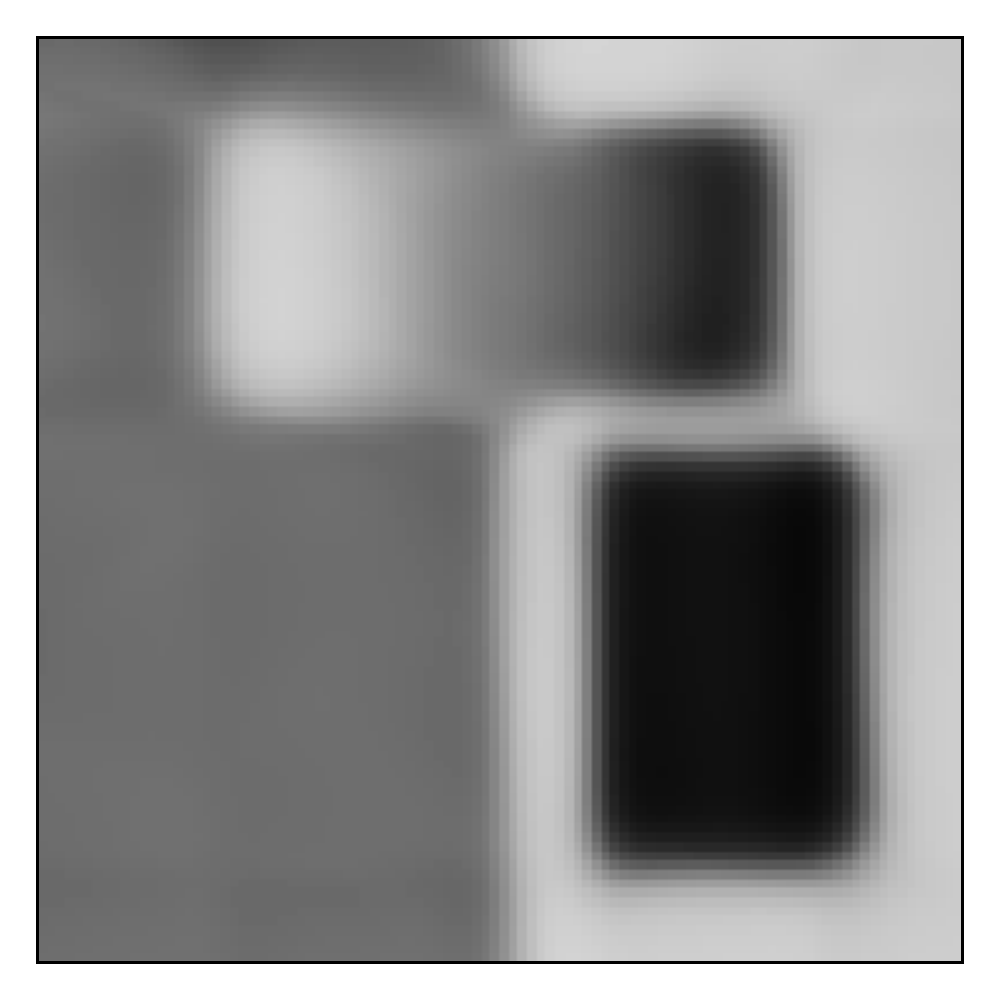}};
 \node[above=of img13, node distance=0cm, yshift=-1.25cm,font=\color{black}] {Cauchy};
\node at (10.0, 2cm) (img13) {\includegraphics[scale=0.333]{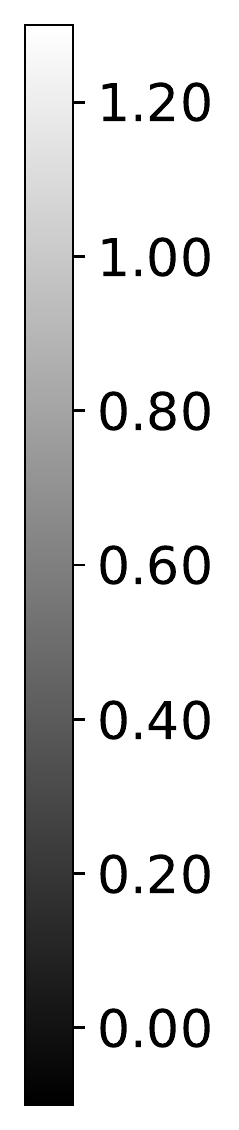}};

\node at (0,-2cm) (img1) {\includegraphics[scale=0.39]{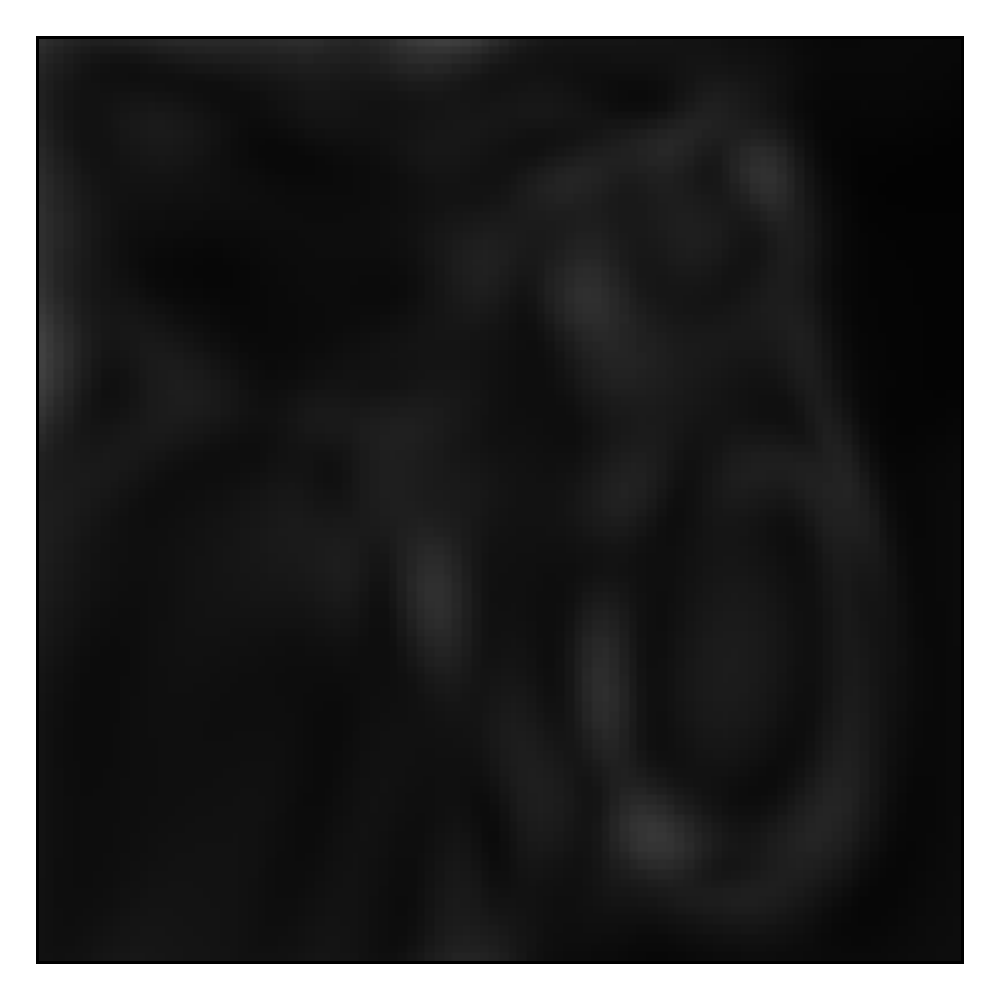}};
\node[left=0cm of img1, node distance=0cm, rotate=90, anchor=center, yshift = 0cm, font=\color{black}] {Std. dev.};
 \node at (3.9,-2cm) (img12) {\includegraphics[scale=0.39]{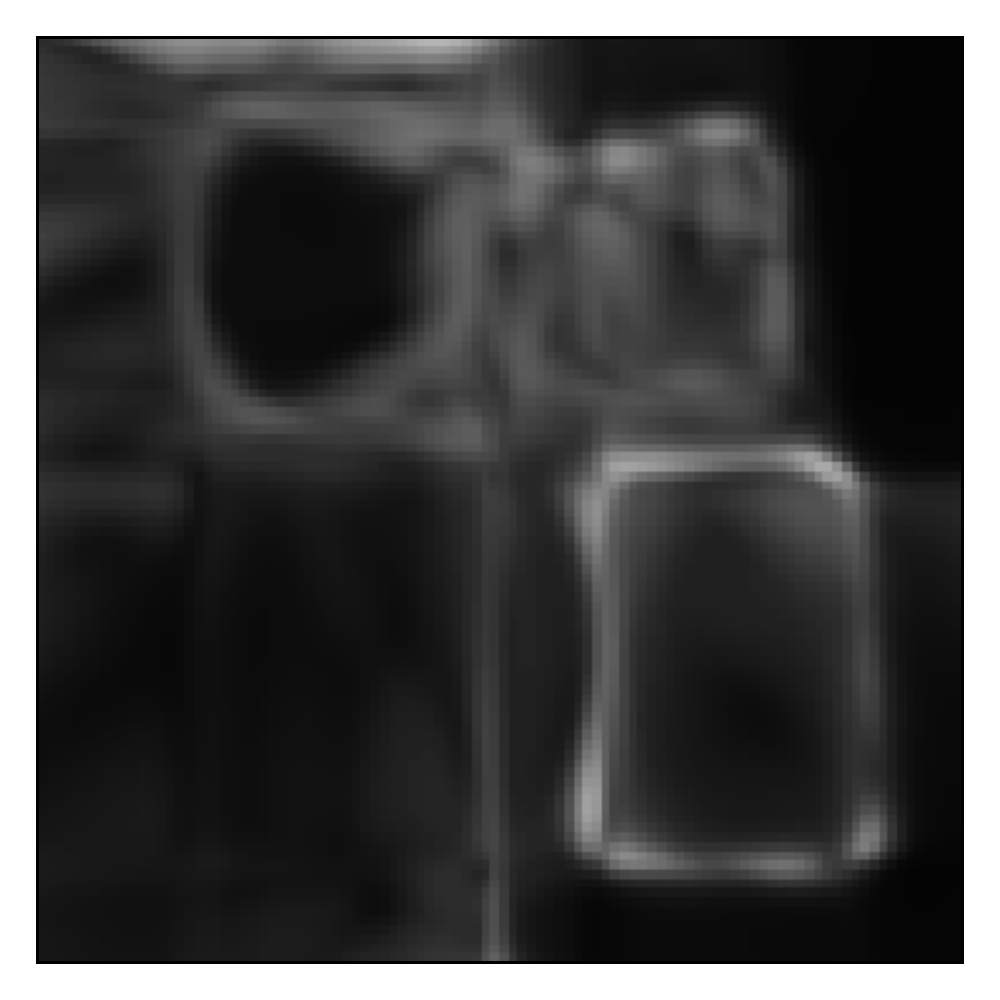}};
 \node at (7.8,-2cm) (img13) {\includegraphics[scale=0.39]{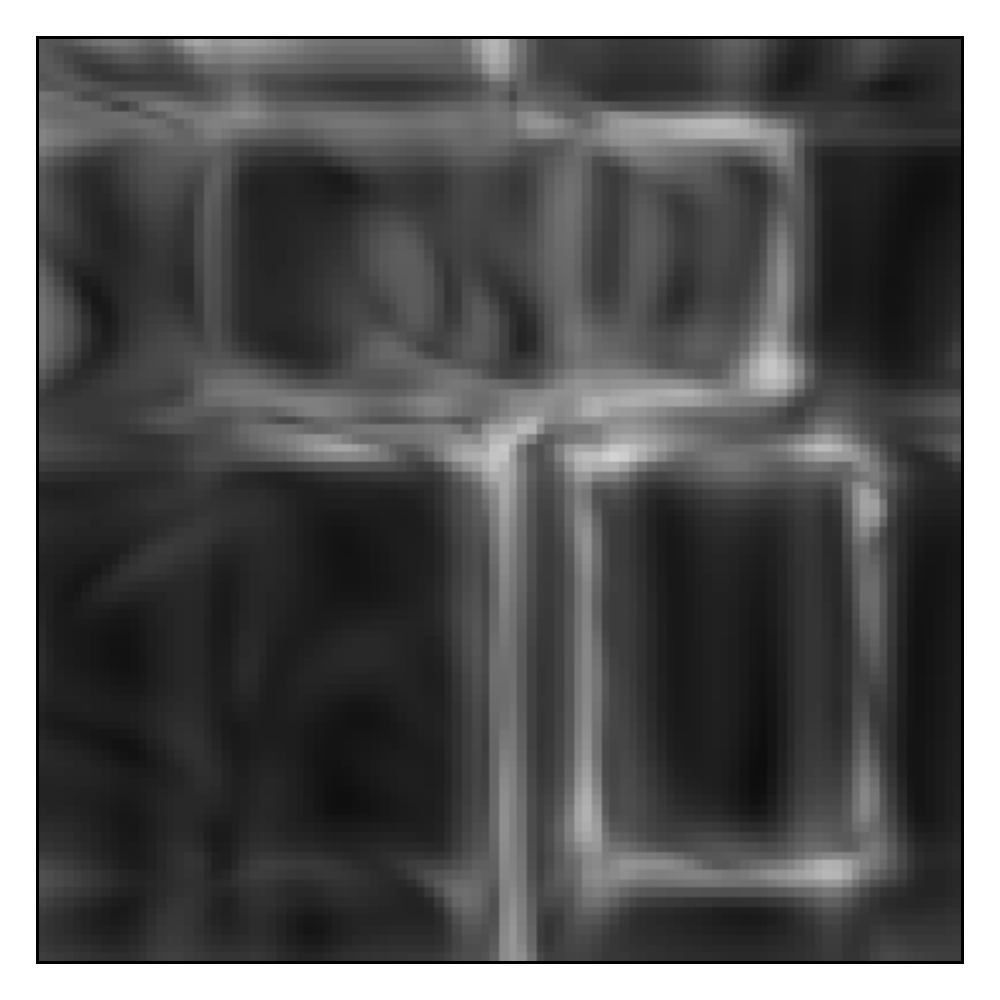}};
\node at (10.0, -2cm) (img13) {\includegraphics[scale=0.333]{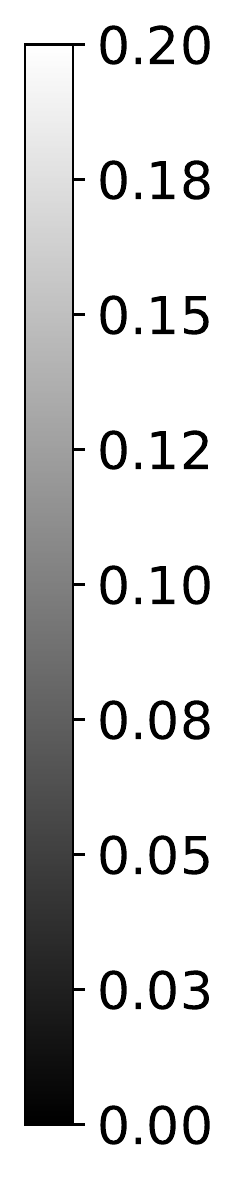}};
\end{tikzpicture}
\caption{Results obtained with ensemble method for
  Problem~\ref{ex2:deblurring}. Shown are the ensemble means (top row)
  and standard deviation (bottom row) obtained with Gaussian weights
  (left), Cauchy-Gaussian weights (middle), and fully Cauchy weights
  (right). \label{fig:2densemble}}
\end{figure}

An alternative uncertainty quantification approximation method is the
last-layer Gaussian regression based on the trained neural network
(Sec.~\ref{sec:regression}). The mean and standard deviation obtained by this approach from a fully Gaussian and a Cauchy-Gaussian neural network are shown in
Figure~\ref{fig:2dgpresult}. Since the mean of the
Gaussian regression is identical to the reconstruction we exploit, we
only show the standard deviation of each neural network with different
regularizations.

\begin{figure}[tb]\centering
\begin{tikzpicture}

\node at (0,-2cm) (img1) {\includegraphics[scale=0.5]{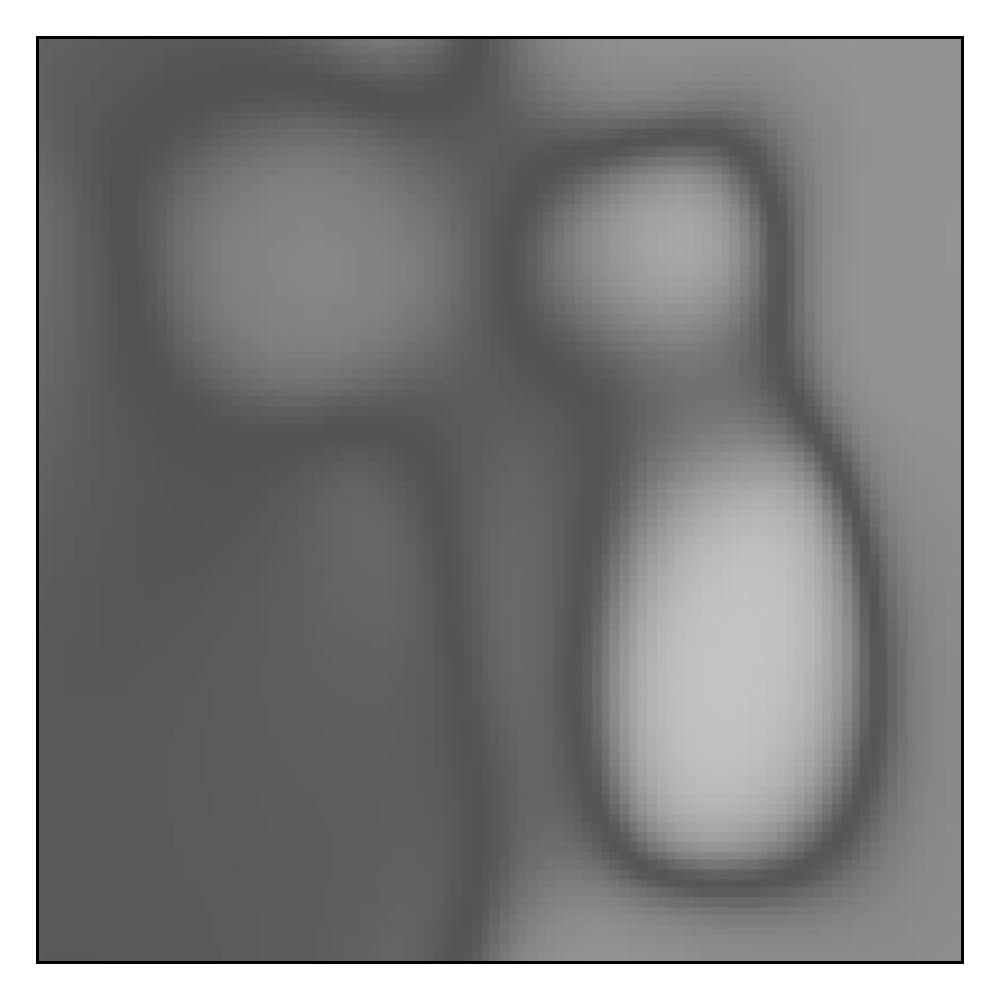}};
\node[above=of img1, node distance=0cm, yshift=-1.25cm,font=\color{black}] {Gaussian};
\node[left=.15cm of img1, node distance=0cm, rotate=90, anchor=center, yshift = 0cm, font=\color{black}] {Std. dev.};
 \node at (5.3,-2cm) (img12) {\includegraphics[scale=0.5]{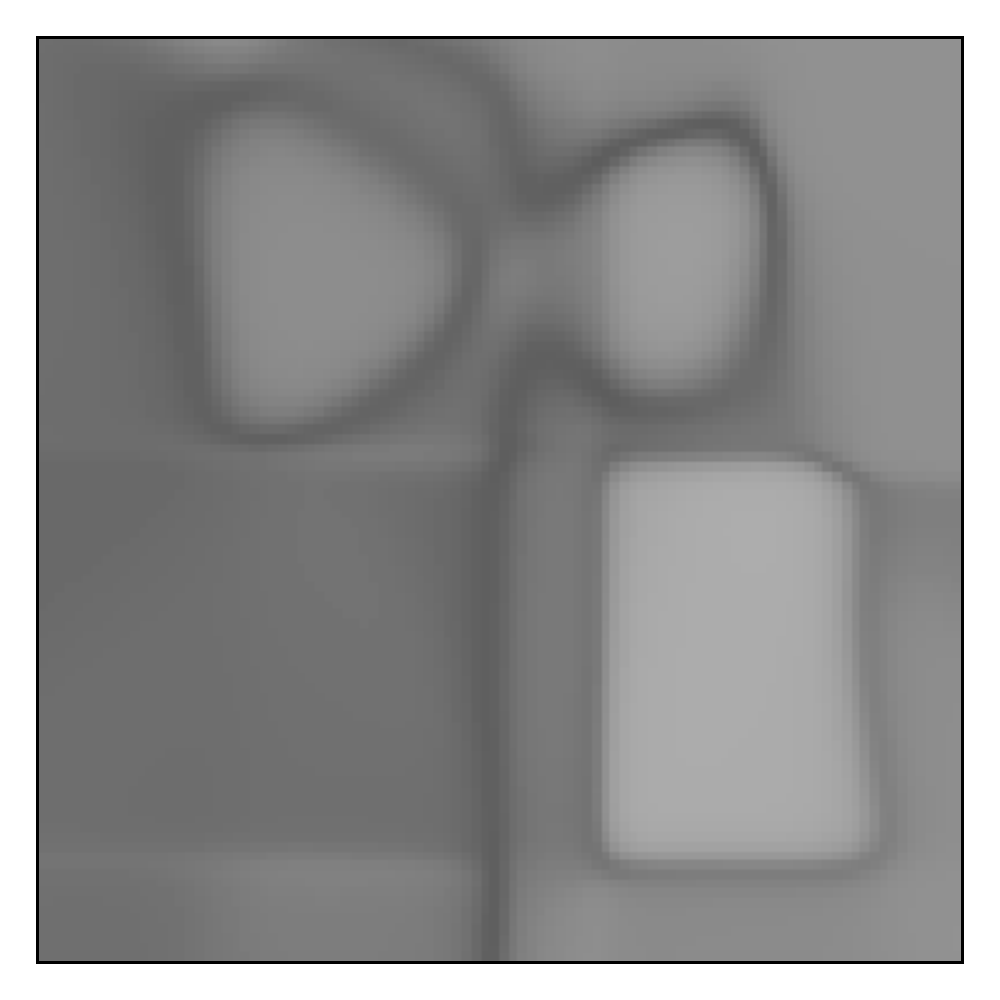}};
 \node[above=of img12, node distance=0cm, yshift=-1.25cm,font=\color{black}] {Cauchy-Gaussian};
\node at (8.4, -2cm) (img13) {\includegraphics[scale=0.43]{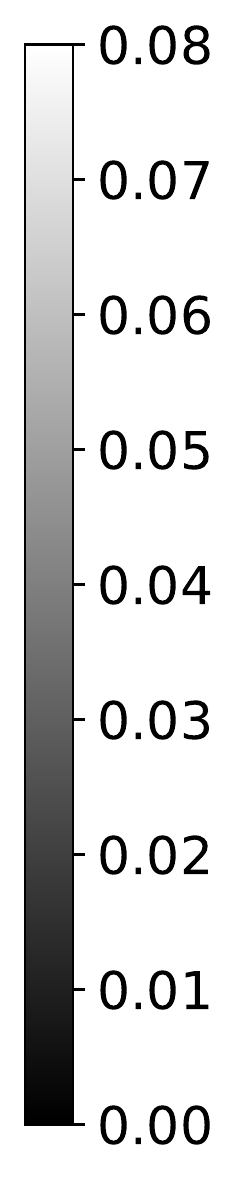}};
\end{tikzpicture}
\caption{Results for last-layer Gaussian regression,
  Problem~\ref{ex2:deblurring}.  Shown are the standard deviations
  with last-layer base functions from a pre-trained network. The
  figures are for networks with Gaussian (left) and Cauchy-Gaussian
  (right) weights. \label{fig:2dgpresult}}
\end{figure}

\subsubsection{MCMC sampling}

We also test the MCMC sampling method on this two-dimensional
deconvolution example using $10^6$ samples of the
adaptive pCN method. A numerical comparison of the uncertainty with
different priors is shown in Figure~\ref{fig:2dpcn}.
We observe that the mean of the chain with a Cauchy neural network
prior recovers the edges of the parameter function better while the
Gaussian neural network prior results in smoother
reconstructions. From the plots of the standard deviation, we also
note that the result obtained with a Cauchy neural network prior
possesses a larger variation around the edges compared to other
regions, which implies a better edge detection. We also note that the
neural network priors with fully Cauchy weights can learn the
linearly-changing block on the top better than the Cauchy-Gaussian
network prior.

\begin{figure}[tb]\centering
\begin{tikzpicture}
\node at (0,2cm) (img1) {\includegraphics[scale=0.39]{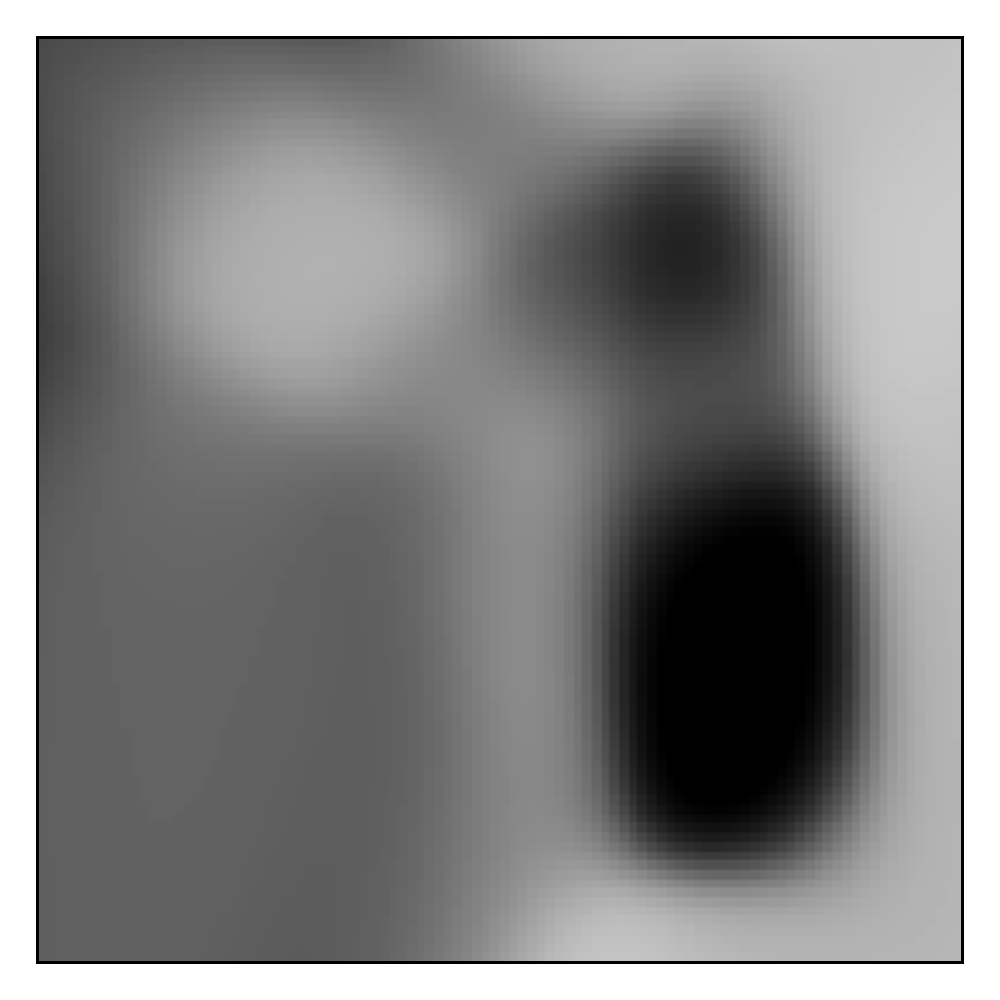}};
\node[left=0cm of img1, node distance=0cm, rotate=90, anchor=center, yshift = 0cm, font=\color{black}] {Mean};
\node[above=of img1, node distance=0cm, yshift=-1.25cm,font=\color{black}] {Gaussian};
 \node at (3.9,2cm) (img12) {\includegraphics[scale=0.39]{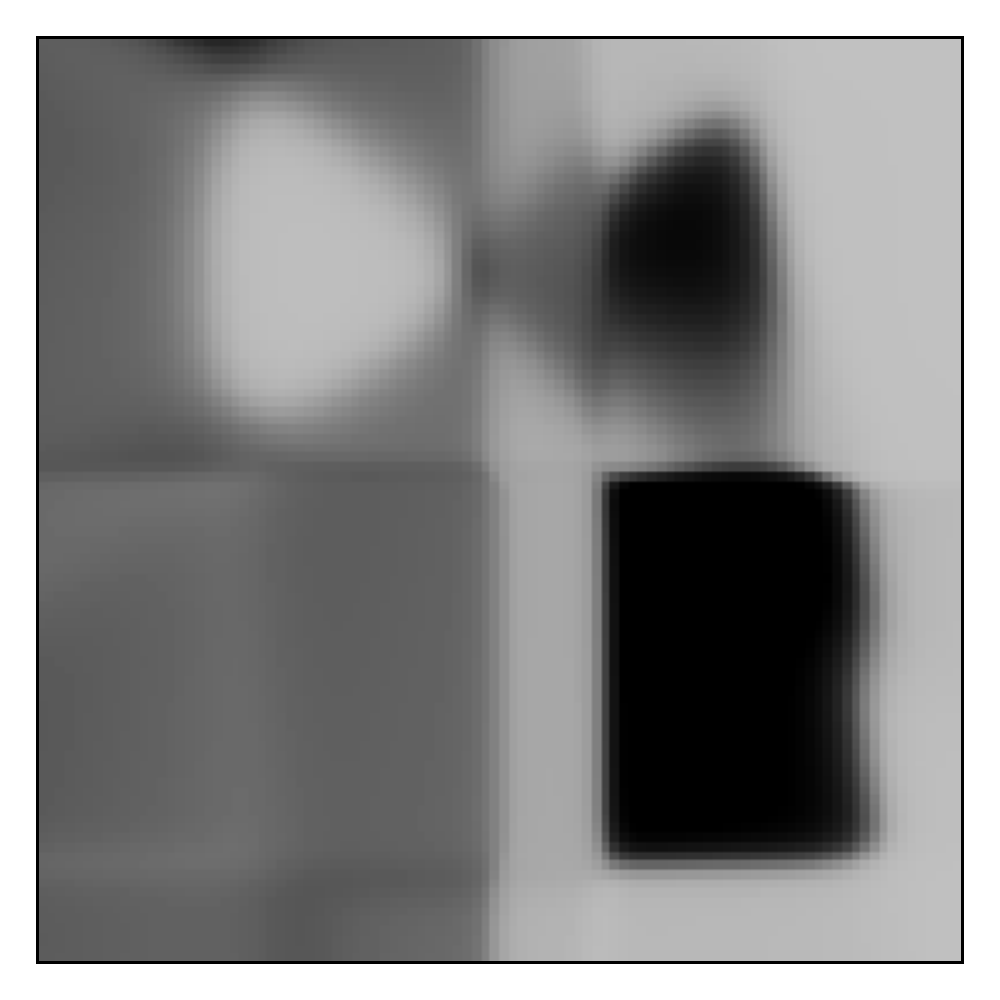}};
 \node[above=of img12, node distance=0cm, yshift=-1.25cm,font=\color{black}] {Cauchy-Gaussian};
 \node at (7.8,2cm) (img13) {\includegraphics[scale=0.39]{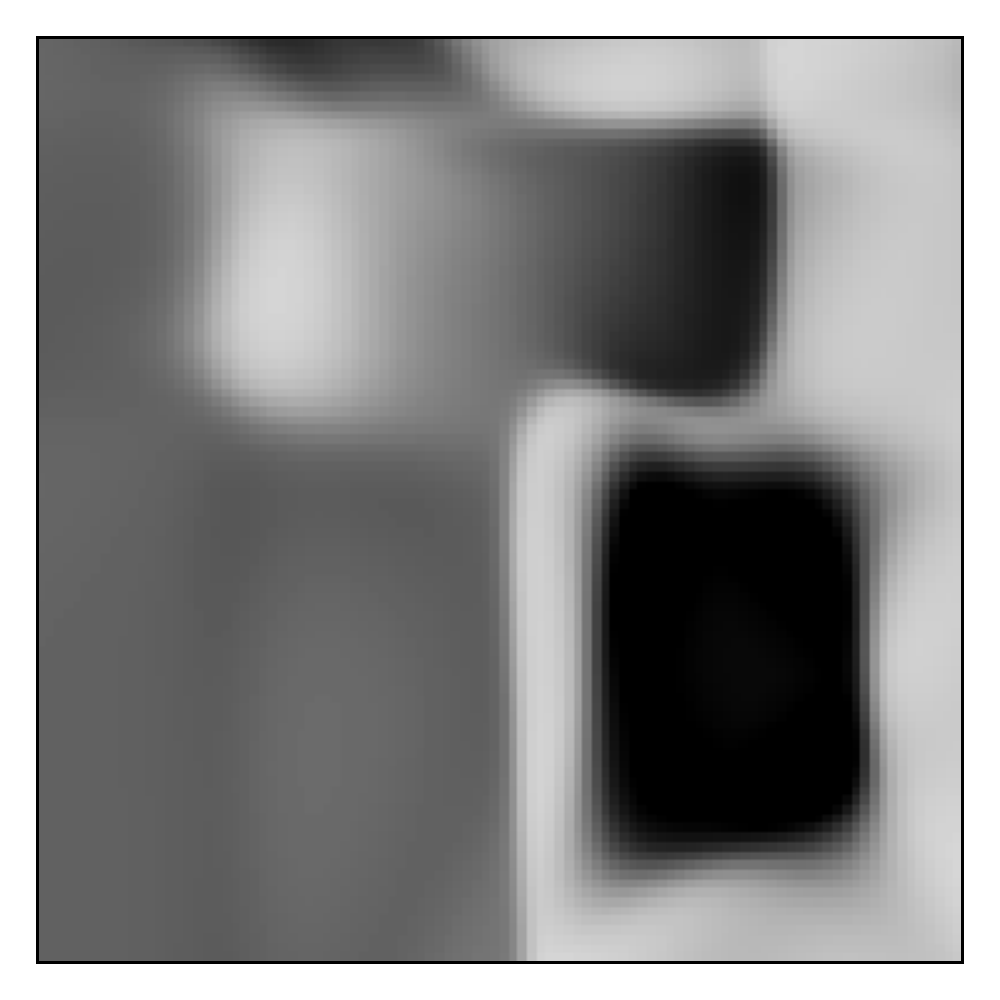}};
 \node[above=of img13, node distance=0cm, yshift=-1.25cm,font=\color{black}] {Cauchy};
\node at (10.0, 2cm) (img13) {\includegraphics[scale=0.333]{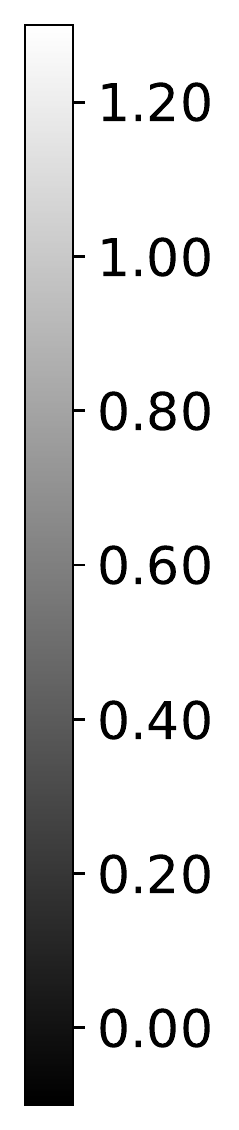}};

\node at (0,-2cm) (img1) {\includegraphics[scale=0.4]{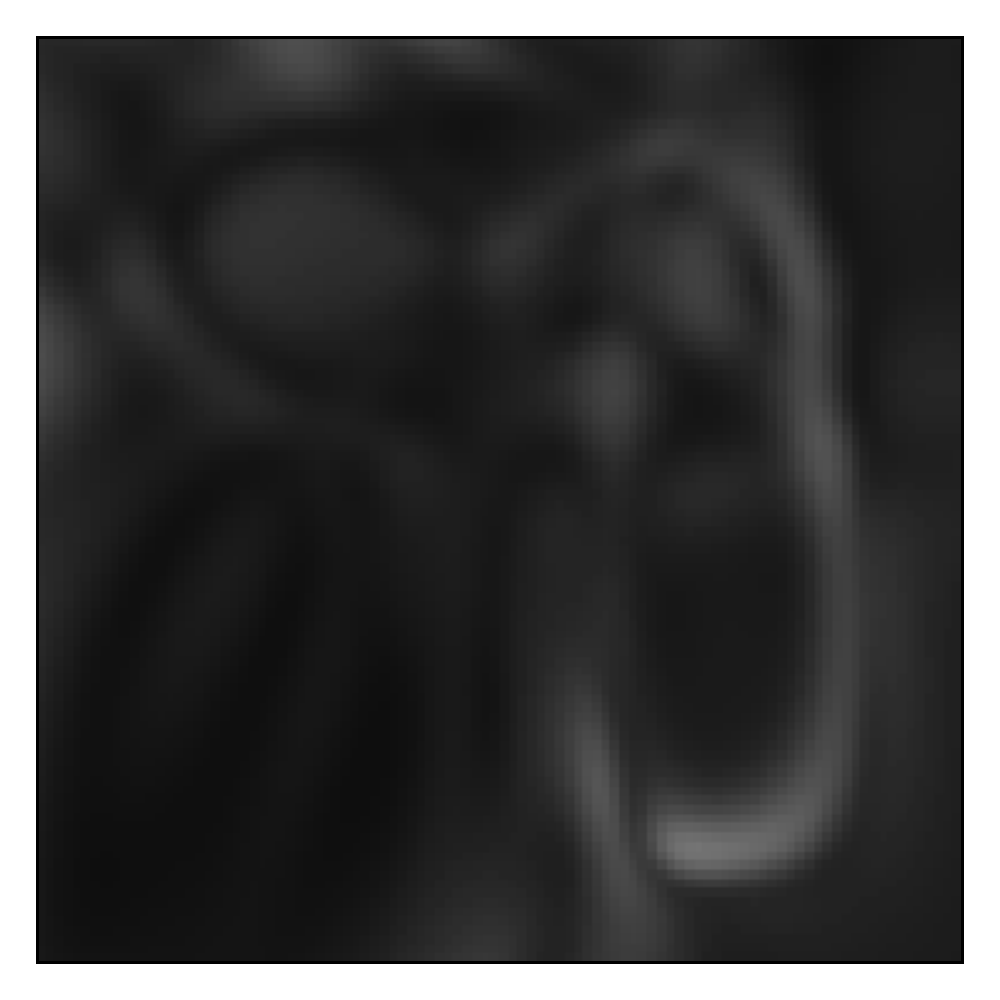}};
\node[left=0cm of img1, node distance=0cm, rotate=90, anchor=center, yshift = 0cm, font=\color{black}] {Std. dev.};
 \node at (3.9,-2cm) (img12) {\includegraphics[scale=0.39]{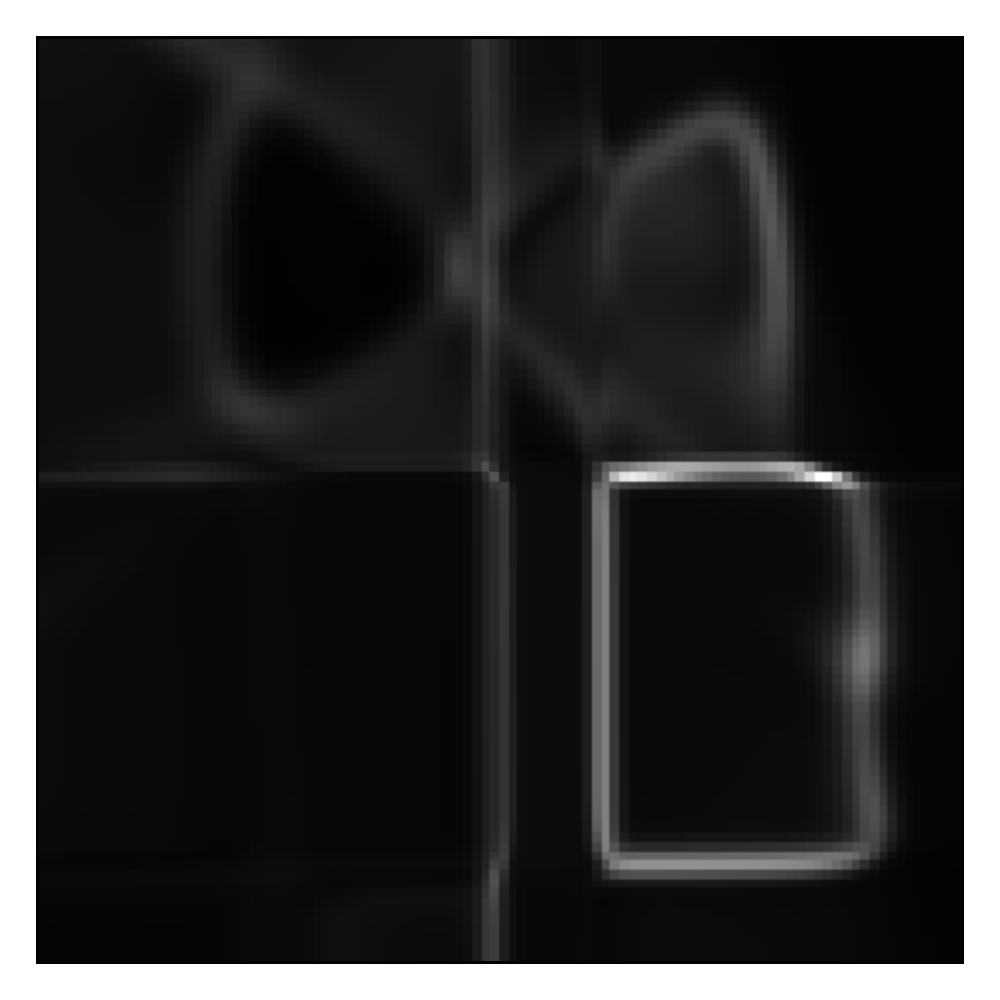}};
 \node at (7.8,-2cm) (img13) {\includegraphics[scale=0.39]{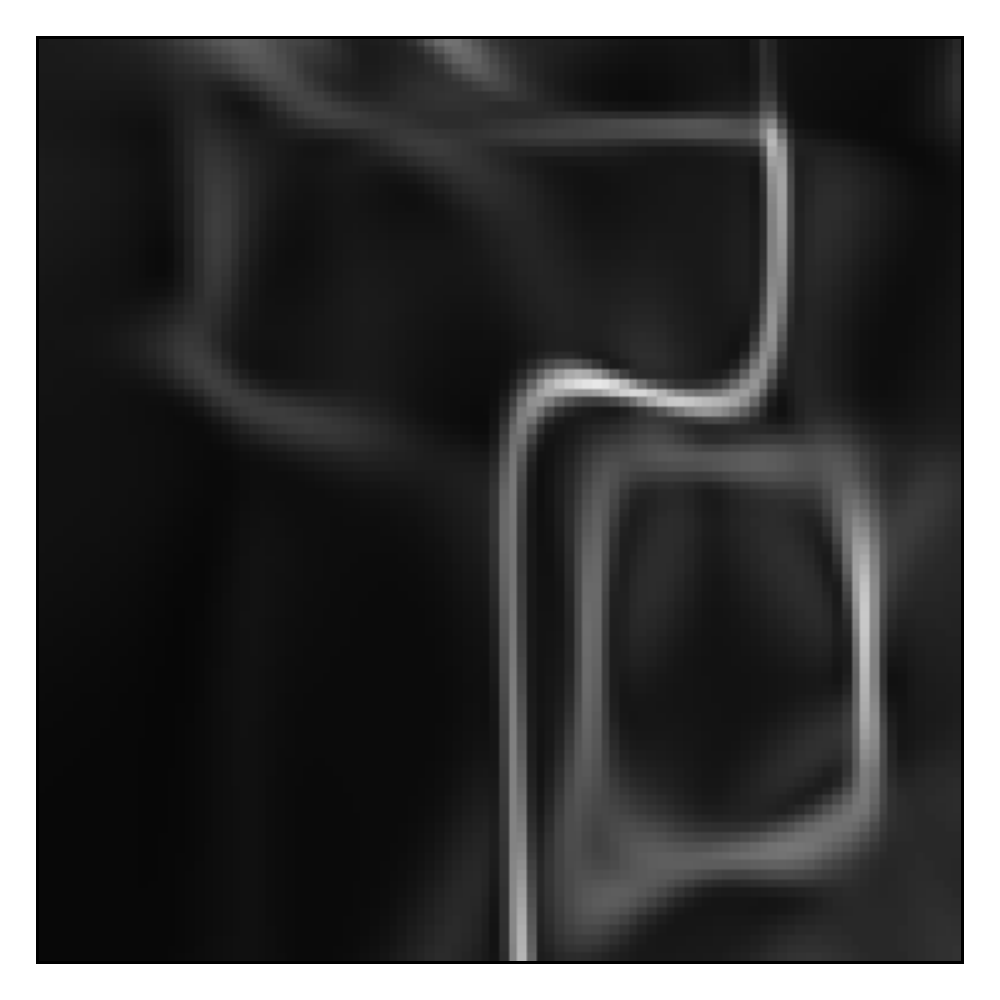}};
\node at (10.0, -2cm) (img13) {\includegraphics[scale=0.333]{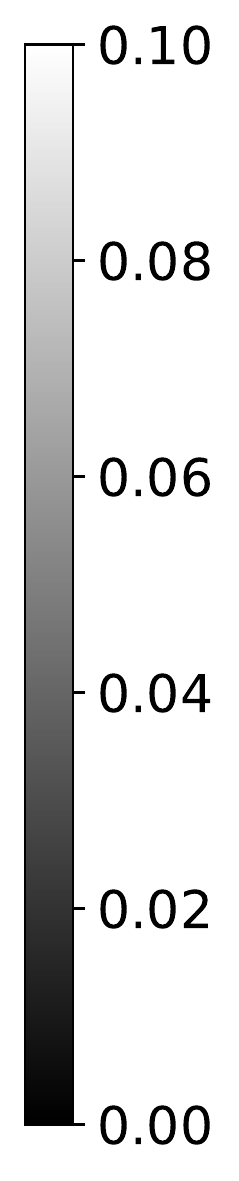}};
\end{tikzpicture}
\caption{MCMC sampling results for Problem~\ref{ex2:deblurring}. Shown
  are the means (top row) and standard deviations (bottom row) for
  neural network priors with Gaussian weights (left), Cauchy-Gaussian
  weights (middle), and fully Cauchy weights
  (right). \label{fig:2dpcn}}
\end{figure}

\section{Summary and conclusions}\label{sec:conclu}

The main target of this work is to study neural
network priors for infinite-dimensional Bayesian inverse
problems. Samples of these priors are outputs of neural networks with random weights from different
distributions. Theoretically, we study finite-width neural networks with
$\alpha$-stable weights, and show that the derivative of the output at
a fixed point is heavy-tailed for Cauchy weights. We also present a
numerical comparison of Cauchy and Gaussian neural networks
priors in Bayesian inverse problems. We conclude that:
(1) Neural network priors are able to capture discontinuities and they
are discretization-independent by design. Conditioning these priors
with observations can require many hundreds of evaluations of the
network and the forward map to compute point approximations of the
posterior.  Attempting sampling of the posterior distribution requires
tens of thousands of network and forward map evaluations.
(2) Not unexpectedly, the optimization landscapes for optimization
with Bayesian neural networks have multiple local minima even if the
forward map is linear as in the deblurring examples we study.
(3) We observe that upon optimization, most weights of Cauchy neural
networks are close to zero at (local) minimizers. Optimized Cauchy
networks thus have substantial sparsity, which is a consequence of the
regularization resulting from the Cauchy density.
(4) While we only focused on fully connected networks, one could use
block-diagonal weight matrices, thus requiring substantially fewer
weights. We found in numerical experiments (which are not shown here)
that for Cauchy weights, the resulting distributions for the output do
not differ much between block diagonal and fully connected networks.

\appendix
\section{Proof of Lemma~\ref{lemma:heavy_product_sum}}
For convenience, we present a compact proof for this lemma, starting
with the first part (i). The sequence
\begin{align*}
  a_N(t) := \int_{-N}^N e^{t|x|} f_X(x)\,\dee x,
\end{align*}
is increasing with respect to $N$ and $t$ and converges to infinity as
$N \to \infty$ for any given $t > 0$ by definition of heavy-tailed
distributions. Since $Y$ is symmetric, one can find an
interval $[a, b]$, $0 < a < b$,
in which $f_Y(y) \geq \varepsilon > 0$. The density function of
$Z = X Y$ is given by \cite[p. 141]{rohatgi1976introduction}
\begin{align*}
	f_Z(z) = \int_{-\infty}^{\infty} f_X \left(\frac z y \right) f_Y(y) \frac{1}{|y|} \,\dee y.
	\end{align*}
	Thus for any $t > 0$ we have, using a change of variables and
        Fubini's theorem,
	\begin{align*}
		\int_{-N}^{N} e^{t|z|} f_{Z}(z) \,\dee z &= \int_{-N}^{N} e^{t|z|} \int_{-\infty}^{\infty} f_X \left( \frac z y \right) f_Y(y) \frac{1}{|y|} \,\dee y\,\dee z \\
		&= \int_{-\infty}^{\infty} f_Y(y)\frac{y}{|y|} \left( \int_{-N/y}^{N/y} e^{t|yx|} f_X(x)\,\dee x\right)\,\dee y \\
		&= 2 \int_{0}^{\infty} f_Y(y) \left( \int_{-N/y}^{N/y} e^{t|yx|} f_X(x)\,\dee x\right) \,\dee y \\
		&= 2 \int_{0}^{\infty} f_Y(y) a_{N/y} (ty) \,\dee y 
		\geq 2 \varepsilon a_{N/b}(ta)(b - a),
	\end{align*}
	which approaches infinity as $N \to \infty$, which gives the
        result.

        For part (ii) of the lemma, since $Y$ is heavy-tailed, for any $t > 0$ at least one of the integrals $\int_{-\infty}^{0} e^{-ty} f_Y(y)\,\dee
y$ or $\int_{0}^{\infty} e^{ty} f_Y(y)\,\dee y$ is infinite. Without
loss of generality, we assume $\int_{0}^{\infty} e^{ty} f_Y(y)\,\dee y
= \infty$. Then for any $t > 0$ and $N > 0$, by the definition of $Z =
X + Y$, we have
\begin{align*}
  \int_{-N}^{N} e^{t|z|} f_{Z}(z)\,\dee z &= \int_{-N}^{N} e^{t|z|}
  \int_{-\infty}^{\infty} f_X(z - y) f_Y(y) \,\dee y \,\dee z \hfill \\
  &\hspace{-10ex}= \int_{0}^{N} e^{tz} \int_{-\infty}^{\infty} f_X(z - y) f_Y(y) \,\dee y \,\dee z 
  + \int_0^{N} e^{tz} \int_{-\infty}^{\infty} f_X(-z - y) f_Y(y) \,\dee y \,\dee z \\
  &\hspace{-10ex}\geq \int_{-\infty}^{\infty} e^{ty} f_Y(y)
  \int_{0}^{N} \left[e^{t(z - y)} f_X(z - y) + e^{t(-z - y)} f_X(-z - y)  \right] \,\dee z \,\dee y\\
  &\hspace{-10ex}\geq \int_{0}^{\infty} e^{ty} f_Y(y) \,\dee y \int_{-N - y}^{N-y} e^{tx} f_X(x) \,\dee x,
\end{align*}
where we have used Fubini's theorem and change of variables. Letting
$N\to\infty$, for any $t > 0$ the final expression goes to infinity,
which implies the claim and thus ends the proof.

\section*{Acknowledgments} CL would like to acknowledge helpful
discussions with Yunan Yang. MD would like to thank Neil Chada and Alex Thiery for helpful discussions.


\bibliographystyle{AIMS}
\bibliography{ref}

\providecommand{\href}[2]{#2}
\providecommand{\arxiv}[1]{\href{http://arxiv.org/abs/#1}{arXiv:#1}}
\providecommand{\url}[1]{\texttt{#1}}
\providecommand{\urlprefix}{URL }
\begin{thebibliography}{10}

\bibitem{ardizzone2019analyzing}
\newblock L.~Ardizzone, J.~Kruse, C.~Rother and U.~Köthe,
\newblock Analyzing inverse problems with invertible neural networks,
\newblock in \emph{International Conference on Learning Representations}, 2019,
\newblock \urlprefix\url{https://openreview.net/forum?id=rJed6j0cKX}.

\bibitem{Asim2020}
\newblock M.~Asim, M.~Daniels, O.~Leong, A.~Ahmed and P.~Hand,
\newblock Invertible generative models for inverse problems: mitigating
  representation error and dataset bias,
\newblock in \emph{Proceedings of the 37th International Conference on Machine
  Learning} (eds. H.~D. III and A.~Singh), vol. 119 of Proceedings of Machine
  Learning Research,
\newblock PMLR, 2020,
\newblock 399--409.

\bibitem{beskos2017geometric}
\newblock A.~Beskos, M.~Girolami, S.~Lan, P.~E. Farrell and A.~M. Stuart,
\newblock Geometric {MCMC} for infinite-dimensional inverse problems,
\newblock \emph{Journal of Computational Physics}, \textbf{335} (2017),
  327--351.

\bibitem{Borak2005}
\newblock S.~Borak, W.~H{\"a}rdle and R.~Weron,
\newblock \emph{Stable Distributions}, 21--44,
\newblock Springer Berlin Heidelberg, Berlin, Heidelberg, 2005.

\bibitem{BuiGeorg2013}
\newblock T.~Bui-Thanh, O.~Ghattas, J.~Martin and G.~Stadler,
\newblock A computational framework for infinite-dimensional {B}ayesian inverse
  problems part {I}: {T}he linearized case, with application to global seismic
  inversion,
\newblock \emph{SIAM Journal on Scientific Computing}, \textbf{35} (2013),
  A2494--A2523.

\bibitem{Helmut2017Optimal}
\newblock H.~Bölcskei, P.~Grohs, G.~Kutyniok and P.~Petersen,
\newblock Optimal approximation with sparsely connected deep neural networks,
\newblock \emph{SIAM Journal on Mathematics of Data Science}, \textbf{1}.

\bibitem{chada2019posterior}
\newblock N.~K. Chada, S.~Lasanen and L.~Roininen,
\newblock Posterior convergence analysis of $\alpha$-stable sheets, 2019,
\newblock ArXiv:1907.03086.

\bibitem{chada2021cauchy}
\newblock N.~K. Chada, L.~Roininen and J.~Suuronen,
\newblock Cauchy markov random field priors for {B}ayesian inversion, 2021,
\newblock ArXiv:2105.12488.

\bibitem{Chambolle10anintroduction}
\newblock A.~Chambolle, M.~Novaga, D.~Cremers and T.~Pock,
\newblock An introduction to total variation for image analysis,
\newblock in \emph{Theoretical Foundations and Numerical Methods for Sparse
  Recovery}, 2010.

\bibitem{chen2019dimensionrobust}
\newblock V.~Chen, M.~M. Dunlop, O.~Papaspiliopoulos and A.~M. Stuart,
\newblock Dimension-robust {MCMC} in {B}ayesian inverse problems, 2019,
\newblock ArXiv:1803.03344.

\bibitem{cotter2010}
\newblock S.~L. Cotter, M.~Dashti and A.~M. Stuart,
\newblock Approximation of {B}ayesian inverse problems for {PDEs},
\newblock \emph{SIAM Journal on Numerical Analysis}, \textbf{48} (2010),
  322--345.

\bibitem{cotter2013mcmc}
\newblock S.~L. Cotter, G.~O. Roberts, A.~M. Stuart and D.~White,
\newblock {MCMC} methods for functions: modifying old algorithms to make them
  faster,
\newblock \emph{Statistical Science}, \textbf{28} (2013), 424--446.

\bibitem{dashti2011besov}
\newblock M.~Dashti, S.~Harris and A.~Stuart,
\newblock Besov priors for {B}ayesian inverse problems,
\newblock \emph{Inverse Problems \& Imaging}, \textbf{6} (2012), 183--200.

\bibitem{matthews2018gaussian}
\newblock A.~G. de~G.~Matthews, J.~Hron, M.~Rowland, R.~E. Turner and
  Z.~Ghahramani,
\newblock Gaussian process behaviour in wide deep neural networks,
\newblock in \emph{International Conference on Learning Representations}, 2018,
\newblock \urlprefix\url{https://openreview.net/forum?id=H1-nGgWC-}.

\bibitem{weiss06platt}
\newblock R.~Der and D.~Lee,
\newblock Beyond {G}aussian processes: {O}n the distributions of infinite
  networks,
\newblock in \emph{Advances in Neural Information Processing Systems} (eds.
  Y.~Weiss, B.~Sch\"{o}lkopf and J.~C. Platt),
\newblock MIT Press, 2006,
\newblock 275--282,
\newblock
  \urlprefix\url{http://papers.nips.cc/paper/2869-beyond-gaussian-processes-on-the-distributions-of-infinite-networks.pdf}.

\bibitem{franklin1970}
\newblock J.~N. Franklin,
\newblock Well-posed stochastic extensions of ill-posed linear problems,
\newblock \emph{Journal of Mathematical Analysis and Applications}, \textbf{31}
  (1970), 682--716.

\bibitem{gnedenko1954limit}
\newblock B.~V. Gnedenko and A.~N. Kolmogorov,
\newblock \emph{Limit distributions for sums of independent random variables},
\newblock Addison-Wesley Mathematics Series, Addison-Wesley, Cambridge, MA,
  1954.

\bibitem{gonzalez2017}
\newblock G.~González, V.~Kolehmainen and A.~Seppänen,
\newblock Isotropic and anisotropic total variation regularization in
  electrical impedance tomography,
\newblock \emph{Computers \& Mathematics with Applications}, \textbf{74}
  (2017), 564--576.

\bibitem{hairer2014spectral}
\newblock M.~Hairer, A.~M. Stuart and S.~J. Vollmer,
\newblock Spectral gaps for a {M}etropolis--{H}astings algorithm in infinite
  dimensions,
\newblock \emph{The Annals of Applied Probability}, \textbf{24} (2014),
  2455--2490.

\bibitem{ImmerKB21}
\newblock A.~Immer, M.~Korzepa and M.~Bauer,
\newblock Improving predictions of {B}ayesian neural nets via local
  linearization,
\newblock in \emph{AISTATS}, 2021,
\newblock 703--711,
\newblock \urlprefix\url{http://proceedings.mlr.press/v130/immer21a.html}.

\bibitem{Kaipio2005}
\newblock J.~Kaipio and E.~Somersalo,
\newblock \emph{Statistical and Computational Inverse Problems},
\newblock Springer, Dordrecht, 2005,
\newblock \urlprefix\url{https://cds.cern.ch/record/1338003}.

\bibitem{kaipio2007}
\newblock J.~Kaipio and E.~Somersalo,
\newblock Statistical inverse problems: Discretization, model reduction and
  inverse crimes,
\newblock \emph{Journal of Computational and Applied Mathematics}, \textbf{198}
  (2007), 493–504.

\bibitem{Laksh2017}
\newblock B.~Lakshminarayanan, A.~Pritzel and C.~Blundell,
\newblock Simple and scalable predictive uncertainty estimation using deep
  ensembles,
\newblock in \emph{Proceedings of the 31st International Conference on Neural
  Information Processing Systems},
\newblock NIPS'17, 2017,
\newblock 6405–6416.

\bibitem{matti2009samuli}
\newblock M.~Lassas, E.~Saksman and S.~Siltanen,
\newblock Discretization-invariant {B}ayesian inversion and {B}esov space
  priors,
\newblock \emph{Inverse Problems \& Imaging}, \textbf{3} (2009), 87--122.

\bibitem{Lassas_2004}
\newblock M.~Lassas and S.~Siltanen,
\newblock Can one use total variation prior for edge-preserving {B}ayesian
  inversion?,
\newblock \emph{Inverse Problems}, \textbf{20} (2004), 1537--1563.

\bibitem{markkanen2016cauchy}
\newblock M.~Markkanen, L.~Roininen, J.~M.~J. Huttunen and S.~Lasanen,
\newblock Cauchy difference priors for edge-preserving {B}ayesian inversion,
\newblock \emph{Journal of Inverse and Ill-posed Problems}, \textbf{27} (2019),
  225--240.

\bibitem{neal1995BayesianLF}
\newblock R.~M. Neal,
\newblock \emph{Priors for Infinite Networks}, 29--53,
\newblock Springer New York, New York, NY, 1996.

\bibitem{NocedalWright06}
\newblock J.~Nocedal and S.~J. Wright,
\newblock \emph{Numerical Optimization},
\newblock 2nd edition,
\newblock Springer Verlag, Berlin, Heidelberg, New York, 2006.

\bibitem{rahaman2020uncertainty}
\newblock R.~Rahaman and A.~H. Thiery,
\newblock Uncertainty quantification and deep ensembles, 2020,
\newblock ArXiv:2007.08792.

\bibitem{RasWill2005gaussian}
\newblock C.~E. Rasmussen and C.~K.~I. Williams,
\newblock \emph{Gaussian Processes for Machine Learning (Adaptive Computation
  and Machine Learning)},
\newblock The MIT Press, 2005.

\bibitem{rohatgi1976introduction}
\newblock V.~K. Rohatgi,
\newblock \emph{An Introduction to Probability and Statistics},
\newblock Wiley, New York, 1976.

\bibitem{schillings20philipp}
\newblock C.~Schillings, B.~Sprungk and P.~Wacker,
\newblock On the convergence of the {L}aplace approximation and
  noise-level-robustness of {L}aplace-based {M}onte {C}arlo methods for
  {B}ayesian inverse problems,
\newblock \emph{Numerische Mathematik}, \textbf{145} (2020), 915--971.

\bibitem{stuart_2010}
\newblock A.~M. Stuart,
\newblock Inverse problems: {A} {B}ayesian perspective,
\newblock \emph{Acta Numerica}, \textbf{19} (2010), 451–559.

\bibitem{sullivan2017}
\newblock T.~J. Sullivan,
\newblock Well-posed {B}ayesian inverse problems and heavy-tailed stable
  quasi-{B}anach space priors,
\newblock \emph{Inverse Problems \& Imaging}, \textbf{11} (2017), 857--874.

\bibitem{Williams96computing}
\newblock C.~K.~I. Williams,
\newblock Computing with infinite networks,
\newblock in \emph{Proceedings of the 9th International Conference on Neural
  Information Processing Systems},
\newblock NIPS'96, MIT Press, Cambridge, MA, USA, 1996,
\newblock 295–301.

\bibitem{zhou2002}
\newblock Z.-H. Zhou, J.~Wu and W.~Tang,
\newblock Ensembling neural networks: Many could be better than all,
\newblock \emph{Artificial Intelligence}, \textbf{137} (2002), 239--263.

\end{thebibliography}


\end{document}